\documentclass[11pt]{article}

\usepackage[utf8]{inputenc}  
\usepackage[T1]{fontenc}
\usepackage{lmodern}
\usepackage[english]{babel}

\usepackage{geometry}
\usepackage{amsmath, amssymb, amsfonts}
\usepackage{amsthm}           
\usepackage{bbm}
\usepackage{cancel}

\usepackage{xcolor}           
\usepackage{graphicx}
\usepackage{subfig}
\usepackage{wrapfig}
\usepackage{tikz-cd}
\usepackage{pdflscape}

\usepackage{booktabs}
\usepackage{array}
\usepackage{multirow}
\usepackage{makecell}

\usepackage[inline]{enumitem} 
\usepackage[normalem]{ulem}
\usepackage{verbatim}
\usepackage{epigraph}
\usepackage{changepage}
\usepackage{todonotes}

\usepackage[ruled,vlined]{algorithm2e}
\DontPrintSemicolon
\SetAlFnt{\small}          
\SetAlCapFnt{\small} 

\usepackage{authblk}

\usepackage{natbib}
\usepackage{url}

\usepackage{hyperref}
\hypersetup{
    colorlinks=true,
    citecolor=blue
}

\usepackage[capitalize,noabbrev]{cleveref}

\usepackage{microtype}                 
\newcommand{\reals}{\mathbb{R}}
\newcommand{\supp}{\textnormal{supp}}
\newcommand{\E}{\mathbb{E}}

\newcommand{\naturals}{\mathbb{N}}
\newcommand{\expec}[1]{\mathbb{E}\left[#1\right]}
\newcommand{\expecover}[2]{\mathbb{E}_{#1}\left[#2\right]}
\newcommand{\prob}[1]{\mathbb{P}\left[#1\right]}
\newcommand{\kl}{\mathrm{KL}}
\newcommand{\indicator}[1]{\mathbf{1}\left\{#1\right\}}

\newcommand{\indexel}{\mathrm{U}^{\mathrm{EL}}}
\newcommand{\logel}{\log \mathrm{EL}}
\newcommand{\crad}{C_{\mathrm{rad}}}
\newenvironment{proofsketch}{\begin{proof}[Proof sketch.] }{\end{proof}}
\newenvironment{proofof}[1]{\begin{proof}[Proof of {#1}]}{\end{proof}}

\SetKwComment{Cmt}{$\triangleright$\ }{}
\newcommand{\Comment}[1]{\Cmt*[r]{#1}}
\SetKwInput{KwInit}{Initialization}

\newtheorem{theorem}{Theorem}[section]
\newtheorem{definition}[theorem]{Definition}

\newtheorem{proposition}[theorem]{Proposition}

\newtheorem{corollary}[theorem]{Corollary}
\newtheorem{lemma}[theorem]{Lemma}
\theoremstyle{definition}\newtheorem{remark}[theorem]{Remark}

\title{Reliable Scale-Free Bandits via Empirical Likelihood}

\author{Pratham Gupta\thanks{Equal contribution.}}
\author{Alhad Sethi\protect\footnotemark[1]}
\author{Shubhada Agrawal}
\affil{Indian Institute of Science\\
  \small\texttt{\{%
    \href{mailto:prathamgupta@iisc.ac.in}{prathamgupta},
    \href{mailto:alhadsethi@iisc.ac.in}{alhadsethi},
    \href{mailto:shubhada@iisc.ac.in}{shubhada}%
  \}@iisc.ac.in}}

\date{\today}

\begin{document}
\maketitle
\begin{abstract}
        We study stochastic bandits without knowledge of reward scales or tail parameters. Almost-sure regret guarantees describe eventual learning but do not quantify expected performance or the risk of large regret in a finite run. We introduce \texttt{EL-F}, an anytime empirical-likelihood (EL) extension of the \(g\)-Forcing policy of \citet{Cowan_Katehakis_2020}, and characterize its performance at three levels. First, for integrable rewards and non-decreasing, diverging exploration schedules $g$ with vanishing increments, \texttt{EL-F} preserves the $g$-Forcing asymptotics for pseudo-regret $R(T)$: \(R(T)/g(T)\to\sum_{i:\Delta_i>0}\Delta_i\) almost surely as $T\to\infty$, where $\Delta_i$ is the suboptimality gap of arm $i$. Second, on every sub-exponential instance, for non-decreasing \(g(t)=\omega(\log t)\) satisfying a mild feasibility condition, expected pseudo-regret is at most \(g(T)\sum_{i:\Delta_i>0}\Delta_i+C_{\underline{P}}\), where \(C_{\underline{P}}\) is an instance-dependent constant independent of $T$. Moreover, we also establish finite-time tail bounds with super-polynomial decay in regret beyond the forced-exploration budget. For slowly varying \(g\), we determine the exact exponent of linear regret deviations on two-armed Gaussian instances with common variance and establish a matching lower bound among policies satisfying the same expected-regret budget on every such Gaussian instance. These quantitative guarantees also cover the greedy variant obtained at zero EL radius. A fixed positive radius introduces local optimism while preserving the leading asymptotic regret. We establish coupled pathwise dominance over the greedy variant in two-armed bandits with a deterministic suboptimal arm. Experiments demonstrate reductions in prolonged suboptimal allocation compared with \(g\)-Forcing on challenging instances.
\end{abstract}

\section{Introduction}
The stochastic multi-armed bandit is a classic statistical model for sequential decision making. At each time, the decision maker (or algorithm) chooses one of the $K$ arms and observes a reward. The algorithm's goal is to maximize total expected reward, equivalently minimize the expected \emph{regret}. This framework captures the quintessential trade-off between exploration and exploitation and has been studied extensively; see \citet{Slivkins2019, Lattimore2020}.
 
The regret achievable in a stochastic bandit problem depends on the class of reward distributions the learner must accommodate. To make this dependence precise, let \(\mathcal L\) denote the ambient class of admissible reward distributions, and write \(m(P)\) for the mean of a distribution \(P\). A bandit instance with \(K\) arms is a vector \(\underline{P}=(P_1,\ldots,P_K)\in\mathcal L^K\), where \(P_i\) is the reward distribution of arm \(i\). Let $A_t$ and $X_t$ denote the arm chosen and the reward observed at time $t$. The pseudo-regret up to time $T$ is $
R(T) = \sum_{i: \Delta_i >0} \Delta_i N_i(T+1),
$ where, for $m^*=\max_j m(P_j)$, $\Delta_i = m^* - m(P_i)$ is the sub-optimality gap and $N_i(t)$ is the number of pulls of arm $i$ before time $t$. Pseudo-regret isolates the loss due to arm selection from reward noise. We study its almost-sure    asymptotics, expectation, and finite-time tails, which describe complementary aspects of an algorithm’s performance. 
 
To achieve low regret, the learner must distinguish a suboptimal arm's reward distribution from alternatives under which that arm would be optimal. The difficulty of doing so depends on how close these alternatives can be to the original distribution, as measured by $\kl$ divergence, within the ambient class $\mathcal L$. For a distribution $P$ and a candidate mean $m\in\reals$, define the information projection
$\kl_{\inf}^{\mathcal{L}}(P, m) := \inf\left\{\kl(P,Q):Q \in \mathcal{L}, m(Q) \geq m\right\}.$ \citet{LaiRobbins1985,burnetas1996optimal} showed that any consistent algorithm\footnote{An algorithm is consistent on $\mathcal{L}$ if for every instance in $\mathcal{L}^K$ and every suboptimal arm $i$, $\expec{N_i(T)}= o(T^{\alpha})$ for all $\alpha\in (0,1)$.} satisfies
$$
\liminf_{T\to \infty} \frac{\expec{R(T)}}{\log T} \geq \sum_{i: \Delta_i > 0}\frac{\Delta_i}{\kl_{\inf}^{\mathcal{L}}(P_i, m^*)}.
$$
For suitably restricted ambient classes, this lower bound is attained by algorithms such as Thompson Sampling \citep{Thompson1933, KaufmannKordaMunos2012, AgrawalGoyal2017}, $\kl$-UCB \citep{CappeEtAl2013, agrawal2021regret} and IMED \citep{honda2015non}. 

However, if $\mathcal{L}$ is too rich, then $\kl_{\inf}^{\mathcal{L}}(P_i, m^*) = 0$ and logarithmic expected regret is impossible. For example, consistency on all bounded-support distributions $\mathcal{P}_{\textnormal{bdd}}=\bigcup_{a<b}\mathcal{P}_{[a,b]}$, where $\mathcal{P}_{[a,b]}$ is the set of distributions supported on $[a,b]$, already forces $\kl_{\inf}^{\mathcal{L}}=0$ for $\mathcal L = \mathcal P_{\textnormal{bdd}}$ \citep[Theorem 11]{hadiji2023adaptation}; see also \citet{lattimore2017scale, pmlr-v117-agrawal20a}. To recover logarithmic expected regret, one must therefore impose additional restrictions on $\mathcal L$, for instance via a known bound on the kurtosis \citep{lattimore2017scale}, moment conditions \citep{bubeck2013bandits,agrawal2021regret}, quantile conditions \citep{baudry2021optimality}, or a mass condition on the negative half-line \citep{genalti2024varepsilon}. 

An alternative to imposing additional distributional restrictions is to relax the target of logarithmic expected regret. Our finite-time expected-regret and tail guarantees concern the class of all light-tailed (sub-exponential) distributions, without known reward scales or tail parameters. Consistency over this class rules out logarithmic expected regret, but existing algorithms can approach this rate arbitrarily closely: expected regret $\mathcal{O}(g(T))$ is attainable with $g(t)/\log t\to\infty$ arbitrarily slowly, without knowledge of the reward scales or tail parameters \citep{ashutosh2021bandit}. We refer to such algorithms as \emph{scale-free}. \texttt{RDS} is another scale-free method with expected-regret guarantees on this class \citep{baudry2021optimality}. Related scale-free approaches have also been studied for bounded and sub-Gaussian families \citep{hadiji2023adaptation,han2024forced}.

A complementary perspective is provided by \citet{Cowan_Katehakis_2020}, who study the \(g\)-Forcing policy. It samples a least-sampled arm at time $t$ whenever some arm has fewer than \(g(t)\) observations, and otherwise selects an arm with the largest empirical mean. For integrable rewards and suitable slowly increasing exploration schedules $g$, its pseudo-regret satisfies ${R(T)}/{g(T)} \longrightarrow \sum\nolimits_{i:\Delta_i>0}\Delta_i$ almost surely. However, these pathwise asymptotics do not provide finite-time bounds on expected regret or control the random time required to reach the asymptotic regime, which may be large. This motivates a quantitative analysis of the forcing architecture.

Regret tails quantify the probability of large regret in a single run, a concern in safety-critical applications such as clinical trials. These tails have been studied extensively under well-specified distributional assumptions \citep{audibert2009exploration,Salomon2011,fan2025fragility,simchi2023regret,Panda2026}. In particular, \citet{fan2025fragility} show that optimized algorithms can have heavy regret tails and that model misspecification can substantially worsen their performance. They propose increasing exploration to improve the polynomial tail decay rate, at the cost of larger expected regret. While this reduces fragility within the specified ambient class, it is still not scale-free. 

A closely related result is that of \citet{simchi2023regret}, whose anytime UCB policy provides regret-tail guarantees for sub-exponential noise. Their analysis assumes that the arm means lie in a known range, and the sub-exponential extension introduces an additional tuning parameter. Existing scale-free algorithms such as \texttt{R-UCB} and  \texttt{RDS} provide expected-regret guarantees but do not characterize the tails of pseudo-regret. See \Cref{tab:comparison} for a snapshot summary. We therefore ask:
\begin{quote}
Can scale-free forced exploration combine almost-sure guarantees, near-logarithmic expected regret, and strong finite-time tail bounds?
\end{quote}

We answer this question with \texttt{EL-F}, an anytime empirical-likelihood (EL) extension of the \(g\)-Forcing architecture. \texttt{EL-F} retains forced exploration and uses a fixed-radius EL index for decisions between forced pulls. At zero radius, the index equals the empirical mean, giving the greedy forcing variant. Our finite-time expected-regret and tail guarantees hold for every fixed nonnegative radius, including zero. Under finite absolute means, we also show that the EL index is strongly consistent, allowing \texttt{EL-F} to preserve the almost-sure asymptotics of $g$-Forcing.

Forced exploration ensures continued sampling of arms that currently appear inferior and prevents the permanent trapping illustrated in \Cref{sec:false_start}. A positive EL radius introduces optimism adapted to the observed rewards. This local optimism can encourage an arm to be revisited before its next forced pull. We establish a coupled pathwise comparison with the greedy variant in two-armed bandits with a deterministic suboptimal arm, and experiments show reductions in prolonged suboptimal allocation on broader challenging instances.


\paragraph{Contributions.} $\texttt{EL-F}$ requires neither a known range for the arm means nor distribution-specific parameters. We analyze \texttt{EL-F} for every fixed EL radius $C_{\mathrm{rad}} \ge 0$. Let $\Delta := \sum\nolimits_{i}\Delta_i$. 
\begin{itemize}
    \item \textit{Almost-sure regret.} For integrable rewards and positive, non-decreasing exploration schedules satisfying \(g(t)\to\infty\) and \(g(t+1)-g(t)\to0\), we prove $R(T)/g(T)\longrightarrow \Delta$  almost surely. This extends the \(g\)-Forcing asymptotics to \texttt{EL-F} with fixed EL radius (Theorem~\ref{thm:expected_regret}(1)).

    \item \textit{Expected regret.} On any sub-exponential instance, for positive, non-decreasing schedules satisfying \(g(t)=\omega(\log t)\) and \(g(t)\le t-1\) for \(t\ge2\),  \texttt{EL-F} satisfies $\expec{R(T)} \le g(T) \Delta + C_{\underline{P}}$, where $C_{\underline{P}}$ is finite and independent of $T$ (\Cref{thm:expected_regret}(2)).

    \item \emph{Regret tails.} Under the same assumptions as the expected-regret result, for every $T, x\ge1$,
    \[\log \prob{R(T)\ge (x+\lceil g(T)\rceil)\Delta}\le  \log((K-1)C_0\lceil x\rceil) {-C_{\min}g(x)},\]
    where $C_0,C_{\min}>0$ are instance-dependent constants (\Cref{thm:tail}). This bound decays faster than any inverse polynomial in $x$, which measures regret beyond the forced exploration. 
    
    \item \textit{Exact Gaussian exponent and optimality.} If $g$ is also slowly varying, then on every two-armed Gaussian instance with common variance $\sigma^2$ and gap $\Delta>0$, for each $c\in (0,\Delta)$, $\log\prob{R(T)>cT} = -(1+o(1))\frac{\Delta^2}{2\sigma^2}\,g(T)$. A matching change-of-measure lower bound shows that \texttt{EL-F} achieves the best possible decay exponent for these linear-regret tails: no policy satisfying the same instance-wise expected-regret bound on two-armed Gaussian instances can improve this exponent (Proposition~\ref{thm:simchi_lb} and Corollary~\ref{cor:gaussian}).

    \item \textit{Benefit of local optimism and experiments.} We show that  \texttt{EL-F} incurs no greater pseudo-regret than $g$-Forcing in two-armed bandits with a deterministic suboptimal arm (Lemma~\ref{lem:Crad-better}). We compare \texttt{EL-F} with existing scale-free algorithms on light-tailed instances and demonstrate its competitiveness across instances. The EL radius ablation demonstrates reductions in prolonged suboptimal allocation on challenging instances (\Cref{sec:experiments} and Appendix~\ref{app:experiments}).
\end{itemize}

\begin{table}[h]
\centering
\footnotesize
\renewcommand{\arraystretch}{1.15}
\setlength{\tabcolsep}{3pt}
\newcommand{\yes}{\ensuremath{\surd}}
\newcommand{\no}{\ensuremath{\times}}
\newcommand{\as}{~\textnormal{a.s.}}
\newcommand{\inexp}{~\textnormal{in exp.}}
\begin{tabular}{@{}>{\raggedright\arraybackslash}p{0.19\linewidth}>{\centering\arraybackslash}p{0.08\linewidth}>{\centering\arraybackslash}p{0.09\linewidth}>{\raggedright\arraybackslash}p{0.13\linewidth}>{\raggedright\arraybackslash}p{0.21\linewidth}>{\raggedright\arraybackslash}p{0.18\linewidth}@{}}
\toprule
Algorithm & Anytime & Scale-free & Reward class & Regret guarantee & Regret tail \\
\midrule
$g$-Forcing \citep{Cowan_Katehakis_2020} & \yes & \yes & finite mean\textsuperscript{a} & $R(T)/g(T)\to\sum_i\Delta_i$\as& not analyzed \\
\texttt{R-UCB} \citep{ashutosh2021bandit} & \yes & \yes & sub-exp. & $\mathcal{O}(g(T))$\inexp & not analyzed \\
\texttt{RDS} \citep{baudry2021optimality} & \yes & \yes & sub-exp. & $\mathcal{O}(\log T\log\log T)$\inexp \textsuperscript{b} & not analyzed \\
\texttt{FE} \citep{han2024forced} & \no\rlap{\textsuperscript{c}} & \yes & sub-Gaussian & $\mathcal{O}(\log^2 T)$\inexp\textsuperscript{d} & not analyzed \\
\texttt{AHB} \citep{hadiji2023adaptation} & \yes & \yes & bounded\textsuperscript{e} &  $\mathcal{O}(T^{1-\alpha})$, $\alpha\in[\frac12,1)$\inexp & not analyzed \\
\midrule
\citet{simchi2023regret} & \yes & \no\rlap{\textsuperscript{f}} & sub-Gaussian, sub-exp. & $\mathcal{O}(f(T))$\inexp  & $e^{-\Theta(f(x))}$; matching l.b. \\
\citet{fan2025fragility} & \yes & \no\rlap{\textsuperscript{g}} & exp.\ family &  $\mathcal{O}(f(T))$\inexp & $e^{-\Theta(f(x))}$; matching l.b. \\
\midrule
\multirow{2}{*}{\textbf{\texttt{EL-F} (ours)}} & \multirow{2}{*}{\yes} & \multirow{2}{*}{\yes} & integrable & $R(T)/g(T)\to\sum_i\Delta_i$\as  & --- \\
 & & & sub-exp. & $g(T)\sum_i \Delta_i + \mathcal{O}(1)$\inexp, any $g=\omega(\log t)$ & $e^{-\Omega(g(x))}$; exact exponent on Gaussians \\
\bottomrule
\end{tabular}
\caption{Comparison with existing work; $f\ge\log t$, $g=\omega(\log t)$, in exp.: in expectation, a.s.: almost surely. l.b.: lower bound \textsuperscript{a}finite variance assumed for remainder-term bounds. \textsuperscript{b}For $\rho_n=\mathcal{O}(\log n)$. \textsuperscript{c}Near-logarithmic schedules take $T$ as input; anytime schedule $f(r)=r$ gives $\sqrt{T}$ regret. \textsuperscript{d}Equation 12 in their work. \textsuperscript{e}Unknown range. Distribution-free regret is $\mathcal{O}(T^{\alpha})$; show lower bound that distribution-free regret $\Phi(T)=o(T)$ implies distribution-dependent regret $\Omega(T/\Phi(T))$. \textsuperscript{f}Means assumed in $[0,1]$. \textsuperscript{g}$\kl$-UCB for known one-parameter exponential family, $f(t)=(1+b)\log t$ gives tails $x^{-(1+b)}$. 
\label{tab:comparison}}  
\end{table}

\noindent\textbf{Organization.} \Cref{tab:comparison} summarizes the comparison with existing methods, and \Cref{sec:related} discusses the related work. \Cref{sec:preliminaries} introduces notation and background. The failure example in  \Cref{sec:false_start} motivates the design of the algorithm in \Cref{sec:algorithm}. We analyze the almost sure and expected regrets in \Cref{sec:analysis} and the regret tails in \Cref{,sec:tails}, and report experiments in \Cref{sec:experiments}. Detailed experimental evaluation is presented in Appendix~\ref{app:experiments}. 

\section{Preliminaries}\label{sec:preliminaries}
We first fix notation and review the bandit model and empirical likelihood. Let $\reals$ and $\naturals$ denote the sets of real and natural numbers, respectively. For $k \in \naturals$, write $[k] := \{1, 2, \dots, k\}$. We denote the set of probability distributions on $\reals$ by $\mathcal{P}(\reals)$, and the Dirac mass at $x$ by $\delta_x$. We adopt the convention $\inf \emptyset = +\infty$. For discrete $P\in \mathcal{P}(\reals)$, we let $\supp(P) := \{x\in \reals: P(x) >0\}$. 

\paragraph{Bandit conventions. } Probabilities and expectations are taken under the joint law induced by the algorithm and the bandit instance $\underline{P}$; see \citet[Chapter 4]{Lattimore2020} for the canonical construction. We represent the rewards of each arm $i\in[K]$ by an i.i.d.\ sequence $(X_{i,s})_{s\ge1}$ with law $P_i$, with these sequences independent across arms and independent of the algorithm's internal randomization. The $s^\text{th}$ pull of arm $i$ returns $X_{i,s}$. We assume there is at least one sub-optimal arm and label the optimal arm as arm $1$, without loss of generality, so that $\Delta_i=m(P_1)-m(P_i)$. For $s\ge1$, define the empirical distribution and empirical mean of the first $s$ samples of arm $i$ by $\hat{P}_{i,s} = \frac{1}{s}\sum_{j=1}^{s}\delta_{X_{i,j}}$ and $\hat{m}_{i,s} = \frac{1}{s}\sum_{j=1}^{s}X_{i,j}$, respectively. Recall that $N_i(t)$ counts the pulls of arm $i$ before time $t$. Whenever $N_i(t)\ge1$, we write $\hat{P}_i(t)=\hat{P}_{i,N_i(t)}$ and $\hat{m}_i(t)=\hat{m}_{i,N_i(t)}$. 

\paragraph{Ambient class under consideration. } We call a distribution \emph{light-tailed} if its moment generating function is finite in a neighbourhood of the origin. This is equivalent to sub-exponentiality \citep[Theorem 2.13]{Wainwright2019}. Let $\mathcal{P}_{\exp}=\bigcup_{\nu,\alpha>0}\mathcal{P}_{\nu,\alpha}$ for the class of all sub-exponential distributions, where for $\nu,\alpha>0$, 
$\mathcal{P}_{\nu,\alpha}:=\{Q\in\mathcal{P}(\reals):\ \mathbb{E}_{Q}[e^{\lambda(X-m(Q))}]\le e^{\lambda^2\nu^2/2},\ \ \forall|\lambda|\le 1/\alpha\}.$ We note that while \texttt{EL-F} does not need any distributional knowledge whatsoever, the {expected regret} and regret-tail guarantees that we prove, hold for instances in $\mathcal{P}^K_{\exp}$, and the almost sure guarantees only require the existence of a finite first moment. 

\paragraph{Empirical Likelihood.}
Our algorithm \texttt{EL-F} combines forced exploration with an upper confidence bound (UCB) phase. The index it uses in the UCB phase is constructed using empirical likelihood (EL), a nonparametric inference method that provides asymptotic confidence intervals under minimal distributional assumptions. We briefly review this method here, and refer the reader to \citet{Owen2001} for a detailed exposition.

Let $X_1,\dots,X_n\in\reals$ be i.i.d.\ observations, and let their empirical distribution and empirical cumulative distribution function (CDF) be denoted by 
$\hat P_n=\tfrac1n\sum_{s=1}^n\delta_{X_s}$, and  $\hat F_n(x)=\tfrac1n\sum_{s=1}^n\indicator{X_s\le x}$, respectively. Then, for $Q\in\mathcal P(\reals)$ with CDF $F_Q$, the nonparametric likelihood of the samples is given by 
$L(Q)=\prod_{s=1}^n Q(\{X_s\})=\prod_{s=1}^n(F_Q(X_s)-F_Q(X_s^-)),$
where $F_Q(x^-)=Q((-\infty,x))$. Here, no parametric family is assumed. It is classical that $L(\cdot)$ is maximized by the empirical distribution, that is, $L(Q)\le L(\hat P_n)$ for every $Q$ \citep{Kiefer1956}; see also \citet[Chapter~2]{Owen2001}. 

Let $p$ be a distribution with weights $p_s> 0$ on $X_s$ so that $\sum_{s=1}^n p_s=1$. For simplicity of presentation, we assume no ties in the observations. Then, the nonparametric likelihood ratio of $p$ is then given by ${L(p)}/{L(\hat P_n)}=\prod_{s=1}^n np_s$. In analogy with parametric likelihood-ratio tests, we maximize this ratio over the weights whose mean is $m$, which gives the following profile (nonparametric) likelihood ratio
\begin{equation}
R(\hat{P}_n, m) = \sup \left\{ \prod_{s=1}^n n p_s \;:\; p_s > 0, \, \sum_{s=1}^n p_s = 1, \, \sum_{s=1}^n p_s X_s = m \right\}.
\end{equation}
Clearly, $R(\hat{P}_n, m) \in [0,1]$. In particular, $R(\hat P_n,m)=1$ when $m$ is the empirical mean, and it decreases as $m$ forces the weights further from uniform. If no feasible $p$ exists, i.e., $m\notin(\min_s X_s,\max_s X_s)$ for non-degenerate samples, we define $R(\hat P_n,m)=0$, since $R$ is a likelihood ratio and no reweighting of the data can attain such an $m$. Note that ties among the observations do not affect the profile likelihood ratio. For any fixed allocation of mass to the distinct values, $\prod_s np_s$ is maximized by splitting each value's mass equally among its copies, where it coincides with the likelihood ratio. See \citet[Section~2.3]{Owen2001} for details.


\begin{definition}[Profile Log-Likelihood Statistic]\label[definition]{def:logel}
  The profile log-likelihood statistic is given by
  \begin{equation*}
    \logel(\hat{P}_n, m) := -\tfrac{1}{n}\log R(\hat{P}_n, m) \in [0, +\infty],
  \end{equation*}
  with $-\log 0 = +\infty$. When $\min_s X_s < \max_s X_s$, $\logel(\hat{P}_n, m)$ is finite if and only if $m \in (\min_s X_s, \max_s X_s)$. When all samples are equal ($X_1 = \dots = X_n = c$), $\logel(\hat{P}_n, c) = 0$ and $\logel(\hat{P}_n, m) = +\infty$ for $m \neq c$.
\end{definition}
We note the following infimum-over-divergences formulation for profile log-likelihood statistic:
\begin{equation}\label{eq:infimum_logel_defn}
    \logel(\hat{P}_n, m) = \inf\{ \kl(\hat{P}_n, P) \;:\; P \in \mathcal{P}(\mathbb{R}), \, \supp(P) = \supp(\hat{P}_n), \, \mathbb{E}_P[X] = m \}.
\end{equation}
To see this, observe that for weights $(p_s)_{s}$ on the sample points, $\kl(\hat{P}_n, P) = \sum_s \frac1n\log(1/np_s)$, which equals $-\frac1n\log\prod_s np_s$. Minimizing over $p$ subject to $\sum_s p_sX_s=m$ gives $-\frac1n\log R(\hat{P}_n,m)$. 
  
Next, in the non-degenerate case, for any candidate mean $m\in (\min_s X_s, \max_s X_s)$, the profile log-likelihood statistic also admits the following dual  (cf. \citet[Section 2.9]{Owen2001}):
\begin{equation}\label{eq:dual_logel_defn}
    \logel(\hat{P}_n, m) = \sup_{\lambda \in D_m} \frac{1}{n} \sum_{s=1}^n \log \left( 1 + \lambda (m - X_s) \right),
  \end{equation}
where $D_m = \{\lambda \in \mathbb{R} : 1 + \lambda (m - X_s) > 0 \text{ for all } s = 1, \dots, n\}$.

Our focus in this work will be on adapting the EL method to the scale-free bandit setting and on exhibiting the finite-sample properties of our algorithm. To do so, we prove the following lower bound on the profile log-likelihood statistic that may be of independent interest. 
\begin{lemma}\label[lemma]{lem:logel_lower_bound}
Let $\hat P_n$ be the empirical distribution of $X_1,\dots,X_n\in\reals$, with mean $m(\hat P_n)$, variance $\sigma^2(\hat P_n)$ and largest element $X_{\max}>m(\hat P_n)$. Then for every $m\in\bigl(m(\hat P_n),X_{\max}\bigr)$,
\begin{equation}\label{eq:logel_lb_on_feasiblemeans}
\logel(\hat{P}_n, m) \geq \frac{0.5(m - m(\hat P_n))^2}{ \sigma^2(\hat P_n) + (m - m(\hat P_n))(X_{\max} - m(\hat P_n)) }.
\end{equation}
\end{lemma}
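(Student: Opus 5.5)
Our plan is to use the dual representation \eqref{eq:dual_logel_defn}, which expresses $\logel(\hat P_n,m)$ as a supremum; it therefore suffices to exhibit one admissible $\lambda$ and lower bound the objective at that $\lambda$. Write $\mu:=m(\hat P_n)$, $\sigma^2:=\sigma^2(\hat P_n)$, $d:=m-\mu>0$, $B:=X_{\max}-\mu$, and $Y_s:=X_s-\mu$. Then $m-X_s=d-Y_s$, with $\frac1n\sum_s Y_s=0$ and $\frac1n\sum_s Y_s^2=\sigma^2$. Since $m>\mu$, we will take $\lambda\le0$ and write $\lambda=-\eta$ with $\eta\ge0$. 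The objective becomes $\frac1n\sum_s\log(1+\eta(Y_s-d))$, and admissibility requires $1+\eta(Y_s-d)>0$ for all $s$. This holds automatically whenever $Y_s\ge d$. For $Y_s<d$ the binding term is the smallest $Y_s$; instead of requiring a lower bound on the data, we will use the inequality $\log(1+u)\ge u-\frac{u^2}{2(1+u)}$... but this must be handled with care.

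For the key elementary inequality, note that the terms that are dangerous for a $\log$ lower bound are those with $u_s:=\eta(Y_s-d)$ very negative. For $\eta\le 1/(d+\cdot)$ we have $u_s\ge -\eta(d-\min Y)$, which may be close to $-1$. Cleaner approach: use the inequality $\log(1+u)\ge \frac{u}{1+u/2}$ for $u>-1$, or $\log(1+u)\ge u-\frac{u^2}{2}$ for $u\ge0$ combined with a separate argument for $u<0$. Since the target denominator contains $B=X_{\max}-\mu$ and not $\min X$, the asymmetry suggests the correct tool is $\log(1+u)\ge u-\frac{u^2}{2}\cdot\frac{1}{\min(1,1+u)}$. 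The preferred form is the bound $\log(1+u)\ge \frac{u}{1+u}$ on the positive side, together with $\log(1+u)\ge u - u^2/2$ on the negative side with $u\ge -1$... Concretely, we will prove the scalar inequality $\log(1+u)\ge u-\frac{u^2}{2}$ for $u\ge0$ false direction; so instead we use $\log(1+u)\ge u-\frac{u^2}{2(1-a)}$... The approach we commit to is the following. Let $a:=\eta(B-d)\ge u_s$ denote the maximal $u_s$. On $(-1,a]$ we have $\log(1+u)\ge u-\frac{u^2}{2}\,c(a)$ where $c(a)=1$ for $u\le0$, and for $0\le u\le a$ a bound of the type $\log(1+u)\ge u-\frac{u^2}{2}$ still holds since $\log(1+u)\ge u-u^2/2$ for all $u\ge0$. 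For $u<0$, however, $\log(1+u)\le u-u^2/2$, which is the wrong direction. Hence the negative side is the genuine obstacle, and it is here that the denominator must be spent.

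We resolve this by choosing $\lambda$ in the form of a ratio, $\log(1+u)\ge \frac{2u}{2+u}$ for $u\ge0$ and $\log(1+u)\ge\frac{u}{\sqrt{1+u}}$ for $u\in(-1,0]$. Alternatively, and most robustly, we will exploit concavity differently: by Jensen-free convexity of $-\log$, use the primal \eqref{eq:infimum_logel_defn} and Pinsker-type bound $\kl(\hat P_n,P)\ge \frac{(\mathbb E_P X-\mu)^2}{2(\mathrm{Var}+\dots)}$. Specifically, we will use the chi-square/Bernstein-type inequality $\kl(\hat P,P)\ge \sup_\eta\{-\eta d-\log\mathbb E_{\hat P}e^{-\eta Y}\}$ (Donsker–Varadhan with test function $-\eta(X-\mu)$ applied under $P$ with mean $m$, reversed roles), and bound $\log\mathbb E_P e^{\eta(X-m)}$ via Bennett with range bound $X\le X_{\max}$: $\mathbb E_P e^{\eta(X-m)}\le \exp\big(\frac{v\eta^2}{2(1-\eta B/3)}\big)$ for $\mathbb E_P$-variance $v$. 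We expect the main obstacle to be controlling the variance under $P$ by $\sigma^2+dB$, since $P$ shifts mass upward toward $X_{\max}$. This should follow from $\mathbb E_P(X-\mu)^2\le$ the appropriate bound given $\mathbb E_P X=m$ and $X\le X_{\max}$, together with optimizing $\eta$. This produces the constant $0.5$.
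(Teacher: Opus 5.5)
\textbf{There is a genuine gap: the sign of $\lambda$ in your committed dual argument is wrong, and the fallback route does not close either.} Write $d=m-\mu>0$. By concavity of $\log$ (Jensen), every admissible $\lambda$ satisfies
\[
\frac1n\sum_{s=1}^n\log\bigl(1+\lambda(m-X_s)\bigr)\ \le\ \log(1+\lambda d),
\]
and the right-hand side is $\le 0$ whenever $\lambda\le 0$. So with $\lambda=-\eta\le0$ you can only certify that $\logel(\hat P_n,m)$ is at least a nonpositive number. No scalar inequality for $\log(1+u)$ can change this, which is why each of your attempts runs into the ``wrong direction'' on the negative side.

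The asymmetry you noticed is the symptom: the target contains $X_{\max}$, not $\min_s X_s$. For $\lambda>0$, admissibility $1+\lambda(m-X_s)>0$ binds exactly at $X_s=X_{\max}$.

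The Donsker--Varadhan fallback fails for three reasons:
\begin{enumerate}
\item With the log-moment generating function taken under $\hat P_n$ (your ``reversed roles''), it lower-bounds $\kl(P,\hat P_n)$. That is not the EL statistic, which is $\kl(\hat P_n,P)$.
\item With the expectation under $P$ and test function $\eta(X-m)$, $\eta>0$, the linear term is $-\eta d<0$, so you again get a nonpositive bound. For the useful sign $\theta(m-X)$, a Bennett bound needs an upper bound on $m-X$. That brings in $\min_s X_s$, not $X_{\max}$.
\item The step $\mathrm{Var}_P(X)\le\sigma^2+dB$ is false for general $P$ supported on the sample with mean $m$, since mass can sit at both extremes. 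Yet the bound must hold for every feasible $P$ in the infimum.
\end{enumerate}

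\textbf{The repair uses an inequality you wrote down and then discarded.} Let $b=X_{\max}-m>0$ and take $\lambda\in(0,1/b)$.
\begin{enumerate}
\item Then $u_s=\lambda(m-X_s)\ge-\lambda b=:-c$ with $c\in(0,1)$, so $\lambda$ is dual-feasible.
\item For $u\ge-c$, $\log(1+u)\ge u-\frac{u^2}{2(1-c)}$. To check this, note the difference vanishes at $0$ and has derivative $u\bigl(\frac1{1-c}-\frac1{1+u}\bigr)$, which is $\le0$ on $[-c,0]$ and $\ge0$ on $[0,\infty)$.
\item Averaging over $s$ gives
\[
\logel(\hat P_n,m)\ \ge\ \lambda d-\frac{\lambda^2\hat v}{2(1-\lambda b)},\qquad \hat v:=\frac1n\sum_{s=1}^n(m-X_s)^2=\sigma^2+d^2 .
\]
\item Choose $\lambda=d/(\hat v+bd)$, so that $1-\lambda b=\hat v/(\hat v+bd)$. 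The right-hand side becomes $\frac{d^2}{2(\hat v+bd)}$.
\item Finally, $\hat v+bd=\sigma^2+d(X_{\max}-\mu)$, which is exactly the claimed bound.
\end{enumerate}
This is the paper's proof.
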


A proof of \Cref{lem:logel_lower_bound} can be found in \Cref{app:main_text_lemma_proofs}. A key step in the proof is expressing $\logel(\hat{P}_n, m)$ using its dual representation (\Cref{eq:dual_logel_defn}) and applying the following logarithmic inequality $\log(1+x) \geq x-0.5{x^2}/(1-c)$ that holds for any $c\in (0,1), x\geq -c$. We refer the reader to \Cref{sec:extra_lemmas} for a proof of this inequality. We note that variants of this algebraic inequality are used to show (empirical) Bernstein-type bounds in the literature \citep{audibert2009exploration, maurer2009empirical,blm2013concentration}. 
\paragraph{Relating $\logel(\cdot,\cdot)$ and $\kl_{\inf}^{\mathcal{L}}(\cdot,\cdot)$.}
The non-parametric $\kl$-UCB index of \citet{agrawal2021regret} is built using the statistic given as $\kl_{\inf}^{\mathcal{L}}(\hat P_n,m)=\inf\{\kl(\hat P_n,Q): Q\in\mathcal{L},\, m(Q)\ge m\}$, which can be understood as the closest distribution in the given class $\mathcal L$ that has mean at least $m$. Its definition, therefore, requires fixing $\mathcal{L}$ in advance. Moreover, for rich $\mathcal L$, it is identically zero. The comparison with $\logel$ is easiest to see from \Cref{eq:infimum_logel_defn}, which is a similar information projection, with the ambient class $\mathcal{L}$ replaced by the \emph{data-dependent} class of distributions supported on the observed samples. 

Two consequences drive the rest of the paper. First, $\logel$ needs no specification of $\mathcal{L}$, which is what makes an index built from it scale-free. Second, under $\kl_{\inf}^{\mathcal{L}}$, a rich $\mathcal{L}$ lets a feasible $Q$ move a vanishing amount of mass arbitrarily far out to raise its mean at almost no $\kl$ cost, whereas under $\logel$ this move is forbidden, so $\logel>0$ for every $m\neq m(\hat P_n)$. When $P_i \in \mathcal{P}_{[a,b]},$ we have $\logel\ge\kl_{\inf}^{\mathcal{P}_{[a,b]}}$ (for $m>m(\hat P_n)$, since the feasible set is smaller), so the $\logel$ index is never more optimistic than the $\kl$-UCB index. 

\paragraph{Slowly varying functions.} Finally, we now recall the notion of slowly varying functions, which we use to formulate some of our asymptotic results on regret tails. A function $f:(0,\infty)\to(0,\infty)$ is called slowly varying if $f(cx)/f(x)\to1$ as $x\to\infty$ for every $c>0$. Examples of such functions include $\log x$, $\log^2x$, $\log x\log\log x$. See \citet{Embrechts1997} for details. 

\paragraph{Notation.} For non-negative functions $f, g$, we write $f(n) = \mathcal{O}(g(n))$ if $\limsup_{n\to\infty} {f(n)}/{g(n)} <\infty$, $f(n) = \Omega(g(n))$ if $\liminf_{n\to\infty} {f(n)}/{g(n)} > 0$, $f(n) = o(g(n))$ if $\lim_{n\to\infty} {f(n)}/{g(n)} = 0$, and $f(n) = \omega(g(n))$ if $\lim_{n\to\infty} {f(n)}/{g(n)} = \infty$. 

\section{A false start: naive empirical-likelihood UCB}\label{sec:false_start}
Increasing the exploration rate of UCB-type algorithms can improve robustness and regret-tail behavior, at the cost of larger expected regret \citep{audibert2009exploration,simchi2023regret,fan2025fragility}. For scale-free non-parametric bandits, \citet{ashutosh2021bandit} study super-logarithmic exploration. However, applying this idea directly to $\kl$-UCB requires an informative choice of ambient class $\mathcal{L}$. As discussed in \Cref{sec:preliminaries}, taking $\mathcal{L}=\mathcal{P}_{\exp}$ gives $\kl_{\inf}^{\mathcal{P}_{\exp}}(\hat{P}_n,m)=0$ for every finite $m$, making the resulting index uninformative.

EL offers a data-adaptive alternative. Its candidate distributions are supported on the observed rewards, so computing the index requires no distributional knowledge. Motivated by the structural similarity between $\logel$ and $\kl_{\inf}$ discussed in \Cref{sec:preliminaries}, a natural approach is to replace $\kl_{\inf}$ with $\logel$ while retaining an inflated exploration threshold. This leads to the following naive EL-UCB policy.

\paragraph{Naive EL-UCB:} Let $g:\naturals\to(0,\infty)$ satisfy $g(t)=\omega(\log t)$. Pull each arm once. For $t\ge K+1$, let $A_t\in\arg\max_{i\in[K]}~U_i(t)$, where $U_i(t):=\sup\{m\in\reals: N_i(t)\logel(\hat{P}_i(t),m)\le g(t)\}.$
 \begin{remark}\label{rem:ELFailure}
     This policy incurs linear expected regret on a bounded two-armed instance. To see this, fix $\epsilon \in (0,1)$, and consider $P_1 = (1-\epsilon)\,\delta_0 + \epsilon\,\delta_{2/\epsilon}$, and $P_2 = \delta_1$. The arm means are $m(P_1) =  2$ and $m(P_2) = 1$, so arm $1$ is optimal and $\Delta_2 = 1$. Since arm $2$ always returns $1$, we have $\hat{P}_2(t) = \delta_1$ and $U_2(t) = 1$ for all $t \ge 3$ (Definition~\ref{def:logel}). However, the first pull of arm $1$ returns $0$ with probability $1-\epsilon$. On this event, after both arms have been pulled once, we have $\hat{P}_1(3) = \delta_0$, and hence $U_1(3) = 0 < 1 = U_2(3)$. The policy therefore plays arm $2$ at $t=3$, leaving the index of arm $1$ unchanged. This repeats at every subsequent round, resulting in $N_2(T+1) = T-1$ on this event, and consequently,$\expec{R(T)} \ge (1-\epsilon)(T-1)$ for all $T \ge 2$.
     
 \end{remark}
The example in Remark~\ref{rem:ELFailure} above, in fact, shows that no choice of exploration function $g$ can repair the failure. It is a symptom of the \emph{convex hull condition} that naive EL faces (cf. \citet[Section 10.4]{Owen2001}), which occurs because $\logel$ places an infinite penalty on means outside the empirical convex hull. To build a sound scale-free algorithm, we therefore incorporate forced exploration, which ensures that sampling continues across all arms and prevents the index from collapsing.

\section{Algorithm}\label{sec:algorithm}
We present \texttt{EL-F} in Algorithm~\ref{alg:el_ucb}. The algorithm interleaves forced exploration with selections based on an EL index: it samples an arm whose pull count is below $g(t)$ whenever such an arm exists (forced exploration phase); otherwise, it selects the arm with the largest index (UCB phase). The parameter $\crad \ge 0$ controls the optimism of the index. 
\begin{algorithm}[h]
\caption{\texttt{EL-F:} Empirical Likelihood UCB with Forced Exploration}
\label{alg:el_ucb}
\KwIn{$K$, non-decreasing exploration function $g:[1,\infty) \to (0,\infty)$, decision radius $\crad \ge 0$}
\KwInit{$N_i(1) = 0$, $\hat{P}_i(1) = 0$, and $\indexel_i(1) = +\infty \ \forall \ i\in [K].$}
\For{$t = 1, 2, \dots$}{
    \eIf{$\exists \, i \in [K]$ such that $N_i(t) < g(t)$}{
        Select $A_t\in\{i\in[K]:N_i(t)<g(t)\}$ \Comment{Choose least-sampled deficient arm.}
    }{
        Select $A_t \in \operatorname*{argmax}_{i \in [K]} \indexel_i(t)$ \Comment{Break ties arbitrarily.}
    }
    
    Observe reward $X_t$\;
    
    Update statistics of arm $A_t$:\;
    \begin{itemize}
        \item[] $\hat{P}_{A_t}(t+1) \gets \dfrac{N_{A_t}(t)\,\hat{P}_{A_t}(t) + \delta_{X_t}}{N_{A_t}(t)+1};$\; $N_{A_t}(t+1) \gets N_{A_t}(t) + 1$
        \item[] $\indexel_{A_t}(t+1) \gets \sup \{ m \in \mathbb{R} : N_{A_t}(t+1) \cdot \logel(\hat{P}_{A_t}(t+1),m) \le \crad \}$
    \end{itemize}
    
    \For{$j \neq A_t$}{
        $N_j(t+1) \gets N_j(t);$\;
        $\hat{P}_j(t+1) \gets \hat{P}_j(t);$\;
        $\indexel_j(t+1) \gets \indexel_j(t)$\;
    }
}
\end{algorithm}

A few remarks are in order. First, note that since $g(t)>0$, the UCB phase is entered only when $N_i(t)\ge g(t)>0$ for every arm $i$, every index used in an index comparison is computed from at least one sample. Since with one sample, or several identical samples, the EL index equals the common observed value (Definition~\ref{def:logel}), the initial values $+\infty$ of the EL index never influence arm selection. Next, since $\crad$ is fixed, only the index of the sampled arm needs updating at any time. Finally, note that we retain the term {UCB phase} by analogy with $\kl$-UCB, but the index only supplies a local optimism and is not calibrated to provide a confidence guarantee.

\begin{remark}\label[remark]{rem:imp_control_termB}
For every arm $i\in[K]$ and round $t\ge K+1$ with $N_i(t)\ge1$, $\logel(\hat{P}_i(t),\hat{m}_i(t))=0.$
Thus, the empirical mean is feasible in the definition of the index, giving $\indexel_i(t)\ge\hat{m}_i(t)$. Consequently, a lower deviation of the index below any threshold requires the empirical mean to fall below that threshold. We use this to control lower deviations of the index in our analysis.
\end{remark}

We note that forced exploration alone is sufficient for our guarantees in \Cref{sec:analysis,sec:tails}. Setting $\crad=0$ reduces the index for each arm to the empirical mean of that arm, and the $\mathcal{O}(g(T))$ expected-regret guarantee continues to hold under the assumptions of \Cref{thm:expected_regret}. This greedy variant is exactly the $g$-Forcing policy of \citet{Cowan_Katehakis_2020}, who establish almost-sure regret asymptotics for this policy, which our expected-regret and tail bounds complement. It is also related to \texttt{FE} \citep{han2024forced}, although the algorithms differ in how they trigger exploration. However, a positive radius introduces local optimism between forced pulls and can encourage an arm to be revisited before its next forced-exploration pull. We demonstrate below an instance wherein this local optimism provably improves performance. 
\begin{lemma}
    \label{lem:Crad-better}
    Consider a two-armed bandit instance such that $P_1$ is stochastic, $P_2=\delta_{\mu_2}$
    is deterministic, and $m(P_1)>\mu_2$. Keeping all else fixed, let $N^C_1(T),N^0_1(T)$ denote the number of pulls of arm $1$ in \Cref{alg:el_ucb} with $\crad = C>0$ and $\crad = 0$, respectively, with corresponding pseudo-regrets $R^C(T), R^0(T)$. 
    Then, for every $T\geq 1$,
    $N_1^C(T+1)\geq N_1^0(T+1)$, and
    $R^C(T)\leq R^0(T)$.
\end{lemma}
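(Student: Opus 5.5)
The plan is a pathwise coupling. Run both versions of \Cref{alg:el_ucb} on the same reward sequences $(X_{1,s})_{s\ge1}$ and $(X_{2,s})_{s\ge1}$, with the same deterministic tie-breaking rule in both the forced phase and the UCB phase. Arm $2$ is deterministic, so its empirical distribution is always $\delta_{\mu_2}$. By \Cref{def:logel}, its index therefore equals $\mu_2$ for every $\crad\ge0$ as soon as it has been sampled.

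Because the $s$-th pull of arm $1$ returns $X_{1,s}$ in both runs, the statistics of arm $1$ after $n$ pulls are the same function of $n$ in both runs. Its empirical mean $\hat m_{1,n}$ is common, and its index $U^{C}_{1,n}$ (respectively $U^0_{1,n}=\hat m_{1,n}$) depends only on $n$. Since $N_2(t)=t-1-N_1(t)$, the whole state at time $t$ is determined by $n=N_1(t)$. So each run's action at time $t$ is a deterministic function $d^{C}(t,n)\in\{0,1\}$ (respectively $d^0(t,n)$), with $1$ meaning that arm $1$ is pulled.

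\textbf{Step 1: pointwise dominance, $d^C(t,n)\ge d^0(t,n)$ for all $(t,n)$.}
\begin{itemize}
\item If some arm is deficient, i.e.\ $n<g(t)$ or $t-1-n<g(t)$, the forced-exploration rule depends only on $(t,n)$ and not on $\crad$. The two decisions then coincide.
\item Otherwise both runs are in the UCB phase and compare arm $1$'s index with $\mu_2$. By \Cref{rem:imp_control_termB}, $U^C_{1,n}\ge\hat m_{1,n}=U^0_{1,n}$.
\begin{itemize}
\item If the greedy run pulls arm $1$ with strict preference, $\hat m_{1,n}>\mu_2$, then also $U^C_{1,n}>\mu_2$.
\item Ties are the delicate point. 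If the first $n$ samples of arm $1$ are all identical, both indices equal $\hat m_{1,n}$, so both runs face the identical comparison and the common tie-break gives the same action.
\item If the samples are not all identical, $\hat m_{1,n}$ lies strictly inside the convex hull. Using continuity of $\logel(\hat P_{1,n},\cdot)$ on that interval (e.g.\ via the dual \eqref{eq:dual_logel_defn}) and $\logel(\hat P_{1,n},\hat m_{1,n})=0$, together with $C>0$, I would argue that $U^C_{1,n}>\hat m_{1,n}$. Then a greedy tie $\hat m_{1,n}=\mu_2$ becomes a strict win for arm $1$ in the $C$-run.
\end{itemize}
In all cases, whenever the greedy run pulls arm $1$, so does the $C$-run.
\end{itemize}

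\textbf{Step 2: induction on $t$ that $N^C_1(t)\ge N^0_1(t)$.} The base case is trivial since both counts are $0$ at $t=1$.
\begin{itemize}
\item If $N^C_1(t)=N^0_1(t)=n$, then Step 1 gives $N^C_1(t+1)=n+d^C(t,n)\ge n+d^0(t,n)=N^0_1(t+1)$.
\item If $N^C_1(t)\ge N^0_1(t)+1$, then $N^C_1(t+1)\ge N^C_1(t)\ge N^0_1(t)+1\ge N^0_1(t+1)$, since each count increases by at most one per round.
\end{itemize}
This avoids needing any monotonicity of $d$ in $n$, which would be false for the forced phase.

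\textbf{Step 3: regret.} For the two-armed instance, $R(T)=\Delta_2 N_2(T+1)=\Delta_2\bigl(T-N_1(T+1)\bigr)$. Hence $R^C(T)\le R^0(T)$ follows directly from Step 2.

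The main obstacle is the tie-handling in Step 1, specifically proving the strict inequality $U^C_{1,n}>\hat m_{1,n}$ for nondegenerate samples. If that argument gave trouble, I would fall back on stipulating that ties in the UCB phase are broken in favour of arm $1$. Under that rule the weak inequality $U^C_{1,n}\ge U^0_{1,n}$ already suffices.
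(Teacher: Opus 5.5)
Your proof is correct and follows essentially the same route as the paper: you couple the two runs through common reward streams and induct on $t$, with the only nontrivial case being $N_1^C(t)=N_1^0(t)$, where the forced-phase choices coincide and, in the UCB phase, $U^{C}_{1,n}\ge\hat m_{1,n}=U^{0}_{1,n}$ is compared against arm~2's constant index $\mu_2$. Your strict-optimism argument for non-degenerate samples is valid (via convexity, hence continuity, of $\logel(\hat P_{1,n},\cdot)$ on the open hull) but not needed, because $U^{C}_{1,n}\ge U^{0}_{1,n}\ge\mu_2$ together with $U^{C}_{1,n}=\mu_2$ forces $U^{C}_{1,n}=U^{0}_{1,n}$, so both runs face the identical tie and a common tie-breaking rule settles it, which is how the paper closes this case.
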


\begin{proofsketch}
    Let $\indexel_{i,\crad}(t)$ denote index of arm $i$ at time $t$ in \Cref{alg:el_ucb} with the appropriate choice of $\crad$. Then $\indexel_{2,C}(t)=\indexel_{2,0}(t) = \mu_2$ and $\indexel_{1,C}(t)\geq \indexel_{1,0}(t) = m(\hat{P}_1(t))$, whenever arm $1$ is chosen under $\crad = 0$ it is also pulled for $\crad = C$. See \Cref{app:crad_better} for a proof.
\end{proofsketch}

Our ablation study in \Cref{app:ablation} shows that this optimism can reduce the frequency of prolonged suboptimal play in some instances. In fact, we demonstrate that this optimism enables the index to adapt to the empirical variance (\Cref{lem:index_mean_difference_bound}), yielding a more adaptive index than the empirical mean alone. As noted in \Cref{rem:imp_control_termB}, the index always upper bounds the empirical mean. 

\section{Almost Sure and Expected Regret Analysis}\label{sec:analysis}
In this section, we analyze the \emph{almost sure and expected pseudo-regret} of \Cref{alg:el_ucb}. We state the key result of this section, followed by the key technical highlight of the proof.

\begin{theorem}\label{thm:expected_regret}
    Let $g:[1,\infty)\to(0,\infty)$ be non-decreasing with $g(t)\to\infty$ as $t\to \infty$, and $\Delta := \sum\nolimits_{i:\Delta_i>0}\Delta_i$.
\begin{enumerate}
    \item If additionally, $g(t+1)-g(t)\to0$ and $\expecover{P_i}{|X|} < \infty \ \forall \ i\in [K]$, then for $\delta\in(0,1)$, for all sufficiently large $T$, \Cref{alg:el_ucb} satisfies
    \begin{equation}
    \bigl(g(T)-\delta\bigr)\Delta \ \le\ R(T)\ \le\ \lceil g(T)\rceil\Delta \quad \text{ a.s.},
    \end{equation}
    which implies $\lim_{T\to\infty}{R(T)}/{g(T)}=\Delta$, almost surely.
    \item If $g(t)=\omega(\log t)$ and $g(t)\le t-1$ for $t\ge2$, then for every $\underline{P}\in\mathcal{P}_{\exp}^K$, \Cref{alg:el_ucb} satisfies, for every $T\in\naturals$,
    $$
    \expec{R(T)} \leq g(T) \Delta + C_{\underline{P}},
    $$
    where $C_{\underline{P}}<\infty$ is an instance-dependent constant, independent of $T$, explicitly presented in the proof (\Cref{eq:c_p_exact}).
\end{enumerate}
\end{theorem}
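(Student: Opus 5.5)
Both parts rest on the same decomposition of the pulls of a suboptimal arm $i\ge2$. Every pull of arm $i$ up to time $T$ is either forced, which happens only when $N_i(t)<g(t)\le g(T)$, or an index pull in the UCB phase. Forced pulls raise $N_i$ to at most $\lceil g(T)\rceil$.

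So $N_i(T+1)\le \lceil g(T)\rceil + \sum_{t\le T}\indicator{A_t=i,\ \text{UCB phase},\ N_i(t)\ge \lceil g(T)\rceil}$. An index pull of arm $i$ at time $t$ requires $\indexel_i(t)\ge \indexel_1(t)$. For a threshold $\mu_1-\epsilon$ (with $\epsilon<\Delta_i$, say $\epsilon=\Delta_i/2$), this splits into the event $\indexel_1(t)<\mu_1-\epsilon$, which by \Cref{rem:imp_control_termB} forces $\hat m_1(t)<\mu_1-\epsilon$, and the event $\indexel_i(t)\ge \mu_1-\epsilon$.

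\textbf{Part 1.} First, every arm is pulled at least $g(t)$ times eventually, so $N_j(t)\to\infty$ and, by the SLLN, $\hat m_j(t)\to\mu_j$ a.s. Next I would show strong consistency of the EL index: $\indexel_i(t)-\hat m_i(t)\to0$ a.s. when $\crad$ is fixed and $N_i\to\infty$. The plan is to use \Cref{lem:logel_lower_bound}, since $N\logel\le\crad$ gives $(m-\hat m)^2\lesssim (\crad/N)(\hat\sigma^2+(m-\hat m)(X_{\max}-\hat m))$. Under only a first moment, $\hat\sigma^2$ and $X_{\max}$ may grow, but $X_{\max,n}/n\to0$ and $\hat\sigma_n^2/n\to0$ a.s. when $E|X|<\infty$ (a Marcinkiewicz-type fact, since $\max_{s\le n}|X_s|=o(n)$ and $\sum X_s^2\le n\max|X_s|\cdot\frac1n\sum|X_s|=o(n^2)$). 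Solving the quadratic then gives $m-\hat m=o(1)$.

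With consistency in hand, there is a random time after which $\indexel_1(t)>\mu_1-\epsilon>\indexel_i(t)$ for all $i\ge2$, so suboptimal arms receive only forced pulls. Hence eventually $N_i(T+1)\le\lceil g(T)\rceil$, which gives the upper bound. For the lower bound, arm $i$ is forced whenever it lags. Because $g(t+1)-g(t)\to0$ and only the $K-1$ suboptimal arms need forcing, each lagging arm catches up within $O(K)$ steps, during which $g$ grows by $o(1)$. This gives $N_i(T+1)\ge g(T)-\delta$ for large $T$. The sum over $i$ then yields both inequalities and the limit.

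\textbf{Part 2.} I take expectations in the decomposition. The term $\sum_t P(\hat m_1(t)<\mu_1-\epsilon,\ \text{UCB phase})$ is handled by noting that in the UCB phase $N_1(t)\ge g(t)$. A union bound over $n\ge g(t)$ with sub-exponential concentration gives $\sum_{n\ge g(t)}e^{-c n}\lesssim e^{-c g(t)}$. Since $g=\omega(\log t)$, this is eventually $\le t^{-2}$, so the sum is summable with a $T$-free constant. The condition $g(t)\le t-1$ keeps the early terms finite.

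For the arm-$i$ term, I would use the standard count argument: $\sum_t \indicator{A_t=i,\ \indexel_i(t)\ge\mu_1-\epsilon}\le\sum_{n\ge1}\indicator{\text{index after } n \text{ samples}\ge\mu_1-\epsilon}$. Applying \Cref{lem:logel_lower_bound} again, this event requires $\hat m_{i,n}\ge\mu_i+\epsilon'$ or $\crad/n\ge c\,\epsilon'^2/(\hat\sigma^2+\epsilon'(X_{\max}-\hat m))$. The latter needs $\hat\sigma_n^2$ or $\max_{s\le n}X_{i,s}$ to be of order $n$, and for sub-exponential samples both have exponentially or summably small probability in $n$; the maximum in particular is handled by a union bound $n\,P(X>cn)\le n e^{-c'n}$. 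Summing over $n$ gives a finite constant.

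Combining these pieces, $E[R(T)]\le\lceil g(T)\rceil\Delta+C$, and the ceiling is absorbed into $C_{\underline P}$. The main obstacle is controlling the index's upper deviation through $\hat\sigma^2$ and $X_{\max}$, both in the a.s. consistency under only a first moment and in the summable tail bounds; I would isolate these as two lemmas.
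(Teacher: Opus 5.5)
Your proposal is correct and shares the paper's architecture. Part~1 goes through strong consistency of the EL index, then the Cowan--Katehakis forcing argument; your bound $\frac1n\sum_s X_s^2\le\max_s|X_s|\cdot\frac1n\sum_s|X_s|$ together with $\max_{s\le n}|X_s|=o(n)$ is exactly \Cref{lem:sample_variance_bound} plus the Borel--Cantelli step of \Cref{lem:index_consistency}. Part~2 uses the same forced/Term~A/Term~B split, and your Term~B argument is the paper's.

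The genuine difference is in Term~A.
\begin{itemize}
\item The paper sums over rounds $t\in S_U$, drops the event $\{A_t=i\}$, and union-bounds over every possible count $s\in[\lceil g(t)\rceil,t-1]$. Each of its three pieces (mean overshoot, $A_i(t)>L$, $M_i(t)>qN_i(t)$) is then summable only because $g(t)=\omega(\log t)$, and the resulting constants $t_0,t_1,t_2$ depend on $g$. What it buys is one uniform template, $\sum_t e^{-Cg(t)}$, for every term.
\item You keep $\{A_t=i\}$ and use the pigeonhole count over sample sizes, $\sum_t\indicator{A_t=i,\ \indexel_{i,N_i(t)}\ge m(P_1)-\epsilon}\le\sum_{n}\indicator{\indexel_{i,n}\ge m(P_1)-\epsilon}$. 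Term~A then becomes a plain series in $n$ whose value does not depend on $g$ at all. Super-logarithmic growth is needed only for Term~B, where no such counting is available for arm~$1$. This mirrors what the paper itself does for the suboptimal arm in the proof of \Cref{thm:tail}.
\end{itemize}

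Two points to tighten.
\begin{itemize}
\item The overshoot condition from \Cref{lem:logel_lower_bound} involves $X_{\max}-\hat m_{i,n}$, not $\max_{s\le n}X_{i,s}$. A one-sided bound on $\prob{X>cn}$ therefore does not suffice. Use $X_{\max}-\hat m_{i,n}\le 2M_{i,n}$ with $M_{i,n}=\max_{s\le n}|X_{i,s}-m(P_i)|$, and then \Cref{lem:max_dev_subexponential}(ii), $\prob{M_{i,n}>qn}\le 2ne^{-c_qn}$.
\item You only assert that $\prob{\hat\sigma^2_{i,n}\gtrsim n}$ is summable. It follows immediately from the inequality $\hat\sigma^2_{i,n}\le M_{i,n}A_{i,n}$ that you already used in Part~1, which reduces it to the events $\{A_{i,n}>L\}$ and $\{M_{i,n}>qn\}$, exactly as in the paper.
\end{itemize}
Also, the early terms of the Term~B sum are finite simply because each is a probability; the hypothesis $g(t)\le t-1$ is not what controls them.
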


\begin{proofsketch}
    We show that the EL index is a strongly consistent estimator of the true mean (\Cref{lem:index_consistency}), following which Part (1) essentially follows the proof of \citet[Proposition 1]{Cowan_Katehakis_2020}. For Part (2), our proof relies on decomposing the number of pulls of suboptimal arms in the forced exploration and UCB phases. In particular, for a sub-optimal arm $i$, we bound this as 
    $\expec{N_i(T+1)} \leq \lceil g(T)\rceil +\textnormal{Term A} + \textnormal{Term B},$
    where Term A arises from the index of arm $i$ overshooting its true mean, and Term B arises from the empirical mean of arm $1$ underestimating its true mean. The latter term is easily shown to be $\mathcal{O}(1)$ using a union bound and standard sub-exponential concentration inequalities. 
    
    For Term A, we use an upper bound on the index of arm $i$ (\Cref{lem:index_mean_difference_bound}), which allows us to control the index deviation. Following this, we control the sample variance in terms of the sample mean and the max deviation of a certain sub-exponential random variable (\Cref{lem:sample_variance_bound}). These different terms are then bounded using standard tools to complete the proof in \Cref{app:expected_regret_proofs}.
\end{proofsketch}


A key intermediate technical step used in both parts of the proof above is to control for deviations of the index from the sample mean. To this end, we present the following lemma, which gives an upper bound on the index. See \Cref{app:main_text_lemma_proofs} for a proof of \Cref{lem:index_mean_difference_bound}.

\begin{lemma}\label[lemma]{lem:index_mean_difference_bound}
Fix an arm $i$ and a round $t$ with $N_i(t)=n\ge1$. Let $\hat m_i(t)$, $\hat\sigma^2_i(t)$ and
$X^{\max}_i(t)$ be the empirical mean, empirical variance and maximum of the $n$ rewards observed from arm $i$
before round $t$. Then
\[
0\;\le\;\indexel_i(t)-\hat m_i(t)\;\le\;\frac{2\crad}{n}\bigl(X^{\max}_i(t)-\hat m_i(t)\bigr)+\sqrt{\frac{2\crad}{n}\,\hat\sigma^2_i(t)} .
\]
\end{lemma}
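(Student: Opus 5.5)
The lower bound $\indexel_i(t)\ge\hat m_i(t)$ is immediate from \Cref{rem:imp_control_termB}: the empirical mean satisfies $n\,\logel(\hat P_i(t),\hat m_i(t))=0\le\crad$, so it lies in the set whose supremum defines the index. For the upper bound, we set $\mu:=\hat m_i(t)$, $\sigma^2:=\hat\sigma_i^2(t)$, $M:=X^{\max}_i(t)$, and $D:=M-\mu\ge 0$. We then treat three cases.

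\emph{Degenerate samples.} If all samples are equal, then $\logel(\hat P_i(t),m)=+\infty$ for every $m\neq\mu$ by \Cref{def:logel}. Hence $\indexel_i(t)=\mu$, and the bound holds trivially. If $\crad=0$, then only means with $\logel=0$ are feasible. Since $\logel>0$ for $m\ne\mu$ (as the lower bound below shows), the index again equals $\mu$.

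\emph{Non-degenerate samples.} Assume $M>\mu$ and $\crad>0$. No $m\ge M$ is feasible, because there $\logel=+\infty$, so $\indexel_i(t)\le M$. Take any feasible $m\in(\mu,M)$, meaning $n\,\logel(\hat P_i(t),m)\le\crad$, and write $x:=m-\mu>0$. \Cref{lem:logel_lower_bound} gives
\[
\frac{\crad}{n}\ \ge\ \logel(\hat P_i(t),m)\ \ge\ \frac{x^2}{2(\sigma^2+xD)}.
\]
Writing $a:=\crad/n$, this says $x^2-2aDx-2a\sigma^2\le 0$. Therefore $x$ is at most the positive root:
\[
x\le aD+\sqrt{a^2D^2+2a\sigma^2}\le 2aD+\sqrt{2a\sigma^2},
\]
where the last step uses $\sqrt{u+v}\le\sqrt u+\sqrt v$. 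Taking the supremum over feasible $m$ yields $\indexel_i(t)-\mu\le \tfrac{2\crad}{n}D+\sqrt{\tfrac{2\crad}{n}\sigma^2}$. If the supremum happens to be attained at $m=\mu$ or approached only from below, the bound holds trivially.

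\emph{Main obstacle.} The delicate step is the boundary behaviour: showing that the supremum is not achieved at or beyond $M$, and that \Cref{lem:logel_lower_bound} applies to every $m$ in the open interval $(\mu,M)$. Since $\logel$ is finite exactly on $(\min_s X_s,M)$ and the lemma covers all of $(\mu,M)$, the supremum over feasible $m>\mu$ is controlled uniformly by the quadratic bound. That is enough to close the argument. The constant $2$ in front of $\tfrac{\crad}{n}D$ is looser than the root requires, so there is slack for the crude square-root splitting.
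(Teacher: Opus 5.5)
Your proof is correct and follows the paper's argument essentially step for step. It uses the same case split (all samples equal, $\crad=0$, non-degenerate with $\crad>0$), the same observation that feasible $m>\hat m_i(t)$ lie in $(\hat m_i(t),X^{\max}_i(t))$, the same application of \Cref{lem:logel_lower_bound} to obtain a quadratic whose positive root bounds $m-\hat m_i(t)$, and the same splitting $\sqrt{u+v}\le\sqrt u+\sqrt v$. One small wording point: in the $\crad=0$ case, \Cref{lem:logel_lower_bound} only certifies $\logel>0$ for $m>\hat m_i(t)$, not for all $m\neq\hat m_i(t)$, but that is all you need because only the supremum of the feasible set matters.
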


\section{Pseudo-regret Tail Characterization}\label{sec:tails}
We now study the tails of the pseudo-regret $R(T)$, which describe how likely rare, catastrophic runs are; see \citet{audibert2009exploration, Salomon2011, fan2025fragility,simchi2023regret, simchi2025simple}. First, we demonstrate an upper bound on the tails of pseudo-regret for \texttt{EL-F}. Next, we show that these bounds are essentially tight in certain regimes, a notion we will make precise below. 

Note that forced exploration contributes at most $\Delta\lceil g(T)\rceil$ to $R(T)$, where $\Delta=\sum_{i:\Delta_i>0}\Delta_i$. The next result says that regret beyond this budget has a tail controlled by $g$ itself. 
 
\begin{theorem}[Regret tail bound]\label{thm:tail}
Let $g:[1,\infty)\to(0,\infty)$ be non-decreasing with $g(t)\le t-1$ for $t\ge2$ and $g(t)=\omega(\log t)$. Let $\underline{P}\in\mathcal{P}_{\exp}^K$. Then, there are instance-dependent constants $C_0<\infty$ and $C_{\min}>0$ such that the following hold.
\begin{enumerate}
    \item[(i)] For every $T\in\mathbb{N}$ and $x>\Delta\lceil g(T)\rceil$, writing $k_x=\big\lceil\frac{x-\Delta\lceil g(T)\rceil}{\Delta}\big\rceil$,
    $$
    \log \mathbb{P}\big(R(T)\ge x\big)\ \le\ \log((K-1)C_0k_x) -C_{\min}g(k_x);
    $$
    \item[(ii)] Moreover, if $g$ is slowly varying and $x_T=\omega(g(T))$, then
    $$
    \limsup_{T\to\infty}\frac{\log\mathbb{P}\big(R(T)\ge x_T\big)}{g(x_T)}\ \le\ -C_{\min}.
    $$
\end{enumerate}
\end{theorem}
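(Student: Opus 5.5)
We reduce the tail of $R(T)$ to the tail of a single suboptimal arm's count of UCB-phase pulls. Pseudo-regret is $R(T)=\sum_{i:\Delta_i>0}\Delta_iN_i(T+1)$. Suppose $R(T)\ge x$ with $x>\Delta\lceil g(T)\rceil$. Then some suboptimal arm $i$ must satisfy $N_i(T+1)\ge\lceil g(T)\rceil+k_x$, since otherwise $R(T)<\sum_i\Delta_i(\lceil g(T)\rceil+k_x)\le x$ by the definition of $k_x$ (up to the ceiling convention). A union bound over the $K-1$ suboptimal arms produces the factor $K-1$. So it suffices to show, for each suboptimal $i$ and each integer $k\ge1$,
\[
\mathbb{P}\bigl(N_i(T+1)\ge\lceil g(T)\rceil+k\bigr)\le C_0\,k\,e^{-C_{\min}g(k)}.
\]

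\textbf{Step 1: locate a bad UCB pull.} Forced pulls of arm $i$ up to time $T$ number at most $\lceil g(T)\rceil$, because a forced pull at time $t$ requires $N_i(t)<g(t)\le g(T)$. Hence on the event above, arm $i$ is pulled in the UCB phase at some round $t\le T$ with $N_i(t)=n\ge\lceil g(T)\rceil+k-1$. Since $g(T)\ge g(n)$ and $n\ge k$, at that round every arm has at least $g(t)$ samples. Because $t>n$, in particular $N_1(t)\ge g(t)\ge g(n)$. A UCB pull of $i$ requires $\indexel_i(t)\ge\indexel_1(t)\ge\hat m_1(t)$, using \Cref{rem:imp_control_termB}. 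So one of two events holds:
\begin{itemize}
\item[(A)] $\indexel_i(t)\ge m_i+\Delta_i/2$ with $N_i(t)=n$, for some $n\ge k$;
\item[(B)] $\hat m_{1,s}\le m_1-\Delta_i/2$ for some $s\ge g(n)$, where $g(n)\ge g(k)$.
\end{itemize}
The event (A) depends only on the first $n$ samples of arm $i$, so it is bounded by a union over $n\ge k$ of sample-level events.

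\textbf{Step 2: bound the tails of (A) and (B).} For (B), sub-exponential concentration gives $\mathbb{P}(\hat m_{1,s}\le m_1-\Delta_i/2)\le e^{-cs}$. The rounds at which $i$'s $n$-th UCB pull can occur lie in a range of size at most $T$, so a naive union over $t$ would cost a factor of $T$. Instead, union over the sample counts $s$ of arm 1: the bound is $\sum_{s\ge g(k)}e^{-cs}\lesssim e^{-cg(k)}$.

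For (A), apply \Cref{lem:index_mean_difference_bound}. The event $\indexel_{i}\ge m_i+\Delta_i/2$ requires either $\hat m_{i,n}\ge m_i+\Delta_i/6$, or $\frac{2\crad}{n}(X^{\max}_{i,n}-\hat m_{i,n})\ge\Delta_i/6$, or $\frac{2\crad}{n}\hat\sigma^2_{i,n}\ge(\Delta_i/6)^2$.
\begin{itemize}
\item The mean event has probability $e^{-cn}$.
\item The max event requires some centered sample to exceed order $n$; by a union over the $n$ samples this has probability at most $ne^{-cn}$.
\item The variance event needs $\hat\sigma^2\ge cn$. Using \Cref{lem:sample_variance_bound}, or simply $\hat\sigma^2\le\frac1n\sum(X_j-m_i)^2$ together with the max bound, this reduces to a maximal deviation of order $\sqrt n$ or an average of sub-exponential squares exceeding order $n$. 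Both are $O(ne^{-c\sqrt n})$ or better.
\end{itemize}
The variance tail is the delicate point, since $\sqrt n$ decay is weaker than $g(n)$ only if $g$ grows faster than $\sqrt n$. To keep an exponent of $g(k)$, I would use the averaged form: $\frac1n\sum_j(X_j-m_i)^2\ge cn$ forces $\max_j|X_j-m_i|\ge c\sqrt n\cdot\sqrt n$? Not quite. More carefully, $\sum_j(X_j-m_i)^2\ge cn^2$ implies $\max_j|X_j-m_i|\ge\sqrt{c}\,\sqrt n$ only. So instead I would bound the sum by $\max_j|X_j-m_i|\cdot\sum_j|X_j-m_i|$. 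Since $\sum_j|X_j-m_i|\le Cn$ except with probability $e^{-cn}$, this gives $\max_j|X_j-m_i|\ge c'n$, with probability at most $ne^{-cn}$.

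Summing over $n\ge k$ then gives $\mathbb{P}(A)\le C k e^{-ck}$. Since $g(t)\le t-1$, we have $e^{-ck}\le e^{-cg(k)}$, which yields the claimed form with $C_{\min}=\min_i$ of these rates and the factor $k$ absorbed.

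\textbf{Step 3: part (ii).} For part (ii), take $x=x_T$. Then $k_x\sim x_T/\Delta$ because $x_T=\omega(g(T))$. Slow variation gives $g(k_x)/g(x_T)\to1$. Also $\log k_x=O(\log x_T)=o(g(x_T))$ because $g=\omega(\log)$. Dividing part (i) by $g(x_T)$ and taking the limsup then gives $\limsup_{T\to\infty}\log\mathbb{P}(R(T)\ge x_T)/g(x_T)\le-C_{\min}$.

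The main obstacle is Step 2's variance control, specifically ensuring every term decays at least like $e^{-cn}$ uniformly in $\crad$. The key is the max-times-sum trick above, which avoids $\sqrt n$-rate losses.
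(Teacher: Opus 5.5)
Your proposal is correct and follows essentially the paper's proof: a union bound over suboptimal arms, a UCB-phase pull of arm $i$ at which arm $1$ has at least $g(k_x)$ samples, and a low-mean event for arm $1$ summed over its sample counts. Index-overshoot events for arm $i$ are then controlled by \Cref{lem:index_mean_difference_bound} and the max-times-average bound of \Cref{lem:sample_variance_bound}, with slow variation giving part (ii); working with $N_i(T+1)\ge\lceil g(T)\rceil+k_x$ instead of $N_i^U(T)\ge k_x$ is only cosmetic.

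Two small repairs are needed.
\begin{itemize}
\item The reduction should read ``if every $N_i(T+1)\le\lceil g(T)\rceil+k_x-1$, then $R(T)\le\Delta(\lceil g(T)\rceil+k_x-1)<x$''. Your displayed $\sum_i\Delta_i(\lceil g(T)\rceil+k_x)\le x$ points the wrong way.
\item For $k_x=1$ the hypothesis $g(t)\le t-1$ is unavailable, so $e^{-ck}\le e^{-cg(k)}$ can fail. You must enlarge $C_0$ by the factor $e^{C_{\min}g(1)}$, as the paper does.
\end{itemize}
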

 
\begin{proofsketch}
We first split $R(T)$ into a contribution from the forced exploration phase, which is at most $\Delta\lceil g(T)\rceil$, and that from the UCB-phase. For the latter, we take a union bound over suboptimal arms. Thus, it suffices to bound $\prob{N_i^U(T)\ge k}$, where $N_i^U(T)$ counts UCB-phase pulls for the sub-optimal arm $i$ till time $T$. At the $k$-th such pull of arm $i$, it has at least $k-1$ samples. Moreover, at that time, arm $1$ has at least $g(k)$ samples (by the forced-exploration), and the index of arm $i$ is at least that of arm $1$. As in \Cref{lem:terma_subexponential}, this forces one of four concentration failures: one for arm $1$, which is a geometric series starting at $g(k)$, and three for arm $i$, which decay exponentially in $k$. 
Each of these terms is bounded by $k\,e^{-C_{\min}g(k)}$.
Part (ii) follows by taking logarithms and using $g=\omega(\log t)$, monotonicity of $g$ and slow variation. The full proof is in \Cref{app:regret_tail_upper_bound}.
\end{proofsketch}

Thus, beyond the forced-exploration budget, the regret tails of \texttt{EL-F} are at most $\exp(-\Omega(g(x)))$. Since $g(t)=\omega(\log t)$, this bound decays faster than $x^{-p}$ for every $p>0$, while for sublinear $g$ it decays slower than exponentially. To place this rate within the landscape of existing results, we recall that no consistent policy can have exponentially decaying regret tails uniformly over the ambient class \citep[Theorem 1]{simchi2025simple}. Moreover, it is helpful to contrast our rate with policies that attain logarithmic regret: \citet[Theorem 1]{fan2025fragility} show that policies which attain the Lai-Robbins lower bound have tails of order $1/x$, and that inflating the exploration of $\kl$-UCB to $(1+b)\log t$ only improves the rate to $x^{-(1+b)}$ at the price of a larger constant in front of $\log T$. 

Next, we show that the upper bound in \Cref{thm:tail} is essentially tight. To do so, we will first show a lower bound on the tails of any bandit algorithm acting on a well-constructed Gaussian instance. Following this, we will show that on this instance the tails of \texttt{EL-F} exactly match the lower bound. The general lower bound result below provides a simplified proof of \citet[Theorem 1 (2)]{simchi2023regret} and improves the constants in their result.
 
\begin{proposition}[Lower bound]\label[proposition]{thm:simchi_lb}
Let $K=2$, $\underline{P}=(N(\mu_1,\sigma^2),N(\mu_2,\sigma^2))$ with $\Delta=\mu_1-\mu_2>0$, and $\underline{Q}=(N(\mu_{1,Q},\sigma^2),N(\mu_2,\sigma^2))$ with $\mu_{1,Q}<\mu_2$ and $\Delta_Q=\mu_2-\mu_{1,Q}$. Let $\pi$ be any policy with $\expecover{\underline{Q}}{R(T)}=o(T)$, and  $\limsup_Tx_T/T<\Delta$. Then
$$
\log\mathbb{P}_{\underline{P}}\big[R(T)>x_T\big]\ \ge\ -(1+o(1))\left[\frac{(\Delta+\Delta_Q)^2}{2\sigma^2}\,\expecover{\underline{Q}}{N_1(T+1)}+\log2\right].
$$
\end{proposition}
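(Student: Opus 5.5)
We would prove this by a change of measure from $\underline{P}$ to $\underline{Q}$, using the high-probability form of the data-processing (Bretagnolle--Huber) inequality. The two instances differ only in arm $1$, so the divergence decomposition gives
\[
\kl\bigl(\mathbb{P}_{\underline{Q}},\mathbb{P}_{\underline{P}}\bigr)=\expecover{\underline{Q}}{N_1(T+1)}\,\kl\bigl(N(\mu_{1,Q},\sigma^2),N(\mu_1,\sigma^2)\bigr)=\expecover{\underline{Q}}{N_1(T+1)}\,\frac{(\Delta+\Delta_Q)^2}{2\sigma^2},
\]
where the laws are those of the history up to time $T$ and $\mu_1-\mu_{1,Q}=\Delta+\Delta_Q$. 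For any event $E$, Bretagnolle--Huber gives $\mathbb{P}_{\underline{P}}(E)+\mathbb{P}_{\underline{Q}}(E^c)\ge \tfrac12\exp(-\kl(\mathbb{P}_{\underline{Q}},\mathbb{P}_{\underline{P}}))$. The $\log 2$ in the statement comes from this factor $\tfrac12$.

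We take $E=\{R_{\underline{P}}(T)>x_T\}$. Under $\underline{P}$, arm $2$ is suboptimal, so $R_{\underline{P}}(T)=\Delta N_2(T+1)$. Choose $c$ with $\limsup_T x_T/T<c<\Delta$. For large $T$, $E\supseteq\{N_2(T+1)>cT/\Delta\}$, which is an event about the history, so it can be evaluated under either instance. Under $\underline{Q}$, arm $1$ is suboptimal with gap $\Delta_Q$ and $N_1+N_2=T$. Hence $E^c\subseteq\{N_2(T+1)\le cT/\Delta\}\subseteq\{N_1(T+1)\ge (1-c/\Delta)T\}$ for large $T$. Markov's inequality and $\expecover{\underline{Q}}{N_1(T+1)}=\expecover{\underline{Q}}{R(T)}/\Delta_Q=o(T)$ then give $\mathbb{P}_{\underline{Q}}(E^c)=o(1)$.

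Plugging in,
\[
\mathbb{P}_{\underline{P}}(E)\ge \tfrac12\exp\Bigl(-\tfrac{(\Delta+\Delta_Q)^2}{2\sigma^2}\expecover{\underline{Q}}{N_1(T+1)}\Bigr)-\varepsilon_T,
\]
with $\varepsilon_T=\mathbb{P}_{\underline{Q}}(E^c)\to0$. The main obstacle is absorbing $\varepsilon_T$ into a $(1+o(1))$ factor in the exponent. This is straightforward if $\varepsilon_T$ is at most half the first term, but that need not hold when $\expecover{\underline{Q}}{N_1(T+1)}$ grows, since the first term then vanishes.

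To avoid subtracting, we would use the sharper data-processing form $\kl(\mathbb{P}_{\underline{Q}},\mathbb{P}_{\underline{P}})\ge \mathrm{kl}(q,p)$ for the binary divergence, with $q=\mathbb{P}_{\underline{Q}}(E)\to1$ and $p=\mathbb{P}_{\underline{P}}(E)$. Expanding $\mathrm{kl}(q,p)\ge q\log(1/p)-\log 2$ gives
\[
\log p\ge -\frac{1}{q}\Bigl[\frac{(\Delta+\Delta_Q)^2}{2\sigma^2}\expecover{\underline{Q}}{N_1(T+1)}+\log 2\Bigr].
\]
Since $1/q=1+o(1)$, this is exactly the claimed bound. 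It needs no relation between $\varepsilon_T$ and the size of the exponent, so we would adopt it as the main route. The remaining checks are routine: that $E$ is measurable with respect to the history up to time $T$, and that the divergence decomposition holds for possibly randomized policies.
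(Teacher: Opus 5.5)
Your final route is the same as the paper's: the divergence decomposition $\kl(\mathbb{P}_{\underline{Q}},\mathbb{P}_{\underline{P}})=\expecover{\underline{Q}}{N_1(T+1)}(\Delta+\Delta_Q)^2/(2\sigma^2)$, data processing down to the binary divergence on the event $\{\Delta N_2(T+1)>x_T\}$, the bound $\mathrm{kl}(q,p)\ge q\log(1/p)-\log 2$, and Markov's inequality under $\underline{Q}$ to get $q\to1$. The Bretagnolle--Huber detour you set aside is not needed, and the one detail the paper spells out that you leave implicit is the boundary case $p\in\{0,1\}$: $p=0$ is impossible because $\mathrm{kl}(q,0)=+\infty$ for $q>0$ while the left side is at most $T(\Delta+\Delta_Q)^2/(2\sigma^2)$, and $p=1$ makes the bound trivial.
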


\begin{proofsketch}
     The two instances differ only in arm $1$, so using a chain rule for $\kl$ divergences, we obtain that the divergence between the two histories is $\expecover{\underline{Q}}{N_1(T+1)}(\Delta+\Delta_Q)^2/(2\sigma^2)$. Applying the data-processing inequality, followed by an algebraic inequality on the binary $\kl$ divergence completes the proof. The full proof is in \Cref{app:lower_bound_tails}. 
\end{proofsketch}

Using this, we characterize the exact exponent attained by our algorithm for linear deviations and demonstrate the tightness of our proposed algorithm. 
 
\begin{corollary}[Exact exponent and optimality]\label[corollary]{cor:gaussian}
Let $g$ be as in \Cref{thm:tail} and slowly varying. Let $\underline{P}$ and $\Delta$ be as in \Cref{thm:simchi_lb} and let $c\in (0,\Delta)$. Then, for \texttt{EL-F}
$$
\lim_{T\to\infty}\frac{\log\mathbb{P}_{\underline{P}}\big[R(T)>cT\big]}{g(T)}\ =\ -\frac{\Delta^2}{2\sigma^2}.
$$
\end{corollary}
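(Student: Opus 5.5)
The identity will follow from two matching one-sided bounds. The lower bound on the probability comes from \Cref{thm:simchi_lb} combined with \Cref{thm:expected_regret}(2) applied to a perturbed instance. The upper bound on the probability is a direct concentration argument, sharper than \Cref{thm:tail}, whose constant $C_{\min}$ is not tight. Throughout, slow variation of $g$ gives $g(T)=o(T)$ and $g(\varepsilon T)/g(T)\to1$ for each fixed $\varepsilon>0$.

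\emph{Lower bound on the probability (liminf).} Fix $\eta>0$. Let $\underline{Q}=(N(\mu_2-\eta,\sigma^2),N(\mu_2,\sigma^2))$, so that arm $1$ is suboptimal under $\underline{Q}$ with gap $\Delta_Q=\eta$. Gaussians are sub-exponential, so \Cref{thm:expected_regret}(2) applies to \texttt{EL-F} on $\underline{Q}$ (with arm $2$ playing the role of the optimal arm). It gives $\eta\,\expecover{\underline{Q}}{N_1(T+1)}\le g(T)\eta+C_{\underline{Q}}$, i.e.\ $\expecover{\underline{Q}}{N_1(T+1)}\le g(T)+C_{\underline{Q}}/\eta$. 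Since $g(T)=o(T)$, we get $\expecover{\underline{Q}}{R(T)}=o(T)$. Taking $x_T=cT$ with $c<\Delta$, \Cref{thm:simchi_lb} yields
\[
\log\mathbb{P}_{\underline{P}}[R(T)>cT]\ge-(1+o(1))\Bigl[\tfrac{(\Delta+\eta)^2}{2\sigma^2}\bigl(g(T)+C_{\underline{Q}}/\eta\bigr)+\log 2\Bigr].
\]
Dividing by $g(T)\to\infty$ gives $\liminf \ge -(\Delta+\eta)^2/(2\sigma^2)$. Letting $\eta\downarrow0$ gives the liminf half.

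\emph{Upper bound on the probability (limsup).} On $\{R(T)>cT\}$ we have $N_2(T+1)>cT/\Delta$. By \Cref{thm:expected_regret}(1)'s accounting, forced pulls of arm $2$ number at most $\lceil g(T)\rceil=o(T)$. Hence for large $T$ there is a UCB-phase pull of arm $2$ at some time $t\le T$ with $N_2(t)\ge cT/(2\Delta)=:n_T$. In particular $t\ge n_T$, so $N_1(t)\ge g(t)\ge g(n_T)$. At that time $\indexel_2(t)\ge\indexel_1(t)\ge\hat m_1(t)$ by \Cref{rem:imp_control_termB}. Fix $\delta\in(0,\Delta)$. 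The event is then contained in the union of two events:
\begin{itemize}
\item[(a)] some $n\ge n_T$ has $\indexel$ computed from the first $n$ samples of arm $2$ exceeding $\mu_2+\delta$;
\item[(b)] some $n\ge g(n_T)$ has $\hat m_{1,n}\le\mu_2+\delta$.
\end{itemize}
For (b), a Gaussian tail bound and a geometric sum give
\[
\sum_{n\ge g(n_T)}e^{-n(\Delta-\delta)^2/(2\sigma^2)}\le C_\delta\,e^{-g(n_T)(\Delta-\delta)^2/(2\sigma^2)}.
\]
For (a), I will use \Cref{lem:index_mean_difference_bound}. The index of arm $2$ exceeds $\mu_2+\delta$ only if one of three things happens: $\hat m_{2,n}>\mu_2+\delta/3$; or $\frac{2C}{n}(X^{\max}-\hat m)>\delta/3$; or $\frac{2C}{n}\hat\sigma^2>\delta^2/9$. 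The first has probability $e^{-\Omega(n)}$. The second requires $\max_{s\le n}|X_{2,s}-\mu_2|\gtrsim n$, which has probability at most $n\,e^{-\Omega(n^2)}$. The third requires $\sum_s(X_{2,s}-\mu_2)^2\gtrsim n^2$, which by chi-square concentration has probability $e^{-\Omega(n)}$. Summing over $n\ge n_T$ gives $e^{-\Omega(T)}$, which is $o(e^{-Ag(T)})$ for every $A$ since $g(T)=o(T)$. Combining the two bounds, and using $g(n_T)/g(T)\to1$ by slow variation, gives $\limsup\le-(\Delta-\delta)^2/(2\sigma^2)$. Letting $\delta\downarrow0$ completes the limsup half.

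\emph{Main obstacle.} The delicate step is (a): the deviation of arm $2$'s index above its mean must be controlled at a rate negligible relative to $e^{-g(T)}$. This needs the three-way split from \Cref{lem:index_mean_difference_bound}, with explicit Gaussian max and chi-square tail estimates. A secondary point is ensuring that the sharp constant $\Delta^2/(2\sigma^2)$ survives, rather than degrading to something like $C_{\min}$. This is why I bound (b) directly with the exact Gaussian tail $e^{-n(\Delta-\delta)^2/(2\sigma^2)}$ instead of invoking \Cref{thm:tail}.
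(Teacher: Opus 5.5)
Your proposal is correct and follows the paper's proof. The lower bound (perturbed instance $\underline{Q}^\eta$, then \Cref{thm:expected_regret}(2) and \Cref{thm:simchi_lb}, then $\eta\downarrow0$) is identical. The upper bound has the same structure as the paper's: a UCB-phase pull of arm $2$ with linearly many samples while arm $1$ has at least $g(\Theta(T))$ samples, an exact Gaussian tail for arm $1$, an $e^{-\Omega(T)}$ bound on arm $2$'s index overshoot, and slow variation. The only differences are cosmetic. The paper reuses $k_T$ and the events $\mathcal{E}_{1,s},\dots,\mathcal{E}_{4,n}$ from the proof of \Cref{thm:tail} with sub-exponential constants for arm $2$, whereas you redo the arm-$2$ bound through a three-way split of \Cref{lem:index_mean_difference_bound} using Gaussian maximum and chi-square tails.
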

 
\begin{proofsketch}
For the upper bound, we follow the proof of \Cref{thm:tail} with $x=cT$, now using Gaussian concentration for the optimal arm. For the lower bound, we essentially take $\Delta_Q \to 0$ in \Cref{thm:simchi_lb}. We note that the lower bound holds for all policies such that the expected number of pulls of the sub-optimal arm is at most $(1+o(1))g(T)$. The full proof is in \Cref{app:optimality_tails}.
\end{proofsketch}
 
Thus, the constant in the exponent is exact: a catastrophic run of \texttt{EL-F} is one in which the optimal arm looks $\Delta$ worse than it is, for about $g(T)$ samples. The probability of this happening is at most $\exp(-(1+o(1))g(T)\Delta^2/(2\sigma^2))$. In fact, no policy with the same expected number of pulls of the suboptimal arm can make this rarer.

\section{Experiments}\label{sec:experiments}

We empirically compare \Cref{alg:el_ucb} with existing scale-free bandit algorithms in terms of mean and median of pseudo-regret across independent runs. We consider two super-logarithmic exploration schedules,
$g_1(t)=\frac{1}{K}\log t\log\log t$ and
$g_2(t)=\frac{1}{K}(\log t)^2$,
with the initial-round conventions specified in Appendix~\ref{app:experiments}. For \texttt{EL-F}, we consider $\crad\in\{0,3\}$ to examine the effect of local optimism in addition to forced exploration.



\begin{itemize}[nosep]
    \item \texttt{R-UCB} \citep{ashutosh2021bandit} plays $\operatorname{argmax}_i \, \hat{m}_i(t) + \sqrt{\varphi(t)/N_i(t)}$ for an exploration function $\varphi(t)=\omega(\log t)$. We use $\varphi\in\{g_1,g_2\}$.
    
    \item \texttt{FE} \citep{han2024forced} alternates forced exploration with greedy pulls of the empirically best arm. Its exploration schedule is set by a sequence $f(\cdot)$; we use the three sequences suggested by the authors, $f(r)=\sqrt{T}$, $f(r)=r$ and $f(r)=e^{r/\log T}$. The first and third require the horizon $T$ in advance, whereas $f(r)=r$ can be run anytime.

    \item \texttt{RDS} \citep{baudry2021optimality} is a Dirichlet re-sampling algorithm whose exploration is governed by a sequence $\rho_n$. We use $\rho_n = \sqrt{\log(1+n)}$, the choice in their experiments, and the more exploratory $\rho_n = \log(1+n)$.
\end{itemize}

\paragraph{Instances.} We evaluate all algorithms on six bandit instances: the four instances described in \Cref{tab:instances} and unscaled versions of the two Beta instances, with rewards in $[0,1]$. We ran each (algorithm, instance) pair for $T = 10{,}000$ rounds and $1{,}024$ independent runs, and report the mean and median of the pseudo-regret $R(T) = \sum_{a}\Delta_a N_a(T+1)$ in \Cref{tab:regret-summary}. We omit \texttt{R-UCB} and \texttt{FE} in \Cref{fig:regret_combined} as their regret scales are significantly larger than \texttt{EL-F} and \texttt{RDS} (see Figure~\ref{fig:regret_curves_all} in Appendix~\ref{app:all_algo_experiment}). 



\paragraph{Observations.} 


With $\crad > 0$, \texttt{EL-F} performs competitively with \texttt{RDS}, generally achieving smaller median regret and a comparable mean regret (illustrated in \Cref{tab:regret-summary} and \Cref{fig:regret_combined}).
The distinction between mean and median regret
is pronounced on more challenging instances like Beta with small gaps. This gap between its median and mean stems from heavier regret tails, which are observed to decrease with increased exploration and a larger optimism radius. 
\texttt{R-UCB} takes this to an extreme, achieving the lowest median regret on 4 out of 6 settings but incurring much larger mean regret (\Cref{tab:regret-summary} and \Cref{fig:regret_curves_all}). Moreover, we observe that \texttt{R-UCB} is not invariant under reward scaling and behaves differently on the $[0,1]$ and $[0,50]$ versions of the same instance (see Table~\ref{tab:regret-summary}, and \Cref{para:effect-of-reward-scaling} for a detailed discussion). \texttt{FE} exhibits consistent regret across runs, but often incurs higher exploration costs than the best configurations
of \texttt{EL-F} and \texttt{RDS}.



The effects of forced exploration and local optimism are
particularly evident in instances with frequent high-regret
runs (\Cref{tab:regret-summary,tab:crad-ablation,tab:trapped-all}).
Increasing exploration from $g_1$ to $g_2$ reduces the
frequency of such runs and improves mean regret on the
small-gap Beta and exponential instances, but incurs
higher exploration costs on easier instances.
Similarly, increasing $\crad$ reduces mean regret on
the more challenging instances, while typically increasing
median regret. 
Based on these observations, we recommend choosing $\crad=3$, which strikes a good balance between mean and median regret across the instances considered. As shown in Appendix~\ref{app:ablation}, a slowly increasing radius $\crad(t)=\log\log t + 3$ can further improve mean regret on some instances, although the gains are not uniform.

\noindent

\section{Conclusion \& Discussion}
We studied scale-free regret minimization on rich ambient classes, where $\kl_{\inf}^{\mathcal{L}}=0$ rules out logarithmic expected regret. We proposed \texttt{EL-F}, an anytime EL extension of the $g$-Forcing policy that requires no knowledge of reward scales or tail parameters. We characterized its pseudo-regret at three levels. For integrable rewards, \texttt{EL-F} preserves the almost-sure asymptotics of $g$-Forcing. On every sub-exponential instance and for any $g=\omega(\log t)$, its expected regret is at most $g(T)\sum_i\Delta_i$ plus an instance-dependent constant, and its regret beyond the forced-exploration budget has tails $\exp(-\Omega(g(x)))$. For slowly varying $g$, the exponent of linear regret deviations is exact on two-armed Gaussian instances with common variance. These guarantees hold for every fixed radius $\crad\ge0$, and thus also cover the greedy variant.

An important open direction is to better understand the role of local optimism. Our analysis shows that a positive radius preserves the leading regret, and our coupling argument shows that it never hurts in two-armed bandits with a deterministic suboptimal arm. Experimentally, it reduces prolonged suboptimal allocation on challenging instances. However, a general theory of when and by how much $\crad>0$ helps remains open. Finally, the empirical-likelihood statistic requires no moment conditions, and sub-exponentiality enters our analysis only through concentration. This makes \texttt{EL-F} a natural candidate for scale-free learning with heavy-tailed rewards \citep{genalti2025open, genalti2026parameter}, which we leave for future work.

\begin{table}[t]
\centering
\small
\resizebox{\ifdim\width>\textwidth\textwidth\else\width\fi}{!}{%
\begin{tabular}{lcll}
\toprule
\textbf{Instance} & $K$ & \textbf{Parameters} & \textbf{Means} $(m_a)_a$ \\
\midrule
Beta (wide gaps)  & 4 &
$(\alpha_a,\beta_a)\in\{(1,9),(3,7),(5,5),(8,2)\}$ &
$(5,\,15,\,25,\,40)$ \\[2pt]
Beta (small gaps) & 5 &
$(\alpha_a,\beta_a)\in\{(5,10),(6,9),(7,8),(8,7),(9,6)\}$ &
$(16.67,\,20,\,23.33,\,26.67,\,30)$ \\[2pt]
Gaussian          & 4 &
$\sigma = 10$ &
$(5,\,15,\,25,\,40)$ \\
Exponential       & 4 &
rate $1/m_a$ &
$(5,\,15,\,25,\,40)$ \\
\bottomrule
\end{tabular}%
}
\caption{Instances used in experiments. Beta instances are scaled by 50 (arm $a$ has law $50\cdot\mathrm{Beta}(\alpha_a,\beta_a)$); Gaussian arms are $N(m_a,\sigma^2)$ and exponential arms are $\mathrm{Exp}(1/m_a)$.}
\label{tab:instances}
\end{table}

\begin{figure}[h]
    \centering

    \includegraphics[width=0.9\textwidth]{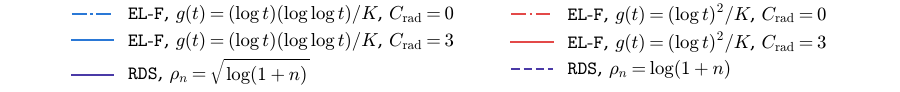}\\[2pt]

    \includegraphics[width=\linewidth]{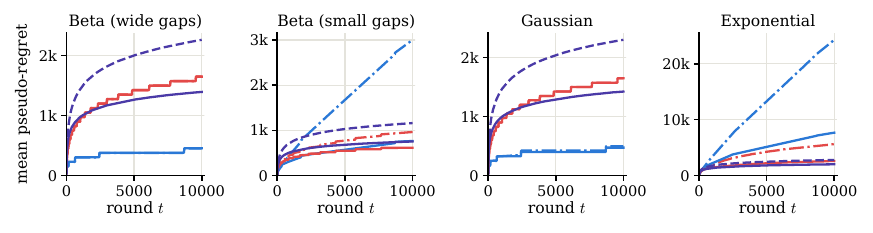}\\
    {\small (a) Mean pseudo-regret over $1{,}024$ runs.}
    \label{fig:regret_combined_preg}\\[3pt]

    \includegraphics[width=\linewidth]{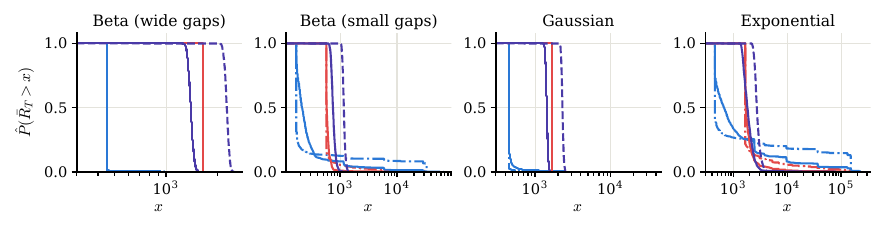}\\
    {\small (b) Empirical tail $\hat P(R_T > x)$ of the pseudo-regret ($x$ on a logarithmic scale).}
    \label{fig:regret_combined_tails}

    \caption{Pseudo-regret performance over $1{,}024$ runs.}
    \label{fig:regret_combined}
\end{figure}

\begin{table}[h]
\centering
\caption{Pseudo-regret at $T = 10{,}000$ over 1{,}024 independent runs, where $g_1(t) = \frac{1}{K}(\log t )\,(\log\log t)$ and $g_2(t) = \frac{1}{K}(\log t)^2$, with $K$ the number of arms.}
\label{tab:regret-summary}
\resizebox{\ifdim\width>\textwidth\textwidth\else\width\fi}{!}{%
\begin{tabular}{@{}l*{12}{r}@{}}
\toprule
Policy & \multicolumn{4}{c}{Beta (wide gaps)} & \multicolumn{4}{c}{Beta (small gaps)} & \multicolumn{2}{c}{Gaussian} & \multicolumn{2}{c}{Exponential} \\
\cmidrule(lr){2-5} \cmidrule(lr){6-9} \cmidrule(lr){10-11} \cmidrule(lr){12-13}
 & \multicolumn{2}{c}{$[0,1]$} & \multicolumn{2}{c}{$[0,50]$} & \multicolumn{2}{c}{$[0,1]$} & \multicolumn{2}{c}{$[0,50]$} & & & & \\
\cmidrule(lr){2-3} \cmidrule(lr){4-5} \cmidrule(lr){6-7} \cmidrule(lr){8-9}
 & mean & med. & mean & med. & mean & med. & mean & med. & mean & med. & mean & med. \\
\midrule
\texttt{EL-F}, $g_1$, $C_{\mathrm{rad}} = 0$ & \textbf{9.06} & \textbf{9.00} & \textbf{453} & 450 & 60.0 & \textbf{3.33} & 3{,}001 & 167 & 498 & 450 & 24{,}255 & 450 \\
\texttt{EL-F}, $g_1$, $C_{\mathrm{rad}} = 3$ & 9.10 & \textbf{9.00} & 455 & 450 & 15.3 & 4.20 & 765 & 210 & \textbf{474} & 450 & 7{,}672 & 660 \\
\texttt{EL-F}, $g_2$, $C_{\mathrm{rad}} = 0$ & 33.0 & 33.0 & 1{,}650 & 1{,}650 & 19.3 & 11.3 & 964 & 567 & 1{,}650 & 1{,}650 & 5{,}611 & 1{,}650 \\
\texttt{EL-F}, $g_2$, $C_{\mathrm{rad}} = 3$ & 33.0 & 33.0 & 1{,}650 & 1{,}650 & \textbf{12.3} & 11.5 & \textbf{614} & 573 & 1{,}650 & 1{,}650 & 2{,}607 & 1{,}695 \\
\midrule
\texttt{RDS}, $\rho_n = \sqrt{\log(1+n)}$ & 27.9 & 27.8 & 1{,}395 & 1{,}390 & 15.1 & 14.9 & 754 & 743 & 1{,}425 & 1{,}425 & \textbf{2{,}018} & 1{,}810 \\
\texttt{RDS}, $\rho_n = \log(1+n)$ & 45.3 & 45.3 & 2{,}264 & 2{,}265 & 23.2 & 23.1 & 1{,}161 & 1{,}153 & 2{,}302 & 2{,}295 & 2{,}804 & 2{,}710 \\
\midrule
\texttt{R-UCB}, $\varphi = g_1$ & 31.8 & 31.8 & 833 & \textbf{75.0} & 88.7 & 88.6 & 6{,}554 & \textbf{46.7} & 3{,}313 & \textbf{75.0} & 56{,}385 & \textbf{120} \\
\texttt{R-UCB}, $\varphi = g_2$ & 116 & 116 & 680 & \textbf{75.0} & 260 & 259 & 4{,}952 & 50.0 & 2{,}510 & \textbf{75.0} & 55{,}007 & \textbf{120} \\
\midrule
\texttt{FE}, $f(r) = e^{r/\log T}$ & 96.0 & 96.0 & 4{,}800 & 4{,}800 & 42.8 & 42.7 & 2{,}140 & 2{,}133 & 4{,}800 & 4{,}800 & 4{,}855 & 4{,}800 \\
\texttt{FE}, $f(r) = \sqrt{T}$ & 149 & 149 & 7{,}473 & 7{,}460 & 71.5 & 66.5 & 3{,}573 & 3{,}323 & 7{,}514 & 7{,}460 & 10{,}049 & 7{,}520 \\
\texttt{FE}, $f(r) = r$ & 210 & 210 & 10{,}500 & 10{,}500 & 93.6 & 93.3 & 4{,}681 & 4{,}667 & 10{,}500 & 10{,}500 & 10{,}659 & 10{,}500 \\
\bottomrule
\end{tabular}%
}
\end{table}

\section{AI Disclosure}
The core results in this work were obtained without using AI tools. The authors acknowledge the use of GPT-6 Astra/Sol for verifying the proofs. It also suggested Lemma~\ref{lem:sample_variance_bound}. Opus-5.5 was also used to help improve the presentation. The authors acknowledge Claude Code's assistance in implementing the experiments. The authors have verified all code, statements, and proofs, and take full responsibility regarding the correctness and clarity of the work.

\section*{Acknowledegments} SA acknowledges the support from ANRF’s grant ANRF/ECRG/2025/000560/ENS and the generous support from the Pratiksha Trust, Bangalore, through the Young Investigator Award. This work was also supported by the Centre for Networked Intelligence (a Cisco CSR initiative) at the Indian Institute of Science, Bengaluru.

\bibliography{refs}

\newpage

\appendix
\crefalias{section}{appendix}
\crefalias{subsection}{subappendix}
\section{Related Work.}\label{sec:related}
In this appendix, we discuss the related works in detail.


\paragraph{Impossibility and scale-free algorithms.}
Within parametric families, scale-free logarithmic regret is attainable: \citet{cowan2018normal} give an inflated-sample-mean index policy that is asymptotically optimal for Gaussian rewards with unknown means and variances, and \citet{cowan2015asymptotically} do the same for uniform rewards with unknown supports. Both their guarantees rely on the parametric form and do not extend to the non-parametric classes considered here. 

On sufficiently rich classes $\kl_{\inf}^{\mathcal{L}}=0$, so logarithmic regret is impossible \citep{lattimore2017scale, pmlr-v117-agrawal20a, hadiji2023adaptation, ashutosh2021bandit}. 
\citet{lattimore2017scale} gives a scale-free algorithm under a known bound on the kurtosis and proposes inflating the exploration rate to trade a slightly super-logarithmic regret for adaptivity. \citet{ashutosh2021bandit} realize this with \texttt{R-UCB}, which inflates the exploration rate of UCB to $g(t)=\omega(\log t)$ and obtains $\mathcal{O}(g(T))$ regret on light-tailed instances, together with a truncation-based variant for heavy tails. \citet{baudry2021optimality} obtain an analogous trade-off with re-sampling (Dirichlet sampling, \texttt{RDS}). By \citet[Theorem 1]{hadiji2023adaptation}, a scale-free policy whose distribution-dependent regret is $\mathcal{O}(g(T))$ on bounded instances cannot have distribution-free regret $o(T/g(T))$; like \texttt{R-UCB} and \texttt{RDS}, \texttt{EL-F} gives up worst-case regret in exchange for near-logarithmic instance-dependent regret.

\paragraph{Forced exploration.}
The $g$-Forcing policy of \citet{Cowan_Katehakis_2020} samples an arm whose pull count falls below $g$ and otherwise plays greedily, and for slowly growing $g$ its pseudo-regret is of order $g(n)$ almost surely, with almost-sure bounds on the remainder term. With $\crad=0$, \texttt{EL-F} essentially reduces to this policy. Relative to their work, we provide finite-time expected-regret bounds with explicit constants, quantitative regret-tail bounds, an exact tail exponent on Gaussian instances, and the empirical-likelihood index for $\crad>0$. \citet{han2024forced} interleave forced exploration with greedy choices and obtain near-logarithmic expected regret for sub-Gaussian rewards with unknown variance proxy. Their exploration is triggered by the time since an arm's last pull, and their near-logarithmic schedules take the horizon as input. Unlike either of these works, we analyze the regret \emph{tails}.

 
\paragraph{Heavy-tailed bandits.}
For rewards with only bounded moments, robust estimators give optimal regret when the moment bound is known \citep{bubeck2013bandits, agrawal2021regret}. Recent work studies parameter-free variants by imposing relaxing assumptions that allow them to get around the impossibility result. These include examples such as truncated non-positivity \citep{genalti2024varepsilon} and symmetry \citep{tamas2024data}. \cite{genalti2025open} stated several open problems on this front, including assumption-free adaptation, as well as understanding how different relaxing assumptions relate to one another. Recent work by \cite{genalti2026parameter} makes progress on the former of these problems. While our current setting is light-tailed, we believe it would be an interesting avenue of future work to extend our methods to the heavy-tailed setting.
 
\paragraph{Empirical likelihood.}
We note that the empirical likelihood method has been applied successfully in the contextual bandit literature \citep{karampatziakis2020empirical, ouyang2026bayesian}. Empirical-likelihood confidence bounds have also been used in distributionally robust optimization \citep{duchi2021statistics}.
 
\paragraph{Regret tails.}
The distribution of the pseudo-regret was studied by \citet{audibert2009exploration, Salomon2011}, who show that UCB-type algorithms can have polynomially decaying regret tails. \citet{fan2025fragility} show that this heavy-tail behaviour is intrinsic to algorithms that match the Lai-Robbins bound. \citet{simchi2023regret} characterize the trade-off between expected regret and tail risk: their lower bound ties the tail of a policy to its expected regret on an alternative instance, and they give an anytime UCB policy that attains the matching orders, also for sub-exponential noise, under the assumption that the means lie in a known range and with a hyper-parameter in the sub-exponential case. \citet{simchi2025simple, Panda2026} study tail-safe policies and tail characterizations for general reward models. In \Cref{sec:tails} we prove a lower bound by a change-of-measure argument of the same type, with an explicit constant, and use it to show that the exponent of our tail bound is exact on Gaussian instances (\Cref{cor:gaussian}). Our tail bound $\exp(-\Omega(g(x)))$ holds for every sub-exponential instance without a known range for the means and without distribution-specific parameters.

\section{Deferred proofs}\label{app:proofs}
This appendix contains the full proofs of the results appearing in the main text.

\subsection{Preliminary lemmas}\label{app:crad_better}

\begin{proofof}{Lemma~\ref{lem:Crad-better}. }
We couple the two runs of the algorithm (namely, with $\crad = 0$ and $\crad=C>0$) by using the same reward stream for each arm (recall the reward-stack model specified in \Cref{sec:preliminaries}). Write $N_i^C(t)$ and $N_i^0(t)$ for the respective pull counts before round $t$. We prove by induction on $t$ that
\[
N_1^C(t)\geq N_1^0(t).
\]
The claim holds at $t=1$, since both counts are zero.

Suppose it holds at round $t$. The inequality can fail at round $t+1$ only if the runs pull the following arms at time $t$ and satisfy:
\[
N_1^C(t)=N_1^0(t),
\qquad
A_t^C=2,\quad A_t^0=1.
\]
We show that this is impossible.

Since the two algorithms have made the same number of pulls of arm $1$ by round $t$, and both have made exactly $t-1$ pulls in total, their pull counts agree for both arms:
\[
N_i^C(t)=N_i^0(t),\qquad i=1,2.
\]
By the coupling, their empirical distributions are consequently identical.

First, consider the forced-exploration phase. The set of deficient arms is identical in the two runs. Since the same least-sampled rule and tie-breaking convention are used, both
algorithms select the same arm. Thus the claimed divergence cannot occur in this phase.

Next, consider the UCB phase. Since arm $2$ is deterministic, its empirical distribution
is always $\delta_{\mu_2}$. By the definition of the empirical-likelihood statistic,
\[
\indexel_{2,C}(t)
=\indexel_{2,0}(t)=\mu_2.
\]
From \Cref{rem:imp_control_termB}, we have that the empirical mean lower bounds the index, i.e
\[
\indexel_{1,C}(t)
\geq \hat m_1(t)
=\indexel_{1,0}(t).
\]
If the zero-radius algorithm selects arm $1$, then, by its selection rule,
\[
\indexel_{1,0}(t)\geq
\indexel_{2,0}(t)=\mu_2.
\]
It follows that
\[
\indexel_{1,C}(t)
\geq \indexel_{1,0}(t)
\geq \mu_2
=\indexel_{2,C}(t).
\]
Therefore, the radius-$C$ algorithm must also select arm $1$, using the stipulated tie-breaking rule. This contradicts
$A_t^C=2$ and $A_t^0=1$.

Hence the induction is complete, and
\[
N_1^C(t)\geq N_1^0(t)
\qquad\text{for all }t\geq 1.
\]

Finally, since arm $1$ is optimal and
$\Delta=m(P_1)-m(P_2)>0$, the pseudo-regret of either algorithm is
\[
R^C(T)=\Delta N_2^C(T+1)
=\Delta\bigl(T-N_1^C(T+1)\bigr).
\]
Thus,
\[
N_1^C(T+1)\geq N_1^0(T+1)
\implies R^C(T)\leq R^0(T),
\]
as claimed.
\end{proofof}

\subsection{Almost Sure and Expected Regret Analysis}\label{app:expected_regret_proofs}
In this section, we prove \Cref{thm:expected_regret}, our main result concerning the pseudo-regret random variable induced by \Cref{alg:el_ucb}. Following the proof of \Cref{thm:expected_regret}, we prove the intermediate lemmas and results used, including \Cref{lem:logel_lower_bound,lem:index_mean_difference_bound}.

\begin{proofof}{\Cref{thm:expected_regret}. }
    \textit{Part (1).} The proof effectively follows the proof of Proposition 1 in \citet{Cowan_Katehakis_2020}. Our algorithm can be seen as the $g$-forcing policy that simply uses the EL index instead of the sample mean in the UCB phase (called ``play the winner" phase in their proof). The same lower bound on the number of sub-optimal pulls holds, since it only depends on $g$ and not the nature of the UCB phase. On the other hand, using strong consistency of the EL index (\Cref{lem:consistency_of_index}) instead of the strong law of large numbers in the paragraph preceding \citet[Equation 44]{Cowan_Katehakis_2020} and following the same argument provides the upper bound.
    
    \textit{Part (2).} Let $T$ be the time horizon and let $S_X \subset [T]$ be the time indices spent in the forced exploration phase, and $S_U = [T] \setminus S_X$ be the time indices spent in the UCB phase. Then, for any $i\in[K]$:
\begin{align*}
    N_i(T+1) &= \sum_{t = 1}^T \indicator{A_t = i} \\
    &= \sum_{t = 1}^T \indicator{A_t = i, t\in S_X} + \sum_{t = 1}^T \indicator{A_t = i, t\in S_U}  \\
    & \leq \lceil g(T)\rceil + \sum_{t = 1}^T \indicator{A_t = i, t\in S_U},
\end{align*}
where the inequality follows from the definition of the forced exploration phase: a forced pull of arm $i$ at round $t$ requires $N_i(t)<g(t)\le g(T)$, so there are at most $\lceil g(T)\rceil$ of them. No separate initialization cost arises, because the forced phase itself initializes every arm. Now, we focus on controlling the latter term in expectation for some suboptimal arm $i$, i.e. where $\Delta_i = m(P_1) - m(P_i)> 0$. 

Choose $\epsilon \in (0, \Delta_i)$:
    \begin{align}
        \expec{\sum_{t = 1}^T \indicator{A_t = i, t\in S_U}}  &= \sum_{t = 1}^T \prob{A_t = i, t\in S_U} \nonumber\\
        &= \sum_{t = 1}^T \prob{\indexel_i(t) \geq \indexel_1(t), A_t = i, t\in S_U} \nonumber\\
        &= \sum_{t = 1}^T \prob{\indexel_i(t) \geq \indexel_1(t), \indexel_1(t) \geq m(P_1) - \epsilon, A_t = i,  t\in S_U}\nonumber \\
        &\quad+\! \sum_{t = 1}^T \prob{\indexel_i(t) \geq \indexel_1(t), \indexel_1(t) < m(P_1) - \epsilon, A_t = i, t\in S_U}\nonumber \\
        &\leq \underbrace{\sum_{t = 1}^T \prob{\indexel_i(t) \geq m(P_i) +\Delta_i - \epsilon, A_t = i, t\in S_U}}_{\text{Term A}} \label{eq:termA}\\
        &\qquad + \underbrace{\sum_{t = 1}^T \prob{\hat{m}_1(t) < m(P_1) - \epsilon, t\in S_U}}_{\text{Term B}} \label{eq:termB},
    \end{align}
    where the last inequality follows from \Cref{rem:imp_control_termB} and $\hat{m}_1(t)$ is the empirical mean of arm $1$ from the $N_1(t)$ samples observed till time $t$. Next, we show that as $T$ grows larger, the dominant cost is solely of the forced exploration and the UCB phase only contributes regret constant in $T$. More formally, we show in \Cref{lem:terma_subexponential,lem:controlling_termb_subexponential} constant upper bounds on terms A and B respectively. 
    
    For each suboptimal arm $i$, fix $\epsilon=\epsilon_i\in(0,\Delta_i)$ (e.g.\ $\epsilon_i=\Delta_i/2$), and let $A_i$ and $B_i$ denote the right-hand sides of \Cref{eq:terma_expression} and of the second inequality in \Cref{eq:termb_expression}, with all constants computed for arm $i$. Using $\lceil g(T)\rceil\le g(T)+1$, we get $\expec{N_i(T+1)}\le g(T)+1+A_i+B_i$. Multiplying by $\Delta_i$ and summing over suboptimal arms proves the theorem with
\begin{equation}\label{eq:c_p_exact}
C_{\underline{P}}=\sum_{i:\Delta_i>0}\Delta_i\left(1+A_i+B_i\right),
\end{equation}
where
\begin{align*}
A_i&=\frac{1}{1 - e^{-C_\eta}} \Bigl( t_1 - 1 + \frac{\pi^2}{6} \Bigr) + \frac{1}{1 - e^{-C_L}} \Bigl( t_2 - 1 + \frac{\pi^2}{6} \Bigr) + t_0 - 1 + \frac{\pi^2}{3},\\
B_i&=\frac{1}{1 - e^{-C_\epsilon}} \Bigl( t_B - 1 + \frac{\pi^2}{6} \Bigr),
\end{align*}
with the constants specified in \Cref{lem:terma_subexponential,lem:controlling_termb_subexponential}.
\end{proofof}


\subsubsection{Term A}

\begin{lemma}[Controlling Term A for sub-exponential rewards.]\label[lemma]{lem:terma_subexponential}
    For any sub-optimal arm $i$, suppose that $P_i \in \mathcal{P}_{\nu_0, \alpha_0}$ for some $\nu_0, \alpha_0 > 0$. Then, Term A in \Cref{eq:termA} is bounded:
    \begin{equation}\label{eq:terma_expression}
        \textnormal{Term A} \le \frac{1}{1 - e^{-C_\eta}} \left( t_1 - 1 + \frac{\pi^2}{6} \right) + \frac{1}{1 - e^{-C_L}} \left( t_2 - 1 + \frac{\pi^2}{6} \right) + t_0 - 1 + \frac{\pi^2}{3},
    \end{equation}
    where $C_\eta, C_L > 0$ and $t_0, t_1, t_2 < \infty$ are explicit problem-dependent constants defined in the proof.
\end{lemma}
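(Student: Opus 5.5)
We convert Term A from a sum over rounds into a sum over the sample count of arm $i$, and then control the index deviation using \Cref{lem:index_mean_difference_bound}. Since the event $\{A_t=i\}$ increments $N_i$, each value $n=N_i(t)$ occurs on at most one round in which arm $i$ is pulled. Moreover, in the UCB phase we have $N_i(t)\ge g(t)$, so $n\ge1$. Hence
\[
\text{Term A}\le \sum_{n\ge1}\prob{\hat m_{i,n}+\tfrac{2\crad}{n}\bigl(X^{\max}_{i,n}-\hat m_{i,n}\bigr)+\sqrt{\tfrac{2\crad}{n}\hat\sigma^2_{i,n}}\ \ge\ m(P_i)+\Delta_i-\epsilon},
\]
where all statistics are computed from the fixed i.i.d.\ sequence $(X_{i,s})_{s\ge1}$. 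This removes all dependence on the algorithm. Set $\eta:=(\Delta_i-\epsilon)/3$. The event above then forces at least one of three events: (a) $\hat m_{i,n}\ge m(P_i)+\eta$; (b) $\tfrac{2\crad}{n}(X^{\max}_{i,n}-\hat m_{i,n})\ge\eta$; (c) $\tfrac{2\crad}{n}\hat\sigma^2_{i,n}\ge\eta^2$. We bound the probability of each separately and sum over $n$.

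For (a), the standard sub-exponential Chernoff bound for $P_i\in\mathcal P_{\nu_0,\alpha_0}$ gives $\prob{\hat m_{i,n}-m(P_i)\ge\eta}\le e^{-C_\eta n}$, where $C_\eta=\min\{\eta^2/(2\nu_0^2),\eta/(2\alpha_0)\}$. Summing gives at most $1/(1-e^{-C_\eta})$. To match the stated form, we bound the first $t_1-1$ terms by $1$ and the tail by a geometric series plus a $1/n^2$ slack; this is where the factor $(t_1-1+\pi^2/6)/(1-e^{-C_\eta})$ comes from.

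For (b), write $X^{\max}_{i,n}-\hat m_{i,n}\le \max_{s\le n}|X_{i,s}-m(P_i)|+|\hat m_{i,n}-m(P_i)|$. A union bound with the sub-exponential tail gives $\prob{\max_{s\le n}|X_{i,s}-m(P_i)|\ge c\,n}\le 2n\,e^{-c'n}$. This probability is at most $1/n^2$ once $n\ge t_0$. Bounding the first $t_0-1$ terms by one, and summing the $1/n^2$ tail of each of the two contributions, yields the $t_0-1+\pi^2/3$ term. The mean part is absorbed into (a) with a suitable constant.

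For (c), we use \Cref{lem:sample_variance_bound}, which bounds $\hat\sigma^2_{i,n}$ by the sample mean of a centered sub-exponential variable, such as $(X-m(P_i))^2$ or $|X-m(P_i)|$ suitably truncated, plus a max-deviation term. The event $\hat\sigma^2_{i,n}\ge n\eta^2/(2\crad)$ then requires a linear-in-$n$ upper deviation of that average. Concentration via a Bernstein/Laplace-transform argument gives $e^{-C_Ln}$ for $n\ge t_2$, which produces the $(t_2-1+\pi^2/6)/(1-e^{-C_L})$ term. If $\crad=0$, events (b) and (c) are empty and the bound holds trivially.

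We expect event (c) to be the main obstacle. The squared deviations $(X-m)^2$ are only sub-Weibull with parameter $1/2$, not sub-exponential, so a plain Chernoff bound fails. We would therefore first rely on \Cref{lem:sample_variance_bound} to reduce to sub-exponential quantities. Failing that, we would bound $\hat\sigma^2_{i,n}\le \max_s|X_{i,s}-m|\cdot\frac1n\sum_s|X_{i,s}-m|$ and control the two factors separately: the maximum at level $\sqrt n$, by a union bound with exponential tails, giving $ne^{-c\sqrt n}$, and the average at a constant level. The resulting bound is summable, though it is not of the clean geometric form, so the constants would have to be arranged accordingly.
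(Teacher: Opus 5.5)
Your proof is correct, but the bookkeeping differs from the paper's. You keep the constraint $A_t=i$ and use that $N_i(t)$ takes distinct values on the rounds where arm $i$ is pulled. Term A is therefore at most $\sum_{n\ge1}\prob{\indexel_{i,n}\ge m(P_i)+\Delta_i-\epsilon}$, which concerns only the i.i.d.\ stream of arm $i$.

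The paper instead discards $A_t=i$ and keeps only $t\in S_U$. It then union-bounds over every admissible count $s\in[\lceil g(t)\rceil,t-1]$ at each round, and needs $g(t)=\omega(\log t)$ to make sums such as $\sum_t e^{-C_\eta g(t)}$ and $\sum_t 2(t-1)e^{-c\,g(t)}$ converge. That is where the time thresholds $t_1,t_2,t_0$ come from.

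Your route buys three things:
\begin{itemize}
\item a constant that does not depend on $g$ at all (you can take $t_1=t_2=1$, since $\sum_{n\ge1}e^{-Cn}\le 1/(1-e^{-C})$);
\item it uses only $N_i(t)\ge1$ rather than $N_i(t)\ge g(t)$;
\item it makes clear that growth of $g$ is really needed only for Term B, where arm $1$ is not the pulled arm and pull counting is unavailable.
\end{itemize}
The paper's route treats Terms A and B uniformly. It bounds the larger count of all UCB rounds at which arm $i$'s index is high.

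Your three-way split at level $\eta=(\Delta_i-\epsilon)/3$ is equivalent to the paper's joint choice of $L=\mu_A+1$ and $q$ with $4\crad q+\sqrt{2\crad qL}<\eta_1/2$. That choice removes both correction terms of \Cref{lem:index_mean_difference_bound} on the single event $\{A_{i,n}\le L,\ M_{i,n}\le qn\}$.

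A few corrections to steps (b) and (c):
\begin{itemize}
\item \Cref{lem:sample_variance_bound} gives the product bound $\hat\sigma^2_{i,n}\le M_{i,n}A_{i,n}$, not an average plus a max term. This product is exactly what removes the sub-Weibull obstacle you anticipate, so your ``fallback'' is the actual argument.
\item The maximum need not be controlled at level $\sqrt n$. On $\{A_{i,n}\le L\}$, event (c) forces $M_{i,n}\ge n\eta^2/(2\crad L)$, a linear deviation with probability at most $2ne^{-cn}$ by \Cref{lem:max_dev_subexponential}(ii). This merges with the maximum event from (b) into one event $\{M_{i,n}>qn\}$ and produces exactly the $t_0-1+\pi^2/3$ term. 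Your $\sqrt n$ level also works for $n>(2\crad L/\eta^2)^2$, only with messier constants.
\item In (b), the mean part is not covered by the one-sided event (a). Instead use $|\hat m_{i,n}-m(P_i)|\le A_{i,n}\le M_{i,n}$, which gives $X^{\max}_{i,n}-\hat m_{i,n}\le 2M_{i,n}$.
\end{itemize}
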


\begin{proof}
Let $\Delta_i -\epsilon = \eta_1>0$. For every $t\in S_U$, so that $N_i(t)\ge g(t)>0$ and all empirical quantities below are well defined, consider the `bad' events
$$
E_t = \{\hat{m}_i(t) - m(P_i) \geq \eta_1/2\}.
$$
From the intersection of the event $\{\indexel_i(t) - m(P_i)\geq \eta_1\}$ with $E_t^c$, we have:
$$\indexel_i(t) - m(P_i)\geq \eta_1, \, \hat{m}_i(t) - m(P_i) < \eta_1/2 \implies \indexel_i(t) - \hat{m}_i(t) \geq \eta_1/2.$$

Thus:
\begin{align}
    \sum_{t = 1}^T \prob{\indexel_i(t) - m(P_i)\geq \eta_1, A_t = i, t\in S_U}
    &\leq \sum_{t=1}^T \prob{E_t, t\in S_U} \label{eq:sum_first_part}\\ 
    &\quad + \sum_{t=1}^T\prob{\indexel_i(t) - \hat{m}_i(t) \geq \eta_1/2, t\in S_U}.\label{eq:sum_second_part}
\end{align}

The first part of the summation above (\Cref{eq:sum_first_part}) is bounded above by a constant using standard exponential concentration inequalities. 

For $E_t = \{\hat{m}_i(t) - m(P_i) \ge \eta_1/2\}$, applying the sub-exponential concentration inequality for sample means \citep[Proposition 2.9, Eq. 2.18]{Wainwright2019} with $C_\eta = \frac{1}{2} \min\left\{ \frac{\eta_1^2}{4\nu_0^2}, \frac{\eta_1}{2\alpha_0} \right\} > 0$ yields:
$$
\sum_{t=1}^{T} \prob{E_t, t \in S_U} \le \sum_{t=1}^{T} \sum_{s=\lceil g(t) \rceil}^{t-1} \exp\left( -s C_\eta \right) < \frac{1}{1 - e^{-C_\eta}} \sum_{t=1}^{T} e^{-C_\eta g(t)}.
$$
Because $g(t) = \omega(\log t)$, there exists a timestep $t_1 = t_1(\eta_1, \nu_0, \alpha_0)$ such that $C_\eta g(t) \ge 2 \log t$ for all $t \ge t_1$, giving $e^{-C_\eta g(t)} \le 1/t^2$. Splitting the summation at $t_1$ gives:
$$
\sum_{t=1}^{T} \prob{E_t, t \in S_U} \le \frac{1}{1 - e^{-C_\eta}} \left( t_1 - 1 + \frac{\pi^2}{6} \right) = \mathcal{O}(1).
$$

Moreover, we show that the same holds for the latter term (\cref{eq:sum_second_part}) using \Cref{lem:index_mean_difference_bound,lem:sample_variance_bound}. More precisely, we will show that:
$$\sum_{t=1}^T\prob{\indexel_i(t) - \hat{m}_i(t) \geq \eta_1/2, t\in S_U} = \mathcal{O}(1).$$
To do so, let $X_{i,s}$ be the $s^{\text{th}}$ sample drawn from arm $i$ and define the following relevant quantities for $t$ with $N_i(t)\ge1$:
\begin{itemize}
    \item $M_i(t) = \max_{s\leq N_i(t)} \vert{}X_{i,s} - m(P_i)\vert{},$
    \item $A_i(t) = \frac{1}{N_i(t)} \sum_{s\leq N_i(t)}\vert{}X_{i,s} - m(P_i)\vert{},$
    \item $X^{\max}_i(t) = \max_{s\leq N_i(t)} X_{i,s}.$
    \item $\hat\sigma^2_i(t)=\frac{1}{N_i(t)}\sum_{s\le N_i(t)}\bigl(X_{i,s}-\hat m_i(t)\bigr)^2.$
\end{itemize}
We write $M_{i,n}$, $A_{i,n}$, $X^{\max}_{i,n}$, $\hat m_{i,n}$ and $\hat\sigma^2_{i,n}$ for the same quantities computed from the first $n$ samples of arm $i$, so that $M_i(t)=M_{i,N_i(t)}$, $A_i(t)=A_{i,N_i(t)}$, and so on.
By \Cref{lem:index_mean_difference_bound},
$\indexel_i(t)-\hat m_i(t)\le\frac{2\crad}{N_i(t)}\bigl(X^{\max}_i(t)-\hat m_i(t)\bigr)+\sqrt{\frac{2\crad}{N_i(t)}\hat\sigma^2_i(t)}$.
Since $|\hat m_i(t)-m(P_i)|\le A_i(t)\le M_i(t)$, the triangle inequality gives $X^{\max}_i(t)-\hat m_i(t)\le 2M_i(t)$, and
\Cref{lem:sample_variance_bound} with $\mu=m(P_i)$ gives $\hat\sigma^2_i(t)\le M_i(t)A_i(t)$. Hence:
$$
\indexel_i(t) - \hat{m}_i(t)\leq \frac{4\crad M_i(t)}{N_i(t)} +\sqrt{\frac{2\crad \cdot M_i(t)\cdot A_i(t)}{N_i(t)}}.
$$

We will choose parameters $L, q \in \reals$ such that on the paths where $\{A_i(t) \leq L, M_i(t) \leq q N_i(t)\}$, we have:
$$\indexel_i(t) - \hat{m}_i(t)< \frac{\eta_1}{2}.$$
Thus, we have:
\begin{align}
\sum_{t=1}^T\prob{\indexel_i(t) - \hat{m}_i(t) \geq \frac{\eta_1}{2}, t\in S_U} &\leq \sum_{t=1}^T\prob{A_i(t) > L, t\in S_U}\label{eq:mean_absolute_deviation} \\
&\qquad + \sum_{t=1}^T\prob{M_i(t) > q N_i(t), t\in S_U}.\label{eq:max_absolute_deviation}
\end{align}

We want to choose $L, q$ such that:
$$4\crad q + \sqrt{2\crad qL} <\eta_1/2.$$

For any $L>0$, we can choose $q$ to satisfy the inequality as follows:
$$q=\frac12\min \left\{1,\ \frac{\eta_1}{16\crad},\ \frac{\eta_1^2}{32\crad L}\right\} > 0,$$
with the convention $x/0=+\infty$, so that $q=\tfrac12$ when $\crad=0$. (The factor $\tfrac12$ makes the inequality strict.)

To choose a good $L$, we need to define an environment dependent constant. We define  $\mu_A = \expecover{P_i}{\vert{}X - m(P_i)\vert{}}$ and set
$$L = \mu_A + 1.$$
Note that $L$ is only an analysis constant and need not be known to the algorithm. Since $X_{i,s}, s\geq 1$ are independent, sub-exponential random variables, the variables $\vert{}X_{i,s} - m(P_i)\vert{}, s\geq 1$ are also sub-exponential, so $\exists \ \nu_A, \alpha_A >0$ such that the law of $\vert{}X_{i,1} - m(P_i)\vert{}$ lies in $\mathcal{P}_{\nu_A, \alpha_A}$. (These are distinct from the parameters $(\nu_0,\alpha_0)$ of the original reward law.) Note that $L$ is strictly above $\mu_A$, which allows us to control the mean deviation terms.

Finally, we show that the terms on the right-hand side in \Cref{eq:mean_absolute_deviation,eq:max_absolute_deviation} are $\mathcal{O}(1)$:

\begin{itemize}
    \item Consider $\sum_{t=1}^T \prob{A_i(t) > L, t \in S_U}$. By choice of $L = \mu_A + 1$, we have $\delta_L = L - \mu_A = 1$. Applying the sub-exponential concentration inequality for sums \citep[Eq. 2.18]{Wainwright2019} to the i.i.d. sequence $Y_{i,s} = \vert{}X_{i,s} - m(P_i)\vert{}$ with law in $\mathcal{P}_{\nu_A,\alpha_A}$:
    $$\prob{A_{i,s} > L} = \prob{\frac{1}{s} \sum_{j=1}^s (Y_{i,j} - \mu_A) > \delta_L} \le \exp\left( - s C_L \right),$$
    where $C_L = \frac{1}{2} \min\left\{ \frac{1}{\nu_A^2}, \frac{1}{\alpha_A} \right\} > 0$. Summing over $s \ge g(t)$ for $t \in S_U$ gives:
    $$\sum_{t=1}^T \prob{A_i(t) > L, t \in S_U} \le \sum_{t=1}^T \sum_{s=\lceil g(t) \rceil}^{t-1} e^{-s C_L} < \frac{1}{1 - e^{-C_L}} \sum_{t=1}^T e^{-C_L g(t)}.$$
    Since $g(t) = \omega(\log t)$, there exists a timestep $t_2 = t_2(\nu_A, \alpha_A)$ such that $C_L g(t) \ge 2 \log t$ for all $t \ge t_2$, giving $e^{-C_L g(t)} \le 1/t^2$. Splitting the summation at $t_2$ gives:
    $$\sum_{t=1}^T \prob{A_i(t) > L, t \in S_U} \le \frac{1}{1 - e^{-C_L}} \left( t_2 - 1 + \frac{\pi^2}{6} \right) = \mathcal{O}(1).$$

    \item By \Cref{lem:max_dev_subexponential}, for any $t$:
    $$\prob{M_i(t) > q N_i(t), t \in S_U} \le 2(t-1) \exp\left( - \frac{1}{2} \min\left\{ \frac{q^2 g(t)^2}{\nu_0^2}, \frac{q g(t)}{\alpha_0} \right\} \right).$$
    
    Since $g(t) = \omega(\log t)$, there exists a finite timestep $t_0$ such that for all $t \ge t_0$: 
    $$\frac{1}{2} \min\left\{ \frac{q^2 g(t)^2}{\nu_0^2}, \frac{q g(t)}{\alpha_0} \right\} \ge 3 \log t.$$ 
    
    Thus:
    $$2(t-1) \exp\left( - \frac{1}{2} \min\left\{ \frac{q^2 g(t)^2}{\nu_0^2}, \frac{q g(t)}{\alpha_0} \right\} \right) \le \frac{2}{t^2},$$
    yielding:
    $$\sum_{t=1}^T \prob{M_i(t) > q N_i(t), t \in S_U} \le t_0 - 1 + \sum_{t=t_0}^T \frac{2}{t^2} \le t_0 - 1 + \frac{\pi^2}{3} = \mathcal{O}(1).$$
\end{itemize}

Combining the bounds for all three terms yields the explicit result in the lemma statement.
\end{proof}

\subsubsection{Term B}
\begin{lemma}[Controlling Term B for sub-exponential rewards.]\label[lemma]{lem:controlling_termb_subexponential}
    Suppose the optimal arm is such that $P_1 \in \mathcal{P}_{\nu_1, \alpha_1}$ for some $\nu_1, \alpha_1 > 0$. Then, Term B in \Cref{eq:termB} is bounded above:
\begin{equation}\label{eq:termb_expression}
        \textnormal{Term B} < \frac{1}{1 - e^{-C_\epsilon}} \sum_{t=1}^{T} e^{-C_\epsilon g(t)} \leq \frac{1}{1 - e^{-C_\epsilon}} \left( t_B - 1 + \frac{\pi^2}{6} \right),
\end{equation}
    where $C_\epsilon = \frac{1}{2} \min\left\{ \frac{\epsilon^2}{\nu_1^2}, \frac{\epsilon}{\alpha_1} \right\}$ and $t_B = \inf\{t: \forall t' \geq t, C_\epsilon g(t') \ge 2 \log t'\}.$
\end{lemma}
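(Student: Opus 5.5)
The plan is to mirror the treatment of the event $E_t$ in the proof of \Cref{lem:terma_subexponential}. The key structural fact is that on $\{t\in S_U\}$ the algorithm is in the UCB phase, so every arm satisfies $N_1(t)\ge g(t)$. Since at most $t-1$ pulls have been made before round $t$, we also have $N_1(t)\le t-1$.

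First, we decouple the random pull count from the reward stream by a union bound over its possible values. Using the reward-stack representation of \Cref{sec:preliminaries}, $\hat m_1(t)=\hat m_{1,N_1(t)}$, so
\[
\prob{\hat m_1(t)<m(P_1)-\epsilon,\ t\in S_U}\le\sum_{s=\lceil g(t)\rceil}^{t-1}\prob{\hat m_{1,s}<m(P_1)-\epsilon}.
\]
Here $\hat m_{1,s}$ is the mean of $s$ i.i.d.\ draws from $P_1\in\mathcal P_{\nu_1,\alpha_1}$, independent of the policy, so the union bound loses nothing essential.

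Second, we apply the lower-tail sub-exponential concentration inequality for sample means \citep[Proposition 2.9, Eq.~2.18]{Wainwright2019}. This gives $\prob{\hat m_{1,s}-m(P_1)<-\epsilon}\le\exp(-sC_\epsilon)$ with $C_\epsilon=\tfrac12\min\{\epsilon^2/\nu_1^2,\epsilon/\alpha_1\}$. Bounding the finite sum over $s$ by the full geometric tail then yields
\[
\sum_{s\ge\lceil g(t)\rceil}e^{-sC_\epsilon}\le\frac{e^{-C_\epsilon g(t)}}{1-e^{-C_\epsilon}}.
\]
Summing over $t\le T$ gives the first inequality in \Cref{eq:termb_expression}. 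Strictness holds because the geometric series is truncated at $t-1$, or trivially for those $t$ whose range is empty.

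Third, we bound the series uniformly in $T$. Because $g(t)=\omega(\log t)$, we have $C_\epsilon g(t')-2\log t'\to\infty$, so the set of $t$ beyond which $C_\epsilon g(t')\ge2\log t'$ holds for all $t'\ge t$ is nonempty, and $t_B<\infty$. We split the sum at $t_B$. Each of the first $t_B-1$ terms satisfies $e^{-C_\epsilon g(t)}\le1$ since $g>0$, and the remaining terms are at most $1/t^2$, whose sum is at most $\pi^2/6$. This gives the second inequality.

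The only point needing care is the first step: the event involves the random count $N_1(t)$, so a fixed-$n$ concentration inequality cannot be applied directly. The union bound over $s\in[\lceil g(t)\rceil,t-1]$, justified by the forced-exploration guarantee $N_1(t)\ge g(t)$ on $S_U$, is what makes the rest routine.
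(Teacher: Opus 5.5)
Your proposal is correct and follows essentially the same route as the paper's proof. Both use a union bound over $N_1(t)\in[\lceil g(t)\rceil,t-1]$ on $\{t\in S_U\}$, the sub-exponential tail bound $e^{-sC_\epsilon}$, the geometric-series bound $e^{-C_\epsilon g(t)}/(1-e^{-C_\epsilon})$, and a split of the outer sum at $t_B$ using $g(t)=\omega(\log t)$.
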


\begin{proof}
    Recall that for any time $t \in S_U$, $N_1(t) \ge g(t)$. Taking a union bound over all possible values of $N_1(t)$:
      $$
      \text{Term B} = \sum_{t=1}^{T} \mathbb{P}\left[ \hat{m}_1(t) < m(P_1) - \epsilon, t \in S_U \right] \le \sum_{t=1}^{T} \sum_{s=\lceil g(t) \rceil}^{t-1} \mathbb{P}\left( \hat{m}_{1,s} < m(P_1) - \epsilon \right).
      $$
Applying the sub-exponential tail inequality \citep[Eq. 2.18]{Wainwright2019} to $s$ i.i.d. draws yields:
  $$
  \mathbb{P}\left( \hat{m}_{1,s} < m(P_1) - \epsilon \right) = \mathbb{P}\left( \frac{1}{s} \sum_{j=1}^{s} \big(m(P_1) - X_{1,j}\big) > \epsilon \right) \le \exp(-s C_\epsilon),
  $$
where $C_\epsilon = \frac{1}{2} \min\left\{ \frac{\epsilon^2}{\nu_1^2}, \frac{\epsilon}{\alpha_1} \right\} > 0$. Bounding the inner geometric series gives:
  $$\sum_{s=\lceil g(t) \rceil}^{t-1} e^{-s C_\epsilon} < \sum_{s=\lceil g(t) \rceil}^{\infty} e^{-s C_\epsilon} = \frac{e^{-C_\epsilon \lceil g(t) \rceil}}{1 - e^{-C_\epsilon}} \le \frac{e^{-C_\epsilon g(t)}}{1 - e^{-C_\epsilon}}.$$
  
\noindent Substituting this back into the outer summation:
  
  $$\text{Term B} < \frac{1}{1 - e^{-C_\epsilon}} \sum_{t=1}^{T} e^{-C_\epsilon g(t)}.$$
    
\noindent Since $g(t) = \omega(\log t)$, there exists a finite timestep $t_B = t_B(\epsilon, \nu_1, \alpha_1)$ such that for all $t \ge t_B$:
    $$
    C_\epsilon g(t) \ge 2 \log t \implies e^{-C_\epsilon g(t)} \le e^{-2 \log t} = \frac{1}{t^2}.
    $$
\noindent Splitting the outer summation at $t_B$:
  $$
  \sum_{t=1}^{T} e^{-C_\epsilon g(t)} = \sum_{t=1}^{\min(T,t_B - 1)} e^{-C_\epsilon g(t)} + \sum_{t=t_B}^{T} e^{-C_\epsilon g(t)} \le (t_B - 1) + \sum_{t=t_B}^{T} \frac{1}{t^2} < t_B - 1 + \frac{\pi^2}{6}.
  $$
\noindent Thus, we obtain:
  $$\text{Term B} \le \frac{1}{1 - e^{-C_\epsilon}} \left( t_B - 1 + \frac{\pi^2}{6} \right) = \mathcal{O}(1).$$

\end{proof}
\subsubsection{Proofs of \Cref{lem:logel_lower_bound,lem:index_mean_difference_bound}}\label{app:main_text_lemma_proofs}

\begin{proofof}{\Cref{lem:logel_lower_bound}. }
    Abbreviate $\hat m:=m(\hat P_n)$ and $\hat\sigma^2:=\sigma^2(\hat P_n)$. Let $Y_s = m - X_s$, $\bar{Y} = m - \hat m > 0$, and $b = X_{\max} - m > 0$, so $Y_s \geq -b$ for all $s \in \{1, \dots, n\}$. For any $\lambda > 0$ satisfying $\lambda b < 1$, setting $c' = \lambda b \in (0,1)$ guarantees $\lambda Y_s \geq -c' > -1$. This implies that $\lambda$ is feasible for the dual problem given in \Cref{eq:dual_logel_defn}. Applying the logarithmic inequality $\log(1+x) \geq x - \frac{x^2}{2(1-c')}$ for $x \geq -c'$ (see \Cref{lem:exponential_inequality_empirical} for a proof), we obtain:
    $$
    \logel(\hat{P}_n, m) \geq \frac{1}{n}\sum_{s=1}^n \log(1 + \lambda Y_s) \geq \lambda \bar{Y} - \frac{\lambda^2 \hat{v}_n}{2(1 - \lambda b)},
    $$
    where $\hat{v}_n = \frac{1}{n}\sum_{s=1}^n Y_s^2 = \frac{1}{n}\sum_{s=1}^n (m - X_s)^2 = \hat\sigma^2 + (m - \hat m)^2$ is the sample second moment about $m$.

    Setting $\lambda = \frac{\bar{Y}}{\hat{v}_n + b\bar{Y}}$ (which guarantees $\lambda b = \frac{b\bar{Y}}{\hat{v}_n + b\bar{Y}} < 1$) yields:
    \begin{align*}
        \logel(\hat{P}_n, m) &\geq \frac{\bar{Y}^2}{\hat{v}_n + b\bar{Y}} - \frac{\bar{Y}^2 \hat{v}_n}{2(\hat{v}_n + b\bar{Y})^2\left(\frac{\hat{v}_n}{\hat{v}_n + b\bar{Y}}\right)} \\
        &= \frac{\bar{Y}^2}{2(\hat{v}_n + b\bar{Y})}.
    \end{align*}
    Substituting $\hat{v}_n = \hat\sigma^2 + (m - \hat m)^2$, $\bar{Y} = m - \hat m$, and $b = X_{\max} - m$, the denominator expands as:
    $$\hat{v}_n + b\bar{Y} = \hat\sigma^2 + (m - \hat m)^2 + (X_{\max} - m)(m - \hat m) = \hat\sigma^2 + (m - \hat m)(X_{\max} - \hat m).$$
    
    Plugging this back yields \Cref{eq:logel_lb_on_feasiblemeans}.
\end{proofof}

\begin{proofof}{\Cref{lem:index_mean_difference_bound}. }
Fix arm $i$ and a round $t$ with $n=N_i(t)\ge1$. Throughout this proof we abbreviate
$\hat P=\hat P_i(t)$, $\hat m=\hat m_i(t)$, $\hat\sigma^2=\hat\sigma^2_i(t)$, $X_{\max}=X^{\max}_i(t)$ and
$U=\indexel_i(t)$. By \Cref{alg:el_ucb},
$U=\sup\{m\in\reals:\ n\,\logel(\hat P,m)\le\crad\}$.

If all $n$ observations coincide, then $X_{\max}=\hat m$ and $\hat\sigma^2=0$, and by Definition~\ref{def:logel}
\[
\logel(\hat P,m)=\begin{cases}0, & m=\hat m,\\ +\infty, & \text{otherwise.}\end{cases}
\]
Thus $U=\hat m$, and the claim holds with equality. Similarly, if $\crad=0$, the feasible set is $\{\hat m\}$, so $U=\hat m$; assume $\crad>0$ from now on. If the observations are not all equal, there are at least two distinct observations, so
$X_{\max}>\hat m$ and $\hat\sigma^2>0$. Since $\logel(\hat P,\hat m)=0\le\crad/n$, the mean $m=\hat m$ is always feasible,
which gives $U\ge\hat m$, i.e.\ $0\le U-\hat m$. Moreover, $\logel(\hat P,m)=+\infty$ for all $m\ge X_{\max}$, so
every feasible $m>\hat m$ lies in $(\hat m,X_{\max})$. For such $m$, \Cref{lem:logel_lower_bound}, applied to the
$n$ observed rewards of arm $i$, gives
\[
\frac{n(m-\hat m)^2}{2\bigl(\hat\sigma^2+(m-\hat m)(X_{\max}-\hat m)\bigr)}\ \le\ n\,\logel(\hat P,m)\ \le\ \crad,
\]
which implies
\begin{equation}\label{eq:implied_condn_on_feasible_means}
(m-\hat m)^2\ \le\ \frac{2\crad}{n}\Bigl(\hat\sigma^2+(m-\hat m)(X_{\max}-\hat m)\Bigr).
\end{equation}
By \Cref{lem:set_of_solns_to_relaxation} (with $m(\hat P_n),\sigma^2(\hat P_n)$ abbreviated as $\hat m,\hat\sigma^2$), every $m>\hat m$ satisfying
\eqref{eq:implied_condn_on_feasible_means} obeys
\[
m-\hat m\ \le\ \frac{\crad}{n}(X_{\max}-\hat m)+\sqrt{\Bigl(\frac{\crad}{n}(X_{\max}-\hat m)\Bigr)^2+\frac{2\crad}{n}\hat\sigma^2}.
\]
This bound is nonnegative, so it also covers the feasible point $m=\hat m$. Taking the supremum over feasible $m$ and using
$\sqrt{a^2+b}\le a+\sqrt b$ for $a,b\ge0$, we conclude
\[
U-\hat m\ \le\ \frac{2\crad}{n}(X_{\max}-\hat m)+\sqrt{\frac{2\crad}{n}\hat\sigma^2}.
\]
\end{proofof}
\subsubsection{Supporting results}\label{app:additional_lemmas_expected}

\begin{lemma}[Strong consistency of the index]\label[lemma]{lem:index_consistency}
Fix an arm $j$ and $\crad\ge0$, and suppose that $P_j$ has a finite mean $\mu=m(P_j)$. For $n\ge1$, let
$$
\indexel_{j,n}=\sup\bigl\{m\in\reals:\ n\,\logel(\hat P_{j,n},m)\le\crad\bigr\}
$$
denote the index computed from the first $n$ samples of arm $j$. Then $\indexel_{j,n}\to\mu$ almost surely as $n\to\infty$. Consequently, $\indexel_j(t)=\indexel_{j,N_j(t)}\to\mu$ almost surely on the event $\{N_j(t)\to\infty\}$. \label[lemma]{lem:consistency_of_index}
\end{lemma}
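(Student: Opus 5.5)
The plan is to sandwich the index between the empirical mean and the empirical mean plus the deviation bound of \Cref{lem:index_mean_difference_bound}, and then to show that this deviation vanishes almost surely using only $\E_{P_j}|X|<\infty$.

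\emph{Lower side.} By \Cref{rem:imp_control_termB} (equivalently, because $\logel(\hat P_{j,n},\hat m_{j,n})=0$), we have $\indexel_{j,n}\ge \hat m_{j,n}$ for every $n\ge1$. Since $\mu$ is finite, the strong law of large numbers gives $\hat m_{j,n}\to\mu$ a.s. Hence $\liminf_n \indexel_{j,n}\ge\mu$ a.s.

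\emph{Upper side.} \Cref{lem:index_mean_difference_bound} is a deterministic statement about the $n$ samples, so it applies verbatim to the first $n$ samples of arm $j$:
\[
\indexel_{j,n}-\hat m_{j,n}\le \frac{2\crad}{n}\bigl(X^{\max}_{j,n}-\hat m_{j,n}\bigr)+\sqrt{\frac{2\crad}{n}\hat\sigma^2_{j,n}}.
\]
As in the proof of \Cref{lem:terma_subexponential}, let $M_{j,n}=\max_{s\le n}|X_{j,s}-\mu|$ and $A_{j,n}=\frac1n\sum_{s\le n}|X_{j,s}-\mu|$. Then $X^{\max}_{j,n}-\hat m_{j,n}\le 2M_{j,n}$, and \Cref{lem:sample_variance_bound} with centre $\mu$ gives $\hat\sigma^2_{j,n}\le M_{j,n}A_{j,n}$. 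Therefore
\[
\indexel_{j,n}-\hat m_{j,n}\le 4\crad\frac{M_{j,n}}{n}+\sqrt{2\crad\,\frac{M_{j,n}}{n}\,A_{j,n}}.
\]
By the SLLN applied to $|X_{j,s}-\mu|$, which is integrable, $A_{j,n}\to\E|X-\mu|<\infty$ a.s. It therefore suffices to show $M_{j,n}/n\to0$ a.s. This is the one genuinely new step, and the only place integrability is used beyond the SLLN. Take $Y_s=|X_{j,s}-\mu|$ and fix $\varepsilon>0$. Then
\[
\sum_s\prob{Y_s>\varepsilon s}=\sum_s\prob{Y_1/\varepsilon>s}\le \E[Y_1]/\varepsilon<\infty,
\]
so by Borel--Cantelli $Y_s\le\varepsilon s$ for all large $s$. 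Consequently $\max_{s\le n}Y_s/n\le \max_{s\le s_0}Y_s/n+\varepsilon$, which is eventually at most $2\varepsilon$. Letting $\varepsilon\downarrow0$ along a countable sequence gives $M_{j,n}/n\to0$ a.s. The whole deviation bound then tends to $0$, so $\limsup_n\indexel_{j,n}\le\mu$ a.s. The case $\crad=0$ is trivial, since there $\indexel_{j,n}=\hat m_{j,n}$.

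\emph{Consequence along the algorithm.} On the full-measure event where $\indexel_{j,n}\to\mu$, take any path with $N_j(t)\to\infty$. Since $N_j(t)$ is a non-decreasing integer sequence diverging to infinity, $\indexel_j(t)=\indexel_{j,N_j(t)}\to\mu$ along that path. The reward-stack model of \Cref{sec:preliminaries} is what makes $\indexel_j(t)$ a function of the fixed sequence $(X_{j,s})_s$ evaluated at the random count $N_j(t)$, so no further argument about the adaptive sampling is needed.

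I expect the main obstacle to be the variance term, since no second moment is assumed. This is exactly why I route it through $\hat\sigma^2\le M_{j,n}A_{j,n}$ and the $M_{j,n}/n\to0$ argument, rather than through a variance SLLN.
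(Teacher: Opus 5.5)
Your proposal is correct and follows the paper's proof essentially step for step. It uses the same sandwich $0\le \indexel_{j,n}-\hat m_{j,n}\le 4\crad M_n/n+\sqrt{2\crad (M_n/n)A_n}$ from \Cref{lem:index_mean_difference_bound} and \Cref{lem:sample_variance_bound}, the SLLN for $\hat m_{j,n}$ and $A_n$, the same Borel--Cantelli argument via $\sum_n\prob{Y>\epsilon n}\le \E Y/\epsilon$ to obtain $M_n/n\to0$ almost surely, and the same transfer to $\indexel_j(t)=\indexel_{j,N_j(t)}$ on $\{N_j(t)\to\infty\}$.
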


\begin{proof}
Throughout, let
$$
M_n=\max_{s\le n}|X_{j,s}-\mu|
\qquad\text{and}\qquad
A_n=\frac1n\sum_{s\le n}|X_{j,s}-\mu|.
$$

We show upper bounds on the quantities in \Cref{lem:index_mean_difference_bound} below:
\begin{itemize}
    \item Since $|\hat m_{j,n}-\mu|\le A_n\le M_n$, the triangle inequality gives $X^{\max}_{j,n}-\hat m_{j,n}\le2M_n$.
    \item \Cref{lem:sample_variance_bound}, applied with this $\mu$, gives $\hat\sigma^2_{j,n}\le M_nA_n$.
\end{itemize}
Substituting both bounds into \Cref{lem:index_mean_difference_bound} yields
\begin{equation}\label{eq:index_consistency_sandwich}
0\ \le\ \indexel_{j,n}-\hat m_{j,n}\ \le\ 4\crad\,\frac{M_n}{n}+\sqrt{2\crad\,\frac{M_n}{n}\,A_n}.
\end{equation}
By the strong law of large numbers, $\hat m_{j,n}\to\mu$ and $A_n\to\expecover{P_j}{|X-\mu|}<\infty$ almost surely. Hence the claim follows from \eqref{eq:index_consistency_sandwich} once we show that $M_n/n\to0$ almost surely. To show this, we will demonstrate that each observation is eventually small compared to its index. Let $Y=|X_{j,1}-\mu|$; then $\E Y<\infty$ because $P_j$ has a finite mean. Fix $\epsilon>0$. The map $x\mapsto\mathbb{P}(Y>\epsilon x)$ is non-increasing, so $\mathbb{P}(Y>\epsilon n)\le\int_{n-1}^{n}\mathbb{P}(Y>\epsilon x)\,dx$ for every $n\ge1$. Since the $X_{j,n}$ are identically distributed, summing over $n$ gives
$$
\sum_{n\ge1}\mathbb{P}\bigl(|X_{j,n}-\mu|>\epsilon n\bigr)
=\sum_{n\ge1}\mathbb{P}(Y>\epsilon n)
\ \le\ \int_0^\infty\mathbb{P}(Y>\epsilon x)\,dx
\ =\ \frac{\E Y}{\epsilon}\ <\ \infty,
$$
where the equality is the identity $\E Z=\int_0^\infty\mathbb{P}(Z>x)\,dx$ for $Z=Y/\epsilon\ge0$.

By the Borel-Cantelli lemma, almost surely only finitely many of the events $\{|X_{j,n}-\mu|>\epsilon n\}$ occur. That is, there is an almost surely finite random index $N_\epsilon$ such that
\begin{equation}\label{eq:index_consistency_late_terms}
|X_{j,s}-\mu|\ \le\ \epsilon s\qquad\text{for all } s\ge N_\epsilon .
\end{equation}

Fix a sample path on which \eqref{eq:index_consistency_late_terms} holds, and let $n\ge N_\epsilon$. We split the maximum defining $M_n$ into early and late terms.
\begin{itemize}
    \item The early terms $s<N_\epsilon$ are finitely many, so their maximum $M_{N_\epsilon-1}$ is a finite number that does not depend on $n$ (with $M_0:=0$).
    \item Each late term $N_\epsilon\le s\le n$ satisfies $|X_{j,s}-\mu|\le\epsilon s\le\epsilon n$, by \eqref{eq:index_consistency_late_terms}.
\end{itemize}
Therefore $M_n\le\max\{M_{N_\epsilon-1},\,\epsilon n\}$, and so
$$
\frac{M_n}{n}\ \le\ \frac{M_{N_\epsilon-1}}{n}+\epsilon\qquad\text{for all } n\ge N_\epsilon .
$$
The numerator of the first term is fixed on this path, so $\limsup_n M_n/n\le\epsilon$.

Since this holds for any $\epsilon>0$, it holds for $\epsilon=1/k$ for each $k\in\naturals$. The intersection of these countably many probability-one events has probability one, and on it $\limsup_nM_n/n\le1/k$ for every $k$. Hence $M_n/n\to0$ almost surely.

Hence $\indexel_{j,n}-\hat m_{j,n}\to0$, and combined with $\hat m_{j,n}\to\mu$ this proves the first claim. The second claim follows because $\indexel_j(t)=\indexel_{j,N_j(t)}$ at every round $t$ with $N_j(t)\ge1$.
\end{proof}

\begin{lemma}[Characterizing solutions to \Cref{eq:implied_condn_on_feasible_means}]\label[lemma]{lem:set_of_solns_to_relaxation}
    Let $\mathcal{S}$ be the set of solutions to the inequalities:
    $$
    x^2 \leq \frac{2\crad}{n}\left( \sigma^2(\hat{P}_n) + x(X_{\max} -m(\hat{P}_n)) \right), \quad  x>0,
    $$
    where $\crad>0, n\in \naturals, \sigma^2(\hat{P}_n) > 0, X_{\max} >m(\hat{P}_n)$. 
    Then, the set $\mathcal{S}$ is equal to
    $$
    \left\{ x \in \mathbb{R} \;\middle\vert{}\; 0 < x \le \frac{\crad}{n}(X_{\max} - m(\hat{P}_n)) + \sqrt{\left(\frac{\crad}{n}(X_{\max} - m(\hat{P}_n))\right)^2 + \frac{2\crad}{n}\sigma^2(\hat{P}_n)} \right\}.
    $$
\end{lemma}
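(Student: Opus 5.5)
This is a quadratic inequality in $x$. Write $a:=\frac{2\crad}{n}(X_{\max}-m(\hat P_n))>0$ and $b:=\frac{2\crad}{n}\sigma^2(\hat P_n)>0$, so the constraint reads $q(x):=x^2-ax-b\le 0$ with $x>0$. The plan is to prove the two set inclusions directly from the factorization of $q$.

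\textbf{Roots of $q$.} Since $q$ is an upward-opening parabola with discriminant $a^2+4b>0$, it has two real roots
\[
x_\pm=\tfrac{a}{2}\pm\sqrt{\tfrac{a^2}{4}+b},
\]
and $q(x)\le0$ exactly when $x_-\le x\le x_+$.

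\textbf{Sign of the roots.} Because $b>0$, we have $\sqrt{a^2/4+b}>a/2$. Hence $x_-<0<x_+$, equivalently $x_-x_+=-b<0$.

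\textbf{Intersecting with $x>0$.} Intersecting $[x_-,x_+]$ with $\{x>0\}$ gives exactly $(0,x_+]$. Substituting $a/2=\frac{\crad}{n}(X_{\max}-m(\hat P_n))$ and $b=\frac{2\crad}{n}\sigma^2(\hat P_n)$ shows that $x_+$ is precisely the displayed upper endpoint. Both inclusions follow at once, since every equivalence used is reversible.

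There is no real obstacle here. The only point needing care is that strict positivity of $\sigma^2(\hat P_n)$, i.e.\ $b>0$, forces $x_-<0$, so the lower constraint is just $x>0$ and not $x\ge x_-$. It is worth noting that the hypothesis $X_{\max}>m(\hat P_n)$ is not actually needed for the argument (only $b>0$ is), but it holds in the application anyway.
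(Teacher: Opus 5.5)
Your proposal is correct and follows essentially the same route as the paper: solve $x^2-Ax-B=0$ by the quadratic formula, use $B>0$ to get $r_1<0<r_2$, and intersect $[r_1,r_2]$ with $(0,\infty)$ to obtain $(0,r_2]$. Your remark that $X_{\max}>m(\hat P_n)$ is not needed, and that only $\sigma^2(\hat P_n)>0$ matters, is also right; this matches the paper's use of $\sqrt{A^2+4B}>|A|$, which holds whatever the sign of $A$.
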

\begin{proof}
Consider the quadratic inequality: 
$$
x^2 - Ax - B \le 0,
$$
where the parameters are:
$$A = \frac{2\crad}{n}(X_{\max} - m(\hat{P}_n)), \quad B = \frac{2\crad}{n}\sigma^2(\hat{P}_n).$$
To find the critical points, we find the solutions to $x^2 - Ax - B = 0$ using the quadratic formula:
$$r_1 = \frac{A - \sqrt{A^2 + 4B}}{2}, \quad r_2 = \frac{A + \sqrt{A^2 + 4B}}{2}.$$
Because $\sigma^2(\hat P_n) > 0$ and $\crad, n > 0$, it follows that $B > 0$. This implies $\sqrt{A^2 + 4B} > \vert{}A\vert{}$, guaranteeing:
$$r_1 < 0 < r_2.$$
Finally, we intersect the feasible set with the constraint $x>0$:
$$x \in [r_1, r_2] \cap (0, \infty) \implies x \in (0, r_2].$$
Expanding $A$ and $B$ in the expression for $r_2$ yields:
$$r_2 = \frac{\crad}{n}(X_{\max} - m(\hat{P}_n)) + \sqrt{\left(\frac{\crad}{n}(X_{\max} - m(\hat{P}_n))\right)^2 + \frac{2\crad}{n}\sigma^2(\hat{P}_n)}.$$
\end{proof}

\begin{lemma}[Bounding the sample variance]\label[lemma]{lem:sample_variance_bound}
    Let $X_1, X_2, \dots X_n\in\reals$ have empirical distribution $\hat P_n$, so that their sample mean and variance are $m(\hat P_n)$ and $\sigma^2(\hat P_n)$, and let $\mu\in\reals$. Then, we have:
    $$
    \sigma^2(\hat P_n) \leq M_n\cdot A_n,
    $$
    where $M_n = \max_{s}|X_s - \mu|$ and $A_n = \frac{1}{n} \sum_{s=1}^n |X_s - \mu|.$ 
\end{lemma}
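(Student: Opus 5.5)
The plan is to use the fact that the empirical variance is the smallest second moment about any center. First I will show that $\sigma^2(\hat P_n)\le \frac1n\sum_{s=1}^n (X_s-\mu)^2$ for every $\mu\in\reals$. Then I will bound each squared deviation by the maximal deviation times the absolute deviation. Both steps are elementary, so I do not expect a real obstacle; the only point needing care is to state the variance-minimization identity correctly.

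\textbf{Step 1 (recentering).} I will use the bias--variance decomposition
\[
\frac1n\sum_{s=1}^n (X_s-\mu)^2=\sigma^2(\hat P_n)+\bigl(m(\hat P_n)-\mu\bigr)^2 .
\]
To verify it, expand $X_s-\mu=(X_s-m(\hat P_n))+(m(\hat P_n)-\mu)$. The cross term vanishes because $\sum_s (X_s-m(\hat P_n))=0$. Dropping the nonnegative square gives $\sigma^2(\hat P_n)\le\frac1n\sum_s (X_s-\mu)^2$.

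\textbf{Step 2 (splitting the square).} For each $s$, I will write
\[
(X_s-\mu)^2=|X_s-\mu|\cdot|X_s-\mu|\le M_n\,|X_s-\mu|,
\]
using the definition of $M_n$ as the largest of the $|X_s-\mu|$. Averaging over $s$ gives $\frac1n\sum_s (X_s-\mu)^2\le M_n\cdot\frac1n\sum_s|X_s-\mu| = M_n A_n$. Combining this with Step 1 yields $\sigma^2(\hat P_n)\le M_nA_n$, which is the claim.

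No sign, support or non-degeneracy assumptions are needed, and the argument holds for every real $\mu$. In particular it applies with $\mu=m(P_i)$, which is how the bound is used in \Cref{lem:terma_subexponential} and \Cref{lem:index_consistency}.
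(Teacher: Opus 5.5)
Your proof is correct and takes essentially the same route as the paper. The paper also uses the identity $\frac1n\sum_s (X_s-\mu)^2=\sigma^2(\hat P_n)+(\mu-m(\hat P_n))^2$, then bounds $\frac1n\sum_s Y_s^2\le(\max_s Y_s)\cdot\frac1n\sum_s Y_s$ with $Y_s=|X_s-\mu|\ge 0$.
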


\begin{proof}
    Let $Y_s = |X_s-\mu|$ and observe:
    \begin{align*}
        \frac{1}{n} \sum_{s\in[n]} Y_s^2 &= \frac{1}{n}\sum_{s\in[n]} (X_s - \mu)^2 \\
        &= \frac{1}{n} \sum_{s\in[n]} \left[ (X_s-m(\hat P_n))^2 + (\mu-m(\hat P_n))^2 -2(X_s-m(\hat P_n))(\mu-m(\hat P_n))\right] \\
        &= \sigma^2(\hat P_n) + (\mu-m(\hat P_n))^2.
    \end{align*}
    Thus, we have $\sigma^2(\hat P_n) \leq \frac{1}{n} \sum_{s\in[n]} Y_s^2 \leq (\max_{s} Y_s)\bigl(\frac{1}{n}\sum_{s\in[n]} Y_s\bigr),$ where the latter inequality follows from the non-negativity of $Y_s$.
\end{proof}

\begin{lemma}[Maximum deviation of sub-exponential samples]
\label[lemma]{lem:max_dev_subexponential}
    Suppose $P_i \in \mathcal{P}_{\nu_0,\alpha_0}$, let $q>0$, and for $n\ge1$ let $M_{i,n} = \max_{s\leq n} \vert{}X_{i, s} - m(P_i)\vert{}$, so that $M_i(t)=M_{i,N_i(t)}$.
    \begin{enumerate}
        \item[(i)] For any $t$ (recall that $N_i(t) \ge g(t)$ whenever $t \in S_U$):
    $$
    \mathbb{P}\left[ M_i(t) > q N_i(t), t \in S_U \right] \le 2(t-1) \exp\left( - \frac{1}{2} \min\left\{ \frac{q^2 g(t)^2}{\nu_0^2}, \frac{q g(t)}{\alpha_0} \right\} \right).
    $$
        \item[(ii)] For any $n\ge1$:
    $$
    \mathbb{P}\left[ M_{i,n} > q n \right] \le 2n \exp\left( - \frac{1}{2} \min\left\{ \frac{q^2 n^2}{\nu_0^2}, \frac{q n}{\alpha_0} \right\} \right)\le2n\,e^{-c_qn},\ c_q = \frac{1}{2}\min\left\{\frac{q^2}{\nu_0^2}, \frac{q}{\alpha_0}\right\}.
    $$
    \end{enumerate}
\end{lemma}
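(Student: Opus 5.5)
Part (ii) is the core estimate, so I will prove it first and then deduce part (i) from it.

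For part (ii), I take a union bound over the $n$ samples:
\[
\mathbb{P}[M_{i,n}>qn]\le\sum_{s=1}^n\mathbb{P}\bigl[|X_{i,s}-m(P_i)|>qn\bigr]=n\,\mathbb{P}\bigl[|X_{i,1}-m(P_i)|>qn\bigr].
\]
I then apply the one-sample sub-exponential tail bound for $P_i\in\mathcal{P}_{\nu_0,\alpha_0}$ \citep[Proposition 2.9]{Wainwright2019}. This gives
\[
\mathbb{P}\bigl[|X-m(P_i)|>u\bigr]\le 2\exp\Bigl(-\tfrac12\min\{u^2/\nu_0^2,\,u/\alpha_0\}\Bigr).
\]
It comes from a Chernoff bound with $\lambda=\min\{u/\nu_0^2,1/\alpha_0\}$ on each tail. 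Setting $u=qn$ yields the first inequality.

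For the second inequality in (ii), I use $n\ge1$, so $n^2\ge n$. Then
\[
\min\Bigl\{\frac{q^2n^2}{\nu_0^2},\frac{qn}{\alpha_0}\Bigr\}\ge n\min\Bigl\{\frac{q^2}{\nu_0^2},\frac{q}{\alpha_0}\Bigr\}=2c_q n,
\]
which gives $2n\,e^{-c_qn}$.

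For part (i), I decompose on the value of $N_i(t)$. On $\{t\in S_U\}$ we have $N_i(t)\ge g(t)$, and also $N_i(t)\le t-1$. Since $M_i(t)=M_{i,N_i(t)}$ and the reward stack $(X_{i,s})_s$ is fixed in advance,
\[
\mathbb{P}[M_i(t)>qN_i(t),\,t\in S_U]\le\sum_{n=\lceil g(t)\rceil}^{t-1}\mathbb{P}[M_{i,n}>qn].
\]
Here the dependence between $N_i(t)$ and the samples is harmless, because we bound by the union over all possible $n$.

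The step needing care is that the resulting bound has the form $2(t-1)\exp(\cdot)$ rather than a sum of terms each carrying a factor $n$. I will not take a union bound over $n$ of (ii). Instead, I use the monotonicity of $M_{i,n}$ in $n$. On the event, $M_i(t)\le M_{i,t-1}$ and $qN_i(t)\ge qg(t)$, so the event is contained in $\{M_{i,t-1}>qg(t)\}$. A union bound over the $t-1$ samples together with the single-sample tail at level $u=qg(t)$ then gives
\[
2(t-1)\exp\Bigl(-\tfrac12\min\Bigl\{\frac{q^2g(t)^2}{\nu_0^2},\frac{qg(t)}{\alpha_0}\Bigr\}\Bigr),
\]
which is exactly the claimed bound. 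The case $t=1$ is trivial, since $S_U$ cannot contain $t=1$ when $N_i(1)=0<g(1)$, and the bound is then vacuous.
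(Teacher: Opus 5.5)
Your proposal is correct and follows essentially the same route as the paper. For (i) you reduce, on $\{t\in S_U\}$, to the unconditional event $\{M_{i,t-1}>q\,g(t)\}$ via $M_i(t)\le M_{i,t-1}$ and $qN_i(t)\ge q\,g(t)$, then take a union bound over the $t-1$ samples with the two-sided sub-exponential tail; for (ii) you use the same union bound at level $qn$ together with $n^2\ge n$. The intermediate sum over $n\in[\lceil g(t)\rceil,t-1]$ that you write down is valid but never used, so it can be dropped.
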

 
\begin{proof}
    (i) Since $N_i(t) \le t-1$ at any time, and $N_i(t) \ge g(t)$ for $t \in S_U$, on the event $\{M_i(t) > qN_i(t), t\in S_U\}$ we have $\max_{s \le t-1} \vert{}X_{i,s} - m(P_i)\vert{} \ge M_i(t) > q N_i(t) \ge q\, g(t)$. Hence
    $$\{M_i(t) > qN_i(t),\, t\in S_U\} \subseteq \Big\{\max_{s\le t-1} \vert{}X_{i,s} - m(P_i)\vert{} > q\, g(t)\Big\}.$$
    The right-hand side concerns the unconditional i.i.d.\ stream, so no conditioning on the (reward-dependent) count $N_i(t)$ is needed. A union bound over $s\in[t-1]$ together with the two-sided sub-exponential tail bound (cf. \citet[Proposition 2.9]{Wainwright2019}) applied to $X_{i,s}-m(P_i)$ with $P_i \in \mathcal{P}_{\nu_0,\alpha_0}$ gives
    $$\mathbb{P}\Big[\max_{s \le t-1} \vert{}X_{i,s} - m(P_i)\vert{} > q\, g(t)\Big] \le 2(t-1) \exp\left( - \frac{1}{2} \min\left\{ \frac{q^2 g(t)^2}{\nu_0^2}, \frac{q g(t)}{\alpha_0} \right\} \right).$$
    (ii) The same union bound over $s\in[n]$, at level $qn$, gives the first inequality; the second uses $q^2n^2\ge q^2n$ for $n\ge1$.
\end{proof}

\subsection{Regret Tails}\label{app:tails}
In this section, we prove the results appearing in \Cref{sec:tails}. First, we prove \Cref{thm:tail}, which gives a finite time tail regret bound for \texttt{EL-F}. Following this, we analyse the tails of our algorithm by showing a lower bound (\Cref{thm:simchi_lb}) and the optimality of our algorithm on Gaussian instances (\Cref{cor:gaussian}).

\subsubsection{Upper bound on regret tails}\label{app:regret_tail_upper_bound}

\begin{proofof}{\Cref{thm:tail}. }
Let $x > \Delta\lceil g(T)\rceil$. Let $N_i^U(T) = \sum_{t=1}^{T} \indicator{A_t=i, t\in S_U}$ denote the number of pulls of arm $i$ in the UCB phase. Splitting the pulls into forced and UCB pulls, 
$$R(T) = R_{\mathrm{forced}}(T) + \sum_{i: \Delta_i > 0} \Delta_i N_i^U(T),$$ 
where $R_{\mathrm{forced}}(T) \le \Delta \lceil g(T)\rceil$. Decomposing the total regret yields
\begin{align}
    \prob{R(T) \geq x} &\le \prob{\sum_{i: \Delta_i > 0} N_i^U(T)\Delta_i \geq x - \Delta\lceil g(T)\rceil} \nonumber\\
    &\leq \prob{\bigcup_{i: \Delta_i > 0}\left\{N_i^U(T) \Delta_i \geq \frac{(x-\Delta\lceil g(T)\rceil)\Delta_i}{\Delta}\right\}}\nonumber \\
    &\leq \sum_{i: \Delta_i > 0} \prob{N_i^U(T) \geq \frac{x-\Delta\lceil g(T)\rceil}{\Delta}}.\label{eq:tail_bound_suboptimalpulls}
\end{align}
Let 
$$
k_x = \left\lceil\frac{x-\Delta\lceil g(T)\rceil}{\Delta}\right\rceil \ge 1.
$$ 
We will control the summand in \Cref{eq:tail_bound_suboptimalpulls} for any sub-optimal arm $i$. Observe that since $N_i^U(T)$ is an integer, the summand in \Cref{eq:tail_bound_suboptimalpulls} equals $\prob{N_i^U(T)\ge k_x}$. 


Assume for now $k_x\ge2$, in which case $g(k_x)\le k_x-1$ by assumption. Observe that $\{N_i^U(T)\ge k_x\}$ implies that there have been at least $k_x$ instances in the UCB phase such that the index of the sub-optimal arm is at least that of the optimal arm. However, the sample counts at these instances are random. We will proceed by showing that this event is contained inside a union over possible sample counts, i.e. 
\begin{equation}\label{eq:containment_tail_pulls}
\{N_i^U(T)\ge k_x\}\ \subseteq\ \Big\{\exists\, n\ge k_x-1,\ s\ge\lceil g(k_x)\rceil:\ \indexel_{i,n}\ge\indexel_{1,s}\Big\},
\end{equation}
where $\indexel_{j,n}$ denotes the index of arm $j$ computed from its first $n$ samples.

To show the set inclusion in \Cref{eq:containment_tail_pulls}, let $\tau$ be the round of the $k_x$-th pull of arm $i$ in the UCB phase. Then $\tau\in S_U$, $A_\tau=i$ and $\tau\ge k_x$. Moreover, we have the following observations:
\begin{itemize}
    \item $N_i(\tau)\ge k_x-1$, by definition of $\tau$;
    \item $N_1(\tau)\ge g(\tau)\ge g(k_x)$, since $\tau\in S_U$ and $g$ is non-decreasing;
    \item $\indexel_i(\tau)\ge\indexel_1(\tau)$, since arm $i$ maximizes the index at round $\tau$.
\end{itemize}
These observations and a union over all possible values of $N_i(\tau)$ and $N_1(\tau)$ show the desired set inclusion in \Cref{eq:containment_tail_pulls}. 

Next, we would like to analyze the event on the right hand side of \Cref{eq:containment_tail_pulls} in a manner similar to that in the proof of \Cref{thm:expected_regret}. We will demonstrate that this event is itself contained inside the union of four other events (defined in \cref{eq:event_containment_one,eq:event_containment_two}). To set up these events, we demonstrate a key inequality that will be used to control the index deviation of the sub-optimal arm.

Let $A_{i, n} = \frac{1}{n}\sum_{j=1}^n \vert{}X_{i, j} - m(P_i)\vert{}$ and $M_{i, n} = \max_{1 \le j \le n} \vert{}X_{i, j} - m(P_i)\vert{}$. We claim
\begin{equation}\label{eq:index_upper_bound_tailproof}
\indexel_{i, n} - \hat{m}_{i, n} \le \frac{4\crad M_{i, n}}{n} + \sqrt{\frac{2\crad M_{i, n} A_{i, n}}{n}},
\end{equation}
where $\hat m_{i,n}$ is the empirical mean from the first $n$ samples of arm $i$. The claim follows from applying \Cref{lem:index_mean_difference_bound}, an upper bound on the sample variance term (\Cref{lem:sample_variance_bound}) and finally, applying the triangle inequality to the range term as $\max_{j\leq n}X_{i,j}-\hat m_{i,n}\le 2M_{i,n}$. We note that since $\indexel_{j,n}$ equals $\indexel_j(t)$ at any round $t$ with $N_j(t)=n$, \Cref{lem:index_mean_difference_bound} applies to $\indexel_{j,n}$ for every $n\ge1$. 

To define the events, fix $\epsilon \in (0, \Delta_i)$, and define $\eta_1 = \Delta_i - \epsilon > 0$, $L = \expecover{P_i}{|X-m(P_i)|} + 1$ and $q = \frac12\min\left\{1,\ \frac{\eta_1}{16\crad},\ \frac{\eta_1^2}{32\crad L}\right\}$ (with $x/0=+\infty$). Then
\begin{align}
\mathcal{E}_{1, s} &= \left\{ \hat{m}_{1, s} < m(P_1) - \epsilon \right\}, \quad
&&\mathcal{E}_{2, n} = \left\{ \hat{m}_{i, n} - m(P_i) \ge \frac{\eta_1}{2} \right\}, \label{eq:event_containment_one}\\
\mathcal{E}_{3, n} &= \left\{ A_{i, n} > L \right\}, \quad
&&\mathcal{E}_{4, n} = \left\{M_{i, n} > q n \right\}.\label{eq:event_containment_two}
\end{align}
We argue that
$$
\Big\{\indexel_{i,n}\ge\indexel_{1,s}\Big\} \subseteq \mathcal{E}_{1, s}\cup\mathcal{E}_{2, n} \cup\mathcal{E}_{3, n} \cup \mathcal{E}_{4, n}.
$$
To see this, observe that if none of the four events occurs, then $\indexel_{1,s}\ge\hat m_{1,s}\ge m(P_1)-\epsilon$, while, by \Cref{eq:index_upper_bound_tailproof} and the choice of $L$ and $q$, $\indexel_{i,n}-m(P_i)=(\indexel_{i,n}-\hat m_{i,n})+(\hat m_{i,n}-m(P_i))<\eta_1$, i.e.\ $\indexel_{i,n}<m(P_i)+\eta_1=m(P_1)-\epsilon\le\indexel_{1,s}$. Hence $\indexel_{i,n}\ge\indexel_{1,s}$ does not hold. Equivalently, whenever $\indexel_{i,n}\ge\indexel_{1,s}$, at least one of the four events occurs.
 
\noindent Finally, a union over sample counts $s \ge \lceil g(k_x)\rceil$ and $n \ge k_x - 1$ (note $k_x-1\ge1$) gives
$$
\Big\{\exists\, n\ge k_x-1,\ s\ge\lceil g(k_x)\rceil:\ \indexel_{i,n}\ge\indexel_{1,s}\Big\} \subseteq \bigcup_{s\geq \lceil g(k_x)\rceil} \bigcup_{n\geq k_x-1} \left(\mathcal{E}_{1, s}\cup\mathcal{E}_{2, n} \cup\mathcal{E}_{3, n} \cup \mathcal{E}_{4, n}\right),
$$
which by \Cref{eq:containment_tail_pulls} and a union bound implies
\begin{align}
    \prob{N_i^U(T) \ge k_x} &\le \sum_{s = \lceil g(k_x) \rceil}^\infty \prob{\mathcal{E}_{1, s}} + \sum_{n = k_x - 1}^\infty \left( \prob{\mathcal{E}_{2, n}} + \prob{\mathcal{E}_{3, n}} + \prob{\mathcal{E}_{4, n}} \right) \label{eq:tail_union_bound} \\
    &\leq \frac{e^{-C_\epsilon g(k_x)}}{1 - e^{-C_\epsilon}} + \frac{e^{-C_\eta (k_x - 1)}}{1 - e^{-C_\eta}} + \frac{e^{-C_L (k_x - 1)}}{1 - e^{-C_L}} +  \frac{2k_x e^{-c_q (k_x - 1)}}{(1 - e^{-c_q})^2}\label{eq:summation_exponential},
\end{align}
where 
$$C_\epsilon = \frac{1}{2}\min\left\{\frac{\epsilon^2}{\nu_1^2}, \frac{\epsilon}{\alpha_1}\right\}, C_\eta = \frac{1}{2}\min\left\{\frac{\eta_1^2}{4\nu_0^2}, \frac{\eta_1}{2\alpha_0}\right\}, C_L = \frac12\min\left\{\frac{1}{\nu_A^2}, \frac{1}{\alpha_A}\right\}$$ 
and $c_q>0$ as defined in \Cref{lem:max_dev_subexponential}(ii). The first three terms in \Cref{eq:summation_exponential} follow from concentration for sums of sub-exponential random variables with appropriate parameters such that $\textnormal{law}(X_{1,s}) \in \mathcal{P}_{\nu_1, \alpha_1}, \textnormal{law}(X_{i,n}) \in \mathcal{P}_{\nu_0, \alpha_0}$ and $\textnormal{law}(|X_{i,n} - m(P_i)|) \in \mathcal{P}_{\nu_A, \alpha_A}$. The last term in \Cref{eq:summation_exponential} follows from \Cref{lem:max_dev_subexponential} and the following inequality
$$
\sum_{n\ge k-1}n r^n=r^{k-1}\frac{(k-1)(1-r)+r}{(1-r)^2}\le\frac{k\,r^{k-1}}{(1-r)^2},
$$ 
for $r\in(0,1)$ and $k\ge2$.
 
 
 
\noindent Simplifying \Cref{eq:summation_exponential} using $g(k_x) \le k_x - 1$ for $k_x \ge 2$, we obtain the following explicit finite-time bound
\begin{align}
\prob{N_i^U(T) \ge k_x}
&\le \left( \frac{1}{1 - e^{-C_\epsilon}} + \frac{1}{1 - e^{-C_\eta}} + \frac{1}{1 - e^{-C_L}} + \frac{2}{(1 - e^{-c_q})^2} \right) k_x e^{-C_{\min,i} g(k_x)} \nonumber \\
&\le C_{0,i}\, k_x e^{-C_{\min,i} g(k_x)}\label{eq:tail_upper_bound_armi},
\end{align}
where $C_{\min,i} = \min\{C_\epsilon, C_\eta, C_L, c_q\} > 0$ and 
\begin{equation}\label{eq:c0_expression}
C_{0,i} := \max\Big\{\frac{1}{1 - e^{-C_\epsilon}} + \frac{1}{1 - e^{-C_\eta}} + \frac{1}{1 - e^{-C_L}} + \frac{2}{(1 - e^{-c_q})^2},\ e^{C_{\min,i} g(1)}\Big\} < \infty
\end{equation} are instance-dependent constants depending on the arm $i$ but independent of $T, k,$ and $x$. 

Notice that since we include the factor $e^{C_{\min,i} g(1)}$ in $C_{0,i}$ in \Cref{eq:c0_expression}, if $k_x=1$, we obtain the trivial bound $\prob{N_i^U(T)\ge k_x}\le 1$, ensuring \Cref{eq:tail_upper_bound_armi} holds for all $k_x$.

Set $C_0=\max_{i:\Delta_i>0}C_{0,i}$ and $C_{\min}=\min_{i:\Delta_i>0}C_{\min,i}$ and sum over the at most $K-1$ suboptimal arms (note $C_{0,i}\le C_0$, $C_{\min,i}\ge C_{\min}$ and $g>0$) to obtain the key finite time-result:
 
\begin{equation*}
\prob{R(T) \ge x} \le \sum_{i: \Delta_i > 0} \prob{N_i^U(T) \ge k_x} \le (K-1) C_0 \, k_x \exp\left( - C_{\min}\, g\left( k_x \right) \right).
\end{equation*}
Taking the logarithm proves part (i).

\noindent For part (ii), let $g$ be slowly varying and $x_T=\omega(g(T))$. Then $x_T>\Delta\lceil g(T)\rceil$ for all sufficiently large $T$. Dividing the result of (i) by $g(x_T) > 0$ and taking the limit as $T\to \infty$ yields
\begin{align*}
    \limsup_{T\to \infty}\frac{\log \mathbb{P}(R(T) \ge x_T)}{g(x_T)} &\le \limsup_{T\to \infty}\left(\frac{\log ((K-1) C_0)}{g(x_T)} + \frac{\log k_{x_T}}{g(x_T)} - C_{\min} \frac{g\left( k_{x_T} \right)}{g(x_T)}\right) \\
    &\leq - C_{\min}\liminf_{T\to \infty}\frac{g(k_{x_T})}{g(x_T)},
\end{align*}
where the first term in the penultimate inequality trivially goes to zero, while the second term going to zero follows from the observation that since $k_{x_T}\le x_T/\Delta+1$, we have $\log k_{x_T} = \mathcal{O}(\log x_T)$, and $\log x_T=o(g(x_T))$ because $g(t) = \omega(\log t).$ Finally, from monotonicity, we obtain
$$
g\left( k_{x_T} \right) \ge g\left( \frac{x_T - \Delta \lceil g(T)\rceil}{\Delta} \right),
$$
which yields the desired result upon using $x_T = \omega(g(T))$ and slow variation.
\end{proofof}

\subsubsection{Lower bound}\label{app:lower_bound_tails}

\begin{proofof}{\Cref{thm:simchi_lb}. }
To show the lower bound, we proceed using a standard change of measure approach. Using the data processing inequality, we will lower bound the $\kl$ divergence between the distributions of the history induced by the policy under the two instances. 

Let $\mathbb{P}$ and $\mathbb{Q}$ denote the laws of the history $H_T=(A_1,X_1,\dots,A_T,X_T)$ induced by $\pi$ on $\underline{P}$ and $\underline{Q}$, respectively, so that $\mathbb{P}=\mathbb{P}_{\underline{P}}$. Let
$$
\bar N_T := \expecover{\underline{Q}}{N_1(T+1)}
$$
denote the expected number of pulls of arm $1$ under $\mathbb{Q}$. In particular, $\Delta_Q\bar N_T=\expecover{\underline{Q}}{R(T)}=o(T)$.

Define the event
$$
A := \{N_2(T+1) > x_T/\Delta\},
$$
which coincides with $\{R(T)>x_T\}$ under $\underline{P}$; hence $\mathbb{P}_{\underline{P}}[R(T)>x_T]=\mathbb{P}(A)$.

Since arm 2 is optimal under $\underline{Q}$, we must have that $\mathbb{Q}(A)$ is large. To make this precise, observe that since $\limsup_T x_T/T<\Delta$, there exists $b\in(0,1]$ such that $x_T/\Delta\le(1-b)T$ for all sufficiently large $T$. On $A^c$ we then have $N_2(T+1)\le(1-b)T$, and since $N_1(T+1)+N_2(T+1)=T$, we also have $N_1(T+1)\ge bT$. By Markov's inequality,
\begin{equation}\label{eq:markovs_qac}
    \mathbb{Q}(A^c)\ \le\ \mathbb{Q}\big(N_1(T+1)\ge bT\big)\ \le\ \frac{\bar N_T}{bT}\ \longrightarrow\ 0,
\end{equation}
because $\bar N_T=o(T)$. Hence $\mathbb{Q}(A)\to1$, and in particular $\mathbb{Q}(A)>0$ for all sufficiently large $T$, which we assume from now on.

Next, we apply the change of measure machinery from \citet[Lemma 1]{kaufmann2016complexity}, which essentially corresponds to applying the chain rule for $\kl$ divergences followed by the data processing inequality. In particular, note that the policy is same under both instances and the instances only differ in arm 1, which implies
$$
\kl(\mathbb{Q},\mathbb{P})
=\bar N_T\,\kl\big(N(\mu_{1,Q},\sigma^2),N(\mu_1,\sigma^2)\big)
=\bar N_T\,\frac{(\mu_1-\mu_{1,Q})^2}{2\sigma^2}
=\bar N_T\,\frac{(\Delta+\Delta_Q)^2}{2\sigma^2},
$$
since $\mu_1-\mu_{1,Q}=(\mu_1-\mu_2)+(\mu_2-\mu_{1,Q})=\Delta+\Delta_Q$. By the data-processing inequality,
\begin{equation}\label{eq:binary_kl_dpi}
\kl(\mathbb{Q},\mathbb{P})\ \ge\ \mathrm{kl}\big(\mathbb{Q}(A),\mathbb{P}(A)\big),
\end{equation}
where $\mathrm{kl}(a,b)=a\log\frac ab+(1-a)\log\frac{1-a}{1-b}$, with the convention $\mathrm{kl}(a,0)=+\infty$ for $a>0$.

To analyze the right hand side of \Cref{eq:binary_kl_dpi}, we consider different cases for the value of $p:=\mathbb{P}(A)$:
\begin{itemize}
\item If $p\in(0,1)$, \Cref{lem:binary_kl} with $a=\mathbb{Q}(A)$ and $b=p$ yields
$$
\bar N_T\,\frac{(\Delta+\Delta_Q)^2}{2\sigma^2}\ \ge\ \mathbb{Q}(A)\log\frac1{p}-\log2.
$$
Rearranging and dividing by $\mathbb{Q}(A)>0$,
\begin{equation}\label{eq:lower_bound_on_pA}
\log\mathbb{P}(A)\ \ge\ -\frac{1}{\mathbb{Q}(A)}\left[\frac{(\Delta+\Delta_Q)^2}{2\sigma^2}\,\bar N_T+\log2\right].    
\end{equation}
\item $p=0$ is impossible: since $\mathbb{Q}(A)>0$, the right-hand side of \Cref{eq:binary_kl_dpi} would be $+\infty$, contradicting the finiteness of the left-hand side since $\bar N_T\le T$.
\item If $p=1$, then $\log p=0$ and \Cref{eq:lower_bound_on_pA} holds trivially, because the right-hand side is non-positive.

\end{itemize}
Since $\mathbb{Q}(A)\to1$ by \Cref{eq:markovs_qac}, we have $1/\mathbb{Q}(A)=1+o(1)$. As $\mathbb{P}(A)=\mathbb{P}_{\underline{P}}[R(T)>x_T]$, this gives
$$
\log\mathbb{P}_{\underline{P}}\big[R(T)>x_T\big]\ \ge\ -(1+o(1))\left[\frac{(\Delta+\Delta_Q)^2}{2\sigma^2}\,\expecover{\underline{Q}}{N_1(T+1)}+\log2\right].
$$
\end{proofof}

\subsubsection{Optimality on Gaussian instance}\label{app:optimality_tails}
 
\begin{proofof}{\Cref{cor:gaussian}. }
First, we will show an upper bound on the regret tail induced by \texttt{EL-F} on this instance. Following this, for the lower bound, we show that the algorithm satisfies the hypotheses of \Cref{thm:simchi_lb} and take $\Delta_Q\to 0$. 

We first observe that slow variation of $g$ implies that $g$ grows sublinearly, i.e., $g(T)=o(T)$. Indeed, by slow variation there is a $t_0$ such that $g(2t)\le\sqrt2\,g(t)$ for all $t\ge t_0$; iterating this together with the monotonicity of $g$ gives $g(T)\le g(t_0)\sqrt{2T/t_0}$ for all $T\ge t_0$.

\emph{Upper bound.} We follow the proof of \Cref{thm:tail} up to the point where concentration bounds are applied, and there use concentration inequalities that are tight for the Gaussian environment. Let $a=\Delta^2/(2\sigma^2)$, fix $\epsilon\in(0,\Delta)$ and $c'\in(0,c)$, and let $a_\epsilon=\epsilon^2/(2\sigma^2)$.

Define the deviation term
$$
k_T=\left\lceil\frac{cT-\Delta\lceil g(T)\rceil}{\Delta}\right\rceil .
$$
Since $g(T)=o(T)$ and $c'<c<\Delta$, for all sufficiently large $T$ we have $cT>\Delta\lceil g(T)\rceil$ and $c'T/\Delta\le k_T\le T$; in particular, $k_T\ge2$. Moreover, monotonicity gives $g(c'T/\Delta)\le g(k_T)\le g(T)$, so slow variation implies
\begin{equation}\label{eq:slow_variation_conseqeunce}
\lim_{T\to\infty}\frac{g(k_T)}{g(T)}=1.
\end{equation}
Since $\{R(T)>cT\}\subseteq\{R(T)\ge cT\}$, \Cref{eq:tail_bound_suboptimalpulls,eq:tail_union_bound} from \Cref{app:regret_tail_upper_bound} give
$$
\prob{R(T)>cT}\ \le\ \prob{N_2^U(T)\ge k_T}\ \le\ S_1(T)+S_2(T),
$$
where
$$
S_1(T)=\sum_{s\ge\lceil g(k_T)\rceil}\prob{\mathcal{E}_{1,s}},\qquad
S_2(T)=\sum_{n\ge k_T-1}\Big(\prob{\mathcal{E}_{2,n}}+\prob{\mathcal{E}_{3,n}}+\prob{\mathcal{E}_{4,n}}\Big),
$$
with the events of \Cref{eq:event_containment_one,eq:event_containment_two} defined for $i=2$ and the fixed $\epsilon$.

The term $S_1$ concerns the optimal arm. Here $\mathcal{E}_{1,s}=\{\hat m_{1,s}<\mu_1-\epsilon\}$ and $\hat m_{1,s}\sim N(\mu_1,\sigma^2/s)$, so the Gaussian tail bound gives $\prob{\mathcal{E}_{1,s}}\le e^{-a_\epsilon s}$. Summing the geometric series yields
$$
S_1(T)\ \le\ C_1\,e^{-a_\epsilon g(k_T)},\qquad C_1=\frac{1}{1-e^{-a_\epsilon}}.
$$
The term $S_2$ concerns the suboptimal arm and does not involve $g$. Since $N(\mu_2,\sigma^2)\in\mathcal{P}_{\textnormal{exp}}$, the constants $C_\eta,C_L,c_q>0$ are similarly well defined, and the same geometric bounds give
$$
S_2(T)\ \le\ \frac{e^{-C_\eta(k_T-1)}}{1-e^{-C_\eta}}+\frac{e^{-C_L(k_T-1)}}{1-e^{-C_L}}+\frac{2k_T\,e^{-c_q(k_T-1)}}{(1-e^{-c_q})^2}\ \le\ C_2\,k_T\,e^{-\rho(k_T-1)},
$$
where $\rho=\min\{C_\eta,C_L,c_q\}>0$ and $C_2=\frac{1}{1-e^{-C_\eta}}+\frac{1}{1-e^{-C_L}}+\frac{2}{(1-e^{-c_q})^2}<\infty$ do not depend on $T$. 

Again, using $c'T/\Delta\le k_T\le T$, we obtain
$$
\log S_2(T)\ \le\ \log C_2+\log T+\rho-\rho\,\frac{c'T}{\Delta},
$$
which decreases linearly in $T$, whereas $a_\epsilon g(k_T)\le a_\epsilon g(T)=o(T)$. Hence $S_2(T)\le e^{-a_\epsilon g(k_T)}$ for all sufficiently large $T$, and therefore
$$
\frac{\log\prob{R(T)>cT}}{g(T)}\ \le\ \frac{\log(C_1+1)}{g(T)}\ -\ a_\epsilon\,\frac{g(k_T)}{g(T)}
$$
for all sufficiently large $T$. As $T\to\infty$, the first term on the right-hand side tends to $0$ because $g(T)\to\infty$, and we also have $g(k_T)/g(T)\to1$ by \Cref{eq:slow_variation_conseqeunce}. Hence the $\limsup$ of the left-hand side is at most $-a_\epsilon$. Since $\epsilon\in(0,\Delta)$ was arbitrary, letting $\epsilon\uparrow\Delta$ yields
$$
\limsup_{T\to\infty}\frac{\log\prob{R(T)>cT}}{g(T)}\ \le\ -a.
$$

\emph{Lower bound.} Fix $\eta>0$ and let $\underline{Q}^\eta=(N(\mu_2-\eta,\sigma^2),N(\mu_2,\sigma^2))$. Let $$\bar N_T=\expecover{\underline{Q}^\eta}{N_1(T+1)}.$$ Since $\underline{Q}^\eta\in\mathcal{P}_{\exp}^2$, \Cref{thm:expected_regret} applies and gives
$$
\eta\,\bar N_T\ =\ \expecover{\underline{Q}^\eta}{R(T)}\ \le\ \eta\,g(T)+C_{\underline{Q}^\eta},
$$
where $C_{\underline{Q}^\eta}<\infty$ does not depend on $T$. Since $g(t) = o(t)$, this implies $\expecover{\underline{Q}^\eta}{R(T)}=o(T)$ and
$$
\bar N_T\ \le\ g(T)+C_{\underline{Q}^\eta}/\eta.
$$
Moreover, $x_T=cT$ satisfies $\limsup_T x_T/T=c<\Delta$. Hence \Cref{thm:simchi_lb} applies to \texttt{EL-F} and gives
$$
\log\prob{R(T)>cT}\ \ge\ -(1+o(1))\left[\frac{(\Delta+\eta)^2}{2\sigma^2}\Big(g(T)+\frac{C_{\underline{Q}^\eta}}{\eta}\Big)+\log 2\right].
$$
Dividing by $g(T)$ and using $g(T)\to\infty$, we obtain
$$
\liminf_{T\to\infty}\frac{\log\prob{R(T)>cT}}{g(T)}\ \ge\ -\frac{(\Delta+\eta)^2}{2\sigma^2}.
$$
Since $\eta>0$ was arbitrary, letting $\eta\downarrow0$ shows that the $\liminf$ is at least $-a$. Together with the upper bound, the limit exists and equals $-a=-\Delta^2/(2\sigma^2)$.
\end{proofof}

\subsection{Technical lemmas}\label{sec:extra_lemmas}
\begin{lemma}\label[lemma]{lem:exponential_inequality_empirical}
    For any parameter $c \in (0, 1)$ and any $x \ge -c$:
    $$
    \log(1 + x) \ge x - \frac{x^2}{2(1-c)}.
    $$ 
\end{lemma}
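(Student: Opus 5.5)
The inequality $\log(1+x)\ge x-\frac{x^2}{2(1-c)}$ for $c\in(0,1)$ and $x\ge -c$ is elementary, and I would prove it with a one-variable calculus argument. Set
\[
h(x):=\log(1+x)-x+\frac{x^2}{2(1-c)},
\]
defined on $[-c,\infty)$, where $1+x\ge 1-c>0$. Note that $h(0)=0$. The plan is to show that $h$ is non-increasing on $[-c,0]$ and non-decreasing on $[0,\infty)$. Then $0$ is a global minimizer on the domain, so $h\ge 0$ throughout.

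Differentiating gives
\[
h'(x)=\frac{1}{1+x}-1+\frac{x}{1-c}=x\Bigl(\frac{1}{1-c}-\frac{1}{1+x}\Bigr),
\]
since $\frac{1}{1+x}-1=-\frac{x}{1+x}$. The sign of $h'(x)$ is therefore the sign of $x$ times the sign of $(1+x)-(1-c)=x+c$; both denominators are positive.

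On $[-c,\infty)$ we have $x+c\ge 0$, so $\operatorname{sign}h'(x)=\operatorname{sign}(x)$. Hence $h'\le 0$ on $[-c,0]$ and $h'\ge 0$ on $[0,\infty)$, which gives the claimed monotonicity. By the mean value theorem, $h(x)\ge h(0)=0$ for all $x\ge -c$.

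I do not expect any real obstacle. The only point needing care is confirming that the factor $\frac{1}{1-c}-\frac{1}{1+x}$ is nonnegative exactly when $x\ge -c$; this is where the hypothesis $x\ge -c$ is used. The lower endpoint $x=-c$ is included, and there $h'$ vanishes, so it causes no difficulty.

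An alternative route, if desired, is Taylor's theorem with Lagrange remainder: $\log(1+x)=x-\frac{x^2}{2(1+\xi)^2}$ for some $\xi$ between $0$ and $x$. One then bounds $(1+\xi)^2\ge 1-c$: for $x\ge0$ this holds because $1+\xi\ge1$, and for $x\in[-c,0)$ because $(1+\xi)^2\ge(1-c)^2$. The second case would only give the denominator $2(1-c)^2$, which is weaker. So I would present the derivative argument, which yields exactly the stated constant.
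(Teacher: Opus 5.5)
Your proposal is correct and is essentially the paper's proof: the same auxiliary function $h(x)=\log(1+x)-x+\frac{x^2}{2(1-c)}$, the same factorization $h'(x)=x\bigl(\frac{1}{1-c}-\frac{1}{1+x}\bigr)$, and the same conclusion that $h$ is non-increasing on $[-c,0]$ and non-decreasing on $[0,\infty)$, so $h\ge h(0)=0$. Your side remark is also right: the Lagrange-remainder route only yields the weaker denominator $2(1-c)^2$ on $[-c,0)$, so the derivative argument is the one to present.
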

\begin{proof}
        Define $f(x) = \log(1+x) - x + \frac{x^2}{2(1-c)}$ on $[-c, \infty)$. Note that $f(0) = 0$. Taking the derivative:
        $$
        f'(x) = \frac{1}{1+x} - 1 + \frac{x}{1-c} = \frac{-x}{1+x} + \frac{x}{1-c} = x \left( \frac{1}{1-c} - \frac{1}{1+x} \right).$$
        For $x \in [-c, 0]$, $1+x \ge 1-c \implies \frac{1}{1+x} \le \frac{1}{1-c} \implies f'(x) \le 0$. Thus, $f(x)$ is decreasing on $[-c, 0]$, meaning $f(x) \ge f(0) = 0$.
        For $x \ge 0$, $1+x \ge 1 > 1-c \implies \frac{1}{1+x} < \frac{1}{1-c} \implies f'(x) \ge 0$. Thus, $f(x)$ is increasing on $[0, \infty)$, meaning $f(x) \ge f(0) = 0$. Hence, $f(x) \ge 0$ for all $x \ge -c$.
\end{proof}

\begin{lemma}[A binary $\kl$ inequality]\label[lemma]{lem:binary_kl}
For $a\in[0,1]$ and $b\in(0,1)$, $\mathrm{kl}(a,b):=a\log\frac ab+(1-a)\log\frac{1-a}{1-b}\ \ge\ a\log\frac1b-\log2$.
\end{lemma}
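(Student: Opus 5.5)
We need to prove $\mathrm{kl}(a,b)\ge a\log\frac1b-\log 2$ for $a\in[0,1]$ and $b\in(0,1)$. The plan is to split the binary divergence into an entropy part and a cross-entropy part. Expanding the definition, with the usual convention $0\log 0=0$,
\[
\mathrm{kl}(a,b)=\bigl(a\log a+(1-a)\log(1-a)\bigr)+a\log\tfrac1b+(1-a)\log\tfrac1{1-b}.
\]
The first bracket equals $-h(a)$, where $h(a)=-a\log a-(1-a)\log(1-a)$ is the binary entropy in nats.

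The lower bound then follows from two elementary facts, which I will verify in order:
\begin{itemize}
\item[(i)] Since $b\in(0,1)$, we have $1-b\in(0,1)$, so $\log\frac1{1-b}>0$ and the term $(1-a)\log\frac1{1-b}$ is nonnegative. We drop it.
\item[(ii)] $h(a)\le\log 2$ for all $a\in[0,1]$. This holds either by concavity of $h$, which attains its maximum at $a=1/2$, or by Jensen's inequality, $h(a)=\sum p\log\frac1p\le\log\sum p\cdot\frac1p\le\log 2$ over the at most two nonzero atoms.
\end{itemize}
Combining (i) and (ii) gives $\mathrm{kl}(a,b)\ge -\log 2+a\log\frac1b$, which is the claim.

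The only step requiring care is the endpoints $a\in\{0,1\}$, where $0\log0=0$ is used. There $h(a)=0\le\log2$, and the expansion above remains valid because $b\in(0,1)$ keeps both $\log\frac1b$ and $\log\frac1{1-b}$ finite. I do not expect any real obstacle here; the statement is a standard relaxation used in change-of-measure arguments such as \citet{kaufmann2016complexity}.
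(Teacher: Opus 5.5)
Your proposal is correct and follows essentially the same route as the paper: write $\mathrm{kl}(a,b)=-h(a)+a\log\frac1b+(1-a)\log\frac1{1-b}$, drop the nonnegative last term, and bound the binary entropy by $h(a)\le\log 2$. Your extra remarks on the endpoints $a\in\{0,1\}$ and the Jensen justification of $h(a)\le\log 2$ are fine additions but not needed beyond what the paper states.
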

\begin{proof}
$\mathrm{kl}(a,b)=-h(a)+a\log\frac1b+(1-a)\log\frac1{1-b}$, where $h(a)=-a\log a-(1-a)\log(1-a)\le\log2$ and $\log\frac1{1-b}\ge0$.
\end{proof}

\section{Additional Experiments}\label{app:experiments}

We supplement the results in \Cref{sec:experiments} with a detailed experimental protocol and comparisons of all algorithms on six bandit instances. We report mean regret, empirical regret tails, and the frequency of trapped runs to illustrate how forced exploration and the empirical-likelihood
decision radius influences performance.


\paragraph{Experimental setup.} For each algorithm and instance, we perform $1{,}024$ independent runs of $T=10{,}000$ rounds. We use common random numbers to compare algorithms: within each run, we generate a reward sequence for each arm and use the same sequences across algorithms. Thus, the $n$-th pull of an arm returns the same reward under every algorithm, regardless of
the round in which it is selected.

The six instances comprise the four described in \Cref{tab:instances} and the two Beta instances without the scaling factor $50$. The unscaled Beta instances have rewards in $[0,1]$, with means and gaps equal to one-fiftieth of those of their counterparts on $[0,50]$.

In our implementation, \texttt{EL-F} forces exploration whenever an arm has fewer than $\max\{g(t),2\}$ samples, ensuring that every arm is sampled at least twice before the UCB phase. For the schedule $g_1$, we replace $\log\log t$ by $0$ whenever $\log t\le1$. These conventions specify the handling of initial rounds mentioned in \Cref{sec:experiments}. Both \texttt{R-UCB} and \texttt{RDS} initialize by pulling each arm once.

We report the empirical mean, median, and tail probabilities of the pseudo-regret $R(T)=\sum_a\Delta_a N_a(T+1)$. We classify a run as \emph{trapped} if the optimal arm is selected fewer than
$T/2=5{,}000$ times over the horizon. This criterion provides a finite-horizon measure of poor allocation to the optimal
arm, rather than necessarily indicating permanent trapping.


\subsection{Additional Discussion}
\label{app:all_algo_experiment}

\Cref{fig:regret_curves_all,fig:regret_tail_all} compare the mean pseudo-regret and empirical regret tails of all algorithms on the six instances. The paired Beta instances allow us to examine the effect of multiplying rewards by $50$.

\paragraph{Effect of reward scaling.}
\label{para:effect-of-reward-scaling}
The decisions of \texttt{EL-F}, \texttt{RDS}, and \texttt{FE} are invariant under positive reward scaling. Consequently, their mean and median regret on the Beta instances supported on $[0,50]$ are exactly $50$ times those on the corresponding instances supported on $[0,1]$. 

In contrast, the exploration bonus of \texttt{R-UCB}, $\sqrt{\varphi(t)/N_i(t)}$, does not scale with the rewards. Its behaviour therefore depends on the size of this bonus relative to the reward gaps. For example, on the wide-gap Beta instance with rewards in $[0,1]$, \texttt{R-UCB} achieves mean and median regret of $31.8$ with $\varphi=g_1$. After scaling the rewards by $50$, its median regret increases to $75$, while its mean regret increases to $833$ (\Cref{tab:regret-summary}).

Thus, although \texttt{R-UCB} is scale-free
in the sense of requiring no prior knowledge
of reward-scale parameters, it is not
invariant under reward rescaling.

\subsection{Ablation of the empirical-likelihood radius}\label{app:ablation}

We examine how the radius $\crad$ affects the performance of \texttt{EL-F} under each exploration schedule $g\in\{g_1,g_2\}$. We compare $\crad=0$ and $\crad=3$ used in the main experiments, and two time-dependent radii, $\crad(t)=\log\log t+3$ and $\crad(t)=\log t+3$. We use the same six instances and experimental protocol as above, including the convention for $\log\log t$ in the initial rounds. \Cref{fig:crad_ablation_curves,fig:crad_ablation_tail}
show the empirical mean regret and regret tails,
while \Cref{tab:crad-ablation,tab:trapped-all}
report the mean and median regret and the
frequency of trapped runs, respectively. 

\paragraph{Benefit of local optimism.}
A positive radius substantially improves mean regret on the instances where the greedy variant has frequent poor runs. On the scaled Beta instance with small gaps, setting $\crad=3$ reduces mean regret from $3{,}001$ to $765$ under $g_1$, while increasing median regret from $167$ to $210$. On the exponential instance, the corresponding mean decreases from $24{,}255$ to $7{,}672$, while the median increases from $450$ to $660$. The larger exploration schedule $g_2$ complements this optimism. Thus, local optimism can reduce the contribution of poor runs to mean regret, at the cost of additional exploration in typical runs.

\paragraph{Effect of local optimism on trapped runs.}
To further examine the role of local optimism,
we report the frequency of trapped runs for
\texttt{EL-F} in \Cref{tab:trapped-all}.
Under $g_1$, increasing $\crad$ from $0$ to $3$
reduces this frequency from $8.1\%$ to $1.2\%$
on the small-gap Beta instance, and from
$14.6\%$ to $3.5\%$ on the exponential instance.
Under $g_2$, the corresponding frequencies
decrease from $0.9\%$ to $0\%$ and from
$2.0\%$ to $0.4\%$, respectively.

These results suggest that a positive optimism radius helps \texttt{EL-F} avoid prolonged suboptimal allocation in finite-horizon runs, complementing the protection provided by forced exploration.

\paragraph{Time-dependent radii.}
Increasing the radius with time gives further improvements on some instances, but the gains are not uniform. With $g_1$, the radius $\log\log t+3$ reduces mean regret on the scaled Beta instance with small gaps from $765$ to $586$. On the exponential instance, however, the mean improves only modestly. With $g_2$, the same radius reduces mean regret on the exponential instance from $2{,}607$ to $2{,}389$, but slightly increases it on the Beta instance with small gaps. The more aggressive radius $\log t+3$ generally raises median regret and does not consistently improve the mean relative to $\log\log t+3$. These results support the use of a small positive constant radius as a simple choice, while showing that a growing radius can help on particular instances. Growing radii also require recomputing every arm's index in each UCB round, whereas a constant radius requires updating only the index of the pulled arm.

\begin{figure}[htbp]
    \centering
    \includegraphics[width=\textwidth]{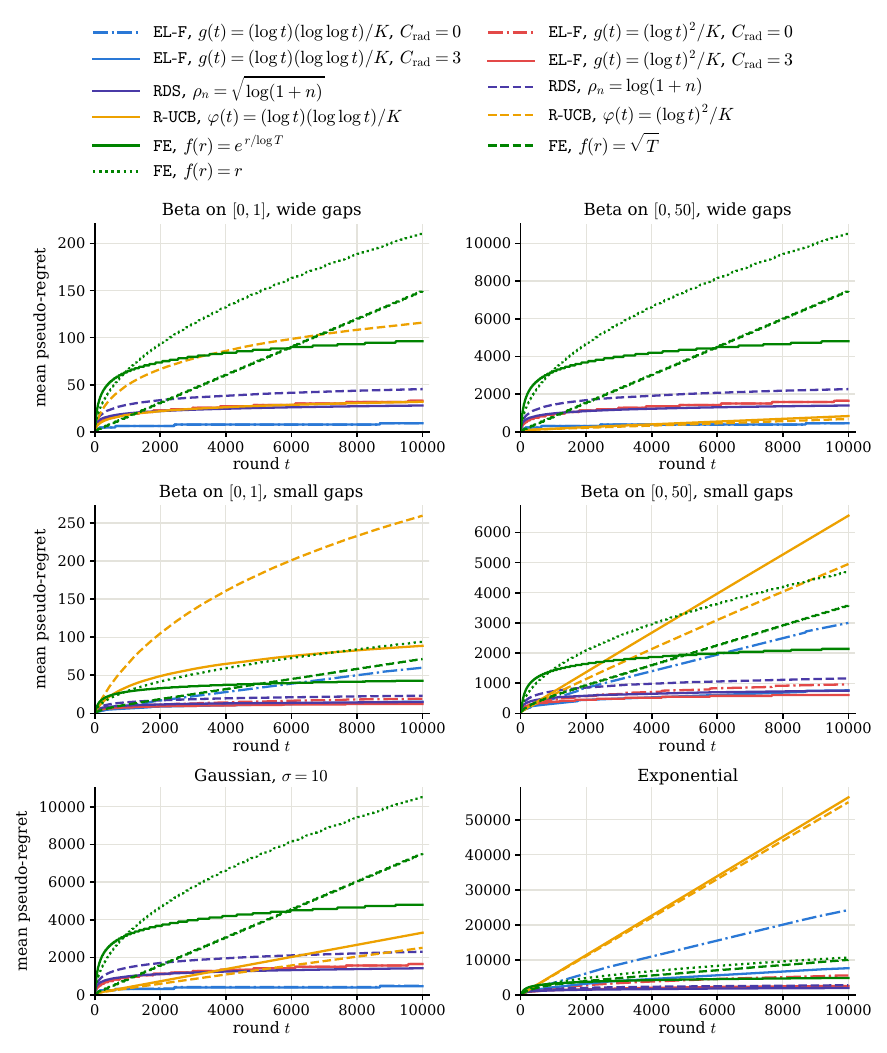}
    \caption{Mean pseudo-regret over $1{,}024$ runs on all six instances, for every algorithm.}
    \label{fig:regret_curves_all}
\end{figure}

\begin{figure}[htbp]
    \centering
    \includegraphics[width=\textwidth]{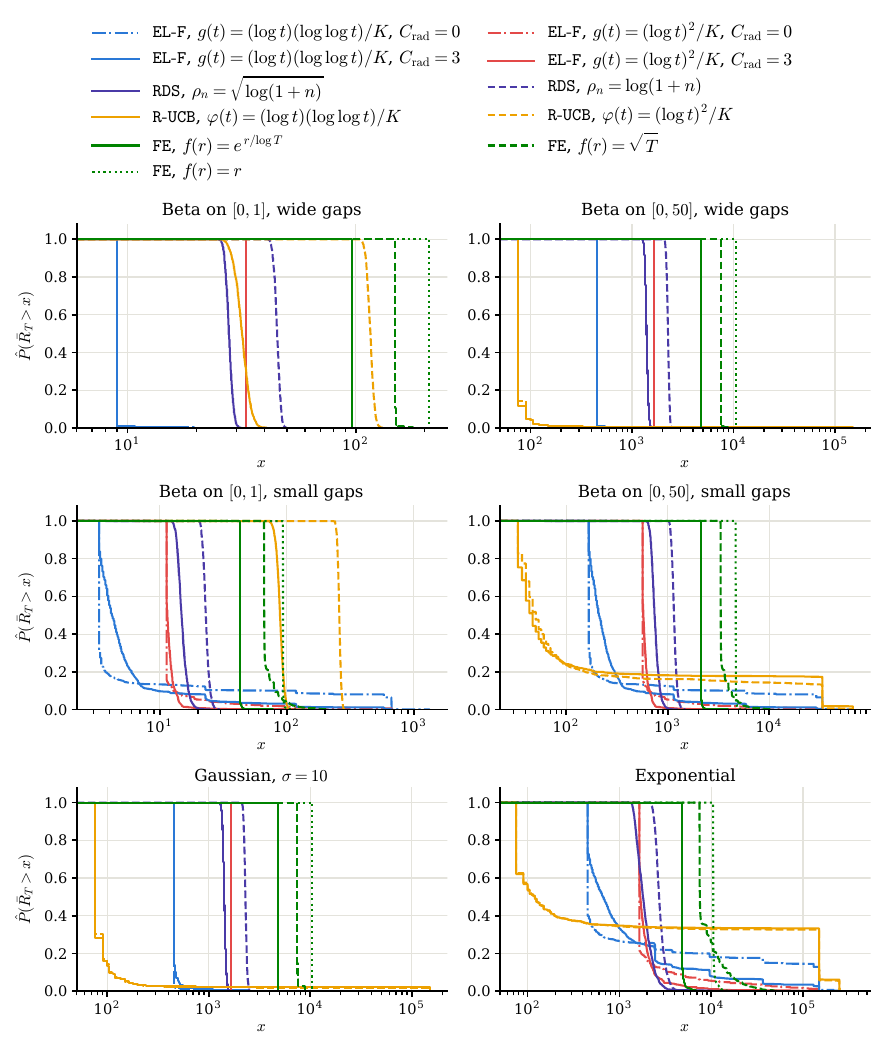}
    \caption{Empirical tail $\hat P(R_T > x)$ of the final pseudo-regret over $1{,}024$ runs on all six instances, for all the algorithms ($x$ on a logarithmic scale).}
    \label{fig:regret_tail_all}
\end{figure}

\begin{figure}[htbp]
    \centering
    \includegraphics[width=\textwidth]{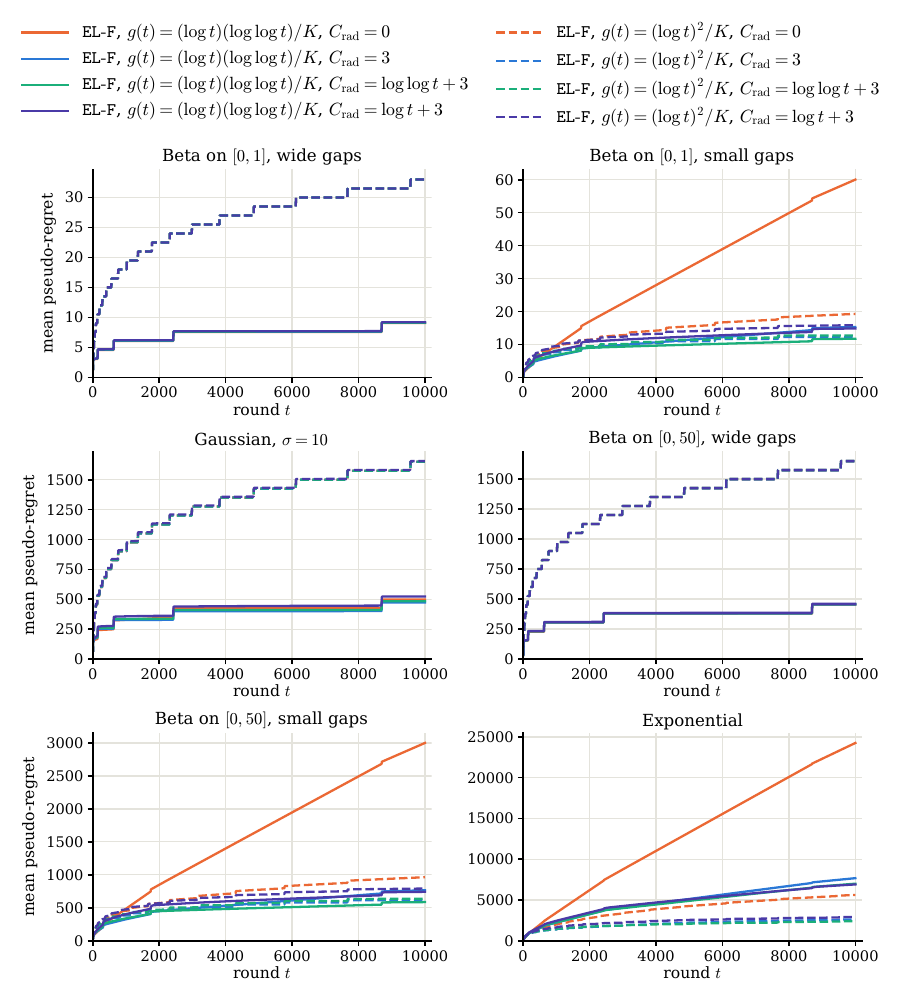}
    \caption{Mean pseudo-regret for the radius ablation of \texttt{EL-F} over $1{,}024$ runs on all six instances, under exploration schedules $g_1$ and $g_2$.}
    \label{fig:crad_ablation_curves}
\end{figure}

\begin{figure}[htbp]
    \centering
    \includegraphics[width=\textwidth]{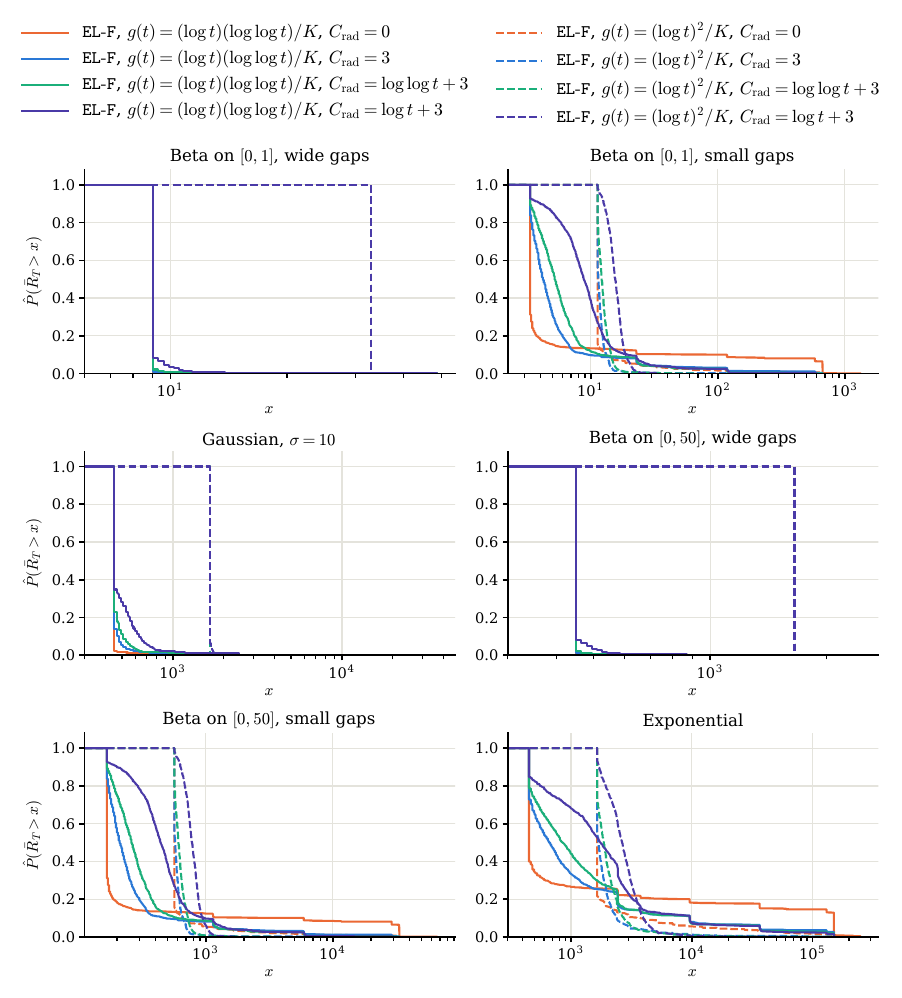}
    \caption{Empirical tails of final pseudo-regret at $T=10{,}000$ for the radius variants in \Cref{fig:crad_ablation_curves}, over $1{,}024$ runs ($x$ on a logarithmic scale).}
    \label{fig:crad_ablation_tail}
\end{figure}

\begin{table}[htbp]
\centering
\caption{Mean and median of the pseudo-regret
$R(T) = \sum_a \Delta_a N_a(T+1)$ at $T = 10{,}000$
over 1{,}024 independent runs, on all six instances,
where $g_1(t) = \frac{1}{K}\log t\,\log\log t$ and
$g_2(t) = \frac{1}{K}(\log t)^2$, with $K$ the
number of arms.}
\label{tab:crad-ablation}
\resizebox{\ifdim\width>\textwidth\textwidth\else\width\fi}{!}{%
\begin{tabular}{@{}l*{12}{r}@{}}
\toprule
Policy
& \multicolumn{4}{c}{Beta (wide gaps)}
& \multicolumn{4}{c}{Beta (small gaps)}
& \multicolumn{2}{c}{Gaussian}
& \multicolumn{2}{c}{Exponential} \\
\cmidrule(lr){2-5}
\cmidrule(lr){6-9}
\cmidrule(lr){10-11}
\cmidrule(lr){12-13}
& \multicolumn{2}{c}{$[0,1]$}
& \multicolumn{2}{c}{$[0,50]$}
& \multicolumn{2}{c}{$[0,1]$}
& \multicolumn{2}{c}{$[0,50]$}
& \multicolumn{2}{c}{}
& \multicolumn{2}{c}{} \\
\cmidrule(lr){2-3}
\cmidrule(lr){4-5}
\cmidrule(lr){6-7}
\cmidrule(lr){8-9}
& mean & med. & mean & med.
& mean & med. & mean & med.
& mean & med. & mean & med. \\
\midrule
\texttt{EL-F}, $g_1$, $C_{\mathrm{rad}} = 0$
& \textbf{9.06} & \textbf{9.00}
& \textbf{453} & \textbf{450}
& 60.0 & \textbf{3.33}
& 3{,}001 & \textbf{167}
& 498 & \textbf{450}
& 24{,}255 & \textbf{450} \\

\texttt{EL-F}, $g_1$, $C_{\mathrm{rad}} = 3$
& 9.10 & \textbf{9.00}
& 455 & \textbf{450}
& 15.3 & 4.20
& 765 & 210
& \textbf{474} & \textbf{450}
& 7{,}672 & 660 \\

\texttt{EL-F}, $g_1$, $C_{\mathrm{rad}} = \log\log t + 3$
& 9.13 & \textbf{9.00}
& 456 & \textbf{450}
& \textbf{11.7} & 5.20
& \textbf{586} & 260
& 484 & \textbf{450}
& 6{,}994 & 855 \\

\texttt{EL-F}, $g_1$, $C_{\mathrm{rad}} = \log t + 3$
& 9.20 & \textbf{9.00}
& 460 & \textbf{450}
& 14.8 & 8.83
& 742 & 442
& 523 & \textbf{450}
& 6{,}906 & 1{,}755 \\

\midrule
\texttt{EL-F}, $g_2$, $C_{\mathrm{rad}} = 0$
& 33.0 & 33.0
& 1{,}650 & 1{,}650
& 19.3 & 11.3
& 964 & 567
& 1{,}650 & 1{,}650
& 5{,}611 & 1{,}650 \\

\texttt{EL-F}, $g_2$, $C_{\mathrm{rad}} = 3$
& 33.0 & 33.0
& 1{,}650 & 1{,}650
& 12.3 & 11.5
& 614 & 573
& 1{,}650 & 1{,}650
& 2{,}607 & 1{,}695 \\

\texttt{EL-F}, $g_2$, $C_{\mathrm{rad}} = \log\log t + 3$
& 33.0 & 33.0
& 1{,}650 & 1{,}650
& 12.7 & 12.2
& 636 & 610
& 1{,}651 & 1{,}650
& \textbf{2{,}389} & 1{,}875 \\

\texttt{EL-F}, $g_2$, $C_{\mathrm{rad}} = \log t + 3$
& 33.0 & 33.0
& 1{,}650 & 1{,}650
& 15.8 & 15.6
& 791 & 780
& 1{,}656 & 1{,}650
& 2{,}911 & 2{,}655 \\

\bottomrule
\end{tabular}%
}
\end{table}

\begin{table}[htbp]
\centering

\caption{Percentage of runs satisfying the trapped-run criterion for \texttt{EL-F} at $T=10{,}000$, over $1{,}024$ independent runs. A run is classified as trapped if the optimal arm is selected fewer than $T/2=5{,}000$ times.}

\label{tab:trapped-all}
\begin{tabular}{@{}l*{6}{r}@{}}
\toprule
Policy & \multicolumn{2}{c}{Beta (wide gaps)} & \multicolumn{2}{c}{Beta (small gaps)} & \multicolumn{1}{c}{Gaussian} & \multicolumn{1}{c}{Exponential} \\
\cmidrule(lr){2-3} \cmidrule(lr){4-5}
 & $[0,1]$ & $[0,50]$ & $[0,1]$ & $[0,50]$ & & \\
\midrule
\texttt{EL-F}, $g_1$, $C_{\mathrm{rad}} = 0$ & 0.0 & 0.0 & 8.1 & 8.1 & 0.0 & 14.6 \\
\texttt{EL-F}, $g_1$, $C_{\mathrm{rad}} = 3$ & 0.0 & 0.0 & 1.2 & 1.2 & 0.0 & 3.5 \\
\texttt{EL-F}, $g_2$, $C_{\mathrm{rad}} = 0$ & 0.0 & 0.0 & 0.9 & 0.9 & 0.0 & 2.0 \\
\texttt{EL-F}, $g_2$, $C_{\mathrm{rad}} = 3$ & 0.0 & 0.0 & 0.0 & 0.0 & 0.0 & 0.4 \\
\bottomrule
\end{tabular}
\end{table}

\end{document}